\renewcommand*\citet[1]{\cite{#1}}
\renewcommand*\citep[1]{\cite{#1}}
\newtheorem{theorem}{Theorem}
\newtheorem{definition}{Definition}
\newtheorem{lemma}{Lemma}
\newtheorem{proposition}{Proposition}
\newtheorem{corollary}{Corollary}
\newcommand{\eqn}[1]{(\ref{eqn:#1})}
\newcommand{\eq}[1]{\eqref{eq:#1}}
\newcommand{\lem}[1]{Lemma~\ref{lem:#1}}
\newcommand{\rev}[1]{{\color{black}#1}}
\def\>{\rangle}
\def\<{\langle}
\newcommand{\ignore}[1]{}
\newcommand{\specialcell}[2][c]{%
  \begin{tabular}[#1]{@{}c@{}}#2\end{tabular}}
\newcommand{\C}{\mathbb{C}}
\def\Tr{\operatorname{Tr}}\def\:{\hbox{\bf:}}
\let\oldnl\nl
\newcommand{\nonl}{\renewcommand{\nl}{\let\nl\oldnl}}
\begin{document}

\title{Adaptive Online Learning of Quantum States}

\author{Xinyi Chen}
\affiliation{Department of Computer Science, Princeton University, NJ 08540, USA}
\affiliation{Google DeepMind Princeton, NJ 08542, USA}
\email{xinyic@princeton.edu}
\author{Elad Hazan}
\affiliation{Department of Computer Science, Princeton University, NJ 08540, USA}
\affiliation{Google DeepMind Princeton, NJ 08542, USA}
\email{ehazan@princeton.edu}
\author{Tongyang Li}
\affiliation{Center on Frontiers of Computing Studies, Peking University, 100871 Beijing, China}
\affiliation{School of Computer Science, Peking University, 100871 Beijing, China}
\email{tongyangli@pku.edu.cn}
\author{Zhou Lu}
\affiliation{Department of Computer Science, Princeton University, NJ 08540, USA}
\affiliation{Google DeepMind Princeton, NJ 08542, USA}
\email{zhoul@princeton.edu}
\author{Xinzhao Wang}
\affiliation{Center on Frontiers of Computing Studies, Peking University, 100871 Beijing, China}
\affiliation{School of Computer Science, Peking University, 100871 Beijing, China}
\email{wangxz@stu.pku.edu.cn}
\author{Rui Yang}
\affiliation{Center on Frontiers of Computing Studies, Peking University, 100871 Beijing, China}
\affiliation{School of Computer Science, Peking University, 100871 Beijing, China}
\email{ypyangrui@pku.edu.cn}

\maketitle

\begin{abstract}
The problem of efficient quantum state learning, also called shadow tomography, aims to comprehend an unknown $d$-dimensional quantum state through \rev{POVMs}. Yet, these states are rarely static; they evolve due to factors such as measurements, environmental noise, or inherent Hamiltonian state transitions. This paper leverages techniques from adaptive online learning to keep pace with such state changes.

The key metrics considered for learning in these mutable environments are enhanced notions of regret, specifically adaptive and dynamic regret. We present adaptive and dynamic regret bounds for online shadow tomography, which are polynomial in the number of qubits and sublinear in the number of measurements. To support our theoretical findings, we include numerical experiments that validate our proposed models. 
\end{abstract}

\section{Introduction}

As opposed to their classical counterparts, quantum states over $n$ qubits requires $2^n$ ``amplitudes'' for their description. Therefore, recovering the density matrix of a quantum system with $n$ qubits from measurements becomes an impractical task for even moderate values of $n$. The setting of shadow tomography~\citep{aaronson2018shadow} attempts to circumvent this issue: rather than recovering the entire state matrix, the goal is to accurately reconstruct the ``shadow'' that the density matrix casts on a set of measurements that are known in advance.

More precisely, let $\rho$ be an $n$-qubit quantum state: that is, a $2^n \times 2^n$ Hermitian positive semidefinite matrix with $\Tr (\rho) = 1$. We can measure $\rho$ by a two-outcome positive operator-valued measure (POVM) $E$, which can be represented by a $2^n\times 2^n$ Hermitian matrix with eigenvalues in $[0,1]$. Such a measurement $E$ accepts $\rho$ (returns 1) with probability $\Tr(E\rho)$ and rejects $\rho$ (returns 0) with probability $1-\Tr(E\rho)$. Given a sequence of POVMs $E_{1},\ldots,E_{m}$, shadow tomography uses copies of $\rho$ to estimate $\Tr(E_{i}\rho)$ within $\epsilon$ for all $i\in[m]$ with probability at least $2/3$. 

The state-of-the-art result~\citep{buadescu2021improved} can accomplish this task with $\tilde{O}(n(\log m)^{2} /\epsilon^{4})$ copies of $\rho$, exponentially improving upon full state tomography in terms of $n$. \rev{However, in the real world a POVM $E$ can be chosen based on previous actions from the observer (learner) and is not known in advance.}
Moreover, shadow tomography may be unfeasible on near-term quantum computers, as current methods~\citet{aaronson2018shadow,brandao2017exponential,aaronson2019gentle,buadescu2021improved} all require a joint measurement on multiple copies of $\rho$, which needs to be implemented on larger quantum systems limited by the scale of existing quantum devices.

% Furthermore, shadow tomography can be intractable on near-term quantum computers. Current methods for shadow tomography such as the quantum OR bound in~\citet{aaronson2018shadow,brandao2017exponential}, quantum private multiplicative weight in~\rev{\citet{aaronson2019gentle}}, and quantum threshold search in~\citet{buadescu2021improved}, all require a joint measurement on various copies of $\rho$, which needs to be implemented on a larger quantum system and is limited by the scale of current quantum devices. 

In light of both concerns it is natural to consider \emph{online learning of quantum states}. In this problem, the measurements appear sequentially and the goal is to select a sequence of states to minimize regret compared to the optimal quantum state in retrospect. \rev{This problem naturally fits into the \emph{online convex optimization} framework, in which a player makes a decision at each step, and then is revealed a convex loss function and suffers the loss of its decision. The player makes decisions to minimize the total loss. However, this problem is too difficult since the loss functions can be arbitrary, so we only expect to minimize the loss of the player minus the minimum loss of making the same decision at every step, which is called the \emph{regret} of the player. In the online learning of quantum states setting, the player outputs a quantum state $\varphi_t$ at step $t$ and suffers a loss measuring the difference between $\varphi_t$ and the true state under the chosen POVM.} The sequential nature of the setting gives rise to algorithms that treat each measurement separately, thereby eliminating the need for joint measurements\footnote{\rev{In some shadow tomography algorithms using similar online learning schemes \citet{aaronson2018shadow,buadescu2021improved}, they do not need joint measurements when updating the state but do need them when choosing the POVMs}}. The existing algorithm of \cite{aaronson2018online}, along with subsequent works~\citep{yang2020revisiting,chen2020more,lumbreras2021multiarm, quantum_portfolio}, yields a sequence of states with regret $O(\sqrt{Tn})$ for $T$ iterations. \rev{In some cases, these regret bounds can be formalized as \emph{mistake bounds} which count the number of steps that the player suffers a loss larger than a given threshold.}

%The algorithm of \cite{aaronson2018online}  outputs a sequence of states with regret $O(\sqrt{Tn})$ for $T$ iterations. This can also be formalized as a mistake bound, such that the hypothesis state $\omega_{i}$ does not satisfy $|\Tr(E_{i}\omega_{i})-\Tr(E_{i}\rho)|\leq\epsilon$ for at most $O(n/\epsilon^{2})$ iterations. Subsequent works~\citep{yang2020revisiting,chen2020more,lumbreras2021multiarm, quantum_portfolio} proposed a few different algorithms for online learning of quantum states.

Nevertheless, existing literature on shadow tomography does not cover an important factor in current quantum computers: fluctuation of quantum states. In practical scenarios, we often cannot calibrate the quantum state constantly, and shadow tomography algorithms requiring multiple identical copies of $\rho$ may not be viable.
%Existing shadow tomography algorithms need multiple identical copies of $\rho$, but in many cases it cannot be achieved due to imperfect calibration, and in practice we cannot calibrate the quantum state all the time. 
For instance, on the IBM quantum computing platform, a quantum circuit with measurements will be executed hundreds or thousands of shots in a row without calibrations in such consecutive experiments. 

In real-world experiments, a quantum state $\rho$ is typically obtained by evolving a control Hamiltonian $H$ for some time $\tau$ starting from an initial state $\rho_{0}$, i.e., $\rho=e^{iH\tau}\rho_{0}e^{-iH\tau}$. If the device is not calibrated for a while, the control Hamiltonian implemented is a noisy one $H+\delta H$, and the actual quantum state we prepare is a \rev{perturbed state $\rho'=e^{-i(H+\delta H)\tau}\rho_0 e^{i(H+\delta H)\tau}$}. 
More discussions about noises in quantum states can be found in~\citet{ball2016effect,greenbaum2017modeling,kueng2016comparing,wallman2015bounding,wallman2015estimating}.

In light of these fluctuations, recent study on state tomography extends to temporal quantum tomography~\citep{tran2021temporal} with changing quantum states. Therefore, a fundamental question from our perspective of machine learning is learning changing quantum states, which naturally fits into the framework of \emph{adaptive online learning}. Such quantum algorithms are crucial for near-term noisy, intermediate-scale quantum computers (NISQ), since the design of noise-resilient algorithms may extend the computational power of NISQ devices~\cite{Preskill2018NISQ}.

\paragraph{Adaptive online learning.} The motivation above suggests a learning problem that assumes the quantum state prepared at time step $t = 1,2,\ldots, T$ is $\rho_t$. Due to imperfect calibration, $\rho_t$ at different time $t$ is different, and our goal is to learn $\rho_t$ in an online fashion. The problem of online learning of a changing concept has received significant attention in the machine learning literature, and we adapt several of these techniques to the quantum setting, which involves high-dimensional spaces over the complex numbers.

The first metric we consider is the "dynamic regret" introduced by \citet{zinkevich2003online}, which measures the difference between the learner's loss and that of a changing comparator. The dynamic regret bounds are usually characterized by how much the optimal comparator changes over time, known as the \rev{"path length."} 
% \ry{In British English the original version is correct. The reviewer suggest this in American English tradition.}

Next, we consider a more sophisticated metric, the \rev{"adaptive regret,"} as introduced by \citet{hazan2009efficient}. This metric measures the maximum of the regret over all intervals, essentially taking into account a changing comparator. Many extensions and generalizations of the original technique have been presented in works such as \citet{daniely2015strongly,jun2017improved,cutkosky2020parameter}. Minimizing adaptive regret requires a combination of expert algorithms and continuous online learning methods, which we also adopt for the quantum setting.

\paragraph{Contributions.} 
In this paper, we systematically study adaptive online learning of quantum states. In quantum tomography, we might have prior knowledge about $\rho_t$, but in online learning, it can be adversarial. As such, we expect regret guarantees that extend beyond just comparing with the best fixed $\varphi \in C_n$. We study the following questions summarized in Table~\ref{tab:main}:
\begin{itemize}[leftmargin=*]

\item \textbf{Dynamic regret:} We consider minimizing regret under the assumption that the comparator $\varphi$ changes slowly:
\begin{small}
\begin{align}
\text{Regret}_{T,\mathcal{P}}^{\mathcal{A}}=\sum_{t=1}^T \ell_{t}\left(\Tr(E_t x_{t}^{\mathcal{A}})\right)-\min_{\varphi_{t}\in C_n} \sum_{t=1}^T \ell_t\left(\Tr(E_t\varphi_{t})\right), \text{ where } \mathcal{P}=\sum_{t=1}^{T-1} \|\varphi_t-\varphi_{t+1}\|_\rev{1}. \label{eq:dynamic-regret}
\end{align}
\end{small}

\noindent
Here $\mathcal{P}$ is bounded and defined as the path length, the functions $\ell_{t}$ are $L$-Lipschitz for all $t\in[T]$, and $\|\cdot\|_\rev{1}$ is the $\ell_{1}$ norm of the singular values of a matrix known as the nuclear norm. We propose an algorithm based on online mirror descent (OMD) that achieves an $\tilde{O}(\sqrt{nT \mathcal{P}})$ regret bound. Although the analysis shares the same spirit as the standard OMD proof, tackling optimization in the complex domain requires new tools from several areas. Specifically, we use Klein's inequality from matrix analysis to study the dual update in OMD, and the Fannes–Audenaert inequality \cite{Audenaert_2007} from quantum information theory to obtain the dynamic regret bounds. Compared to \citet{aaronson2018online}, our analysis is more general due to the introduction of these tools. 

As an application, under the assumption that $\varphi_t$ evolves according to a family of dynamical models, we can also prove $\tilde{O}(\sqrt{nT \mathcal{P}'})$ dynamic regret bounds where $\mathcal{P}'$ is a variant of path length defined in Eq.~\eq{dynamic-regret-family}. This covers two natural scenarios: either the dynamics comes from a family of $M$ known models (which incurs a $\log M$ overhead), or we know that the dynamical model is an $O(1)$-local quantum channel, a natural assumption for current quantum computers fabricated by 2D chips~\citep{arute2019supremacy,IBM-Prague,zhu2022quantum} where a quantum gate can only influence a constant number of qubits.

\item \textbf{Adaptive regret:} We also consider minimizing the strongly adaptive regret \citep{daniely2015strongly} of an algorithm $\mathcal{A}$ at time $T$ for length $\tau$:
\begin{align}
    \text{SA-Regret}_T^{\mathcal{A}}(\tau)  = \max_{I\subseteq [T], |I| = \tau}\bigg(\sum_{t \in I} \ell_{t}\left(\Tr(E_t x_{t}^{\mathcal{A}})\right)
    -\min_{\varphi \in C_n} \sum_{t \in I} \ell_t\left(\Tr(E_t\varphi)\right)\bigg) 
    \label{eq:SA-regret}
\end{align}

where $x_t^{\mathcal{A}}$ is the output of algorithm $\mathcal{A}$ at time $t$, and $\ell_t$ is a real loss function that is revealed to the learner, for example, $\ell_t(z) = |z-b_t|$ and $\ell_t(z) = (z-b_t)^2$ for some $b_t\in [0,1]$.\footnote{Here we make no assumptions about how $\ell_t$ is revealed. In application, for example \cite{aaronson2018shadow}, they first obtain $b_t$ by repeated measurements and then compute $\ell_t$ classically.} We show that combining algorithms from \citet{aaronson2018online}, \citet{jun2017improved} gives an $O(\sqrt{n \tau \log(T)})$ bound for adaptive regret. We then derive an $O(\sqrt{knT\log(T)})$ regret bound when the comparator $\varphi$ is allowed to shift $k$ times. 

\item \textbf{Mistake bound:} Given a sequence of two-outcome measurements $E_1,E_2,\ldots, E_T$ where each $E_t$ is followed afterward by a value $b_t$ such that $|\Tr(E_t\rho_t)-b_t|\leq\epsilon/3$. Assuming the ground truth $\rho_t$ has $k$-shift or bounded path length $\mathcal{P}$, we build upon the regret bounds to derive upper bounds of the number of mistakes made by an algorithm whose output is a sequence of $x_t^{\mathcal{A}}\in [0,1]$, where $|x_t^{\mathcal{A}}-\Tr(E_t\rho_t)| > \epsilon$ is considered a mistake. For dynamic regret and adaptive regret settings, we give mistake bounds in Corollary~\ref{cor:path-length-mistake} and Corollary~\ref{cor:k-shift-mistake}, respectively.

\end{itemize}

\begin{table}[htbp]
\centering
\resizebox{\columnwidth}{!}{
\begin{tabular}{cccc}
\hline
Regret & Reference & Setting & O-Bound \\ \hline\hline
Dynamic & Theorem \ref{thm:dynamic} & Path length & $ L\sqrt{T(n+\log (T)) \mathcal{P}}$ \\ \hline
Dynamic & Corollary \ref{cor:dynamic-model-family} &  \specialcell{Family of $M$ dynamical models} & \specialcell{$L\sqrt{T(n+\log(T))\mathcal{P}'}$+$\sqrt{T\log(M\log(T))}$} \\ \hline
Dynamic & Corollary \ref{cor:dynamic-model-local-channel} & \specialcell{$l$-local quantum channels} & \specialcell{$L\sqrt{T(n+\log (T))\mathcal{P}'}$ $+2^{6l}\sqrt{T}$} \\ \hline
\hline
Adaptive & \rev{Lemma \ref{lem:SA-Regret}} & \specialcell{Arbitrary interval with length $\tau$} & $L \sqrt{n \tau \log (T)}$ \\ \hline
Adaptive & \rev{Theorem \ref{thm:k_shift regret}} & $k$-shift & $L\sqrt{knT \log (T)}$ \\ \hline
\end{tabular}
}
\caption{Adaptive and dynamic regret bounds of online learning of quantum states. Here $n$ is the number of qubits, $L$ is the Lipschitzness of loss functions, $T$ is the total number of iterations, $\mathcal{P}$ is the path length defined in Eq.~\eq{dynamic-regret}, $\mathcal{P}'$ in Corollary \ref{cor:dynamic-model-family} is a variant of path length defined in Eq.~\eq{dynamic-regret-family} and $\mathcal{P}'$ in Corollary \ref{cor:dynamic-model-local-channel} is defined in Eq.~\eq{l_local_path_length}. $l$-local quantum channels are explained in Section~\ref{sec:omd_dynamic_regret}.}
\vspace{-2mm}
\label{tab:main}
\end{table}

We also note that our results can be extended from two-outcome POVMs to $K$-outcome POVMs for $K\geq 2$, inspired by a recent work by~\citet{gong2022learning}. A $K$-outcome measurements $\mathcal{M}$ can be described by $K$ $2^n\times 2^n$ Hermitian matrices $E_1,\ldots,E_K$ such that $0\preceq E_j\preceq I,$ $\sum_{j=1}^K E_j = I$, and for any quantum state $\rho$, the measurement $\mathcal{M}$ outputs $j$ with probability $\mathrm{Tr}(E_j\rho)$. In our setting of online learning of quantum states, a sequence of measurements $\mathcal{M}_t$ is given and each $\mathcal{M}_t$ can be described by a set of Hermitian matrices $\{E_{t,1},E_{t,2},\ldots,E_{t,K}\}$. In Appendix~\ref{append:k-outcome}, we show that our algorithm has the same regret upper bound on it for any $K$ if we use the total variational norm as the loss function.

We also conduct numerical experiments for our algorithms about their dynamic and adaptive regret bounds on the $k$-shift and the path length settings. The results demonstrate that our algorithms improve upon non-adaptive algorithms when the quantum state is changing, confirming our theoretical results.

\section{Preliminaries}
In online convex optimization (refer to \cite{hazan2016introduction} for a comprehensive treatment), a player engages in a $T$-round game against an adversarial environment. In each round $t$, the player selects $x_t\in \mathcal{K}$, then the environment unveils the convex loss function $\ell_t$, and the player incurs the loss $\ell_t(x_t)$. The player's objective is to minimize the (static) regret:
\begin{align}
\text{Regret}=\sum_{t=1}^T \ell_t(x_t)-\min_{\varphi \in \mathcal{K}} \sum_{t=1}^T \ell_t(\varphi).
\end{align}
We can model the quantum tomography problem within the framework of online convex optimization: the convex domain $\mathcal{K}$ is taken to be $C_n$, the set of all trace-1 positive semidefinite (PSD) complex matrices of dimension $2^n$:
$$C_n=\{M\in \mathbb{C}^{2^n\times 2^n}, M=M^{\dagger}, M\succeq 0, \Tr(M)=1\}.$$
The loss function can be $\ell_t(x)=|\Tr(E_t x_t)-\Tr(E_t \rho_t)|$ or $\ell_t(x)=(\Tr(E_t x_t)-\Tr(E_t \rho_t))^2$ for example, measuring the quality of approximating the ground-truth $\rho_t$. 

Another goal is to bound the maximum number of mistakes we make during the learning process. We consider the case where $\forall t\in[T]$, $b_t$ in the revealed loss function $\ell_t$ satisfies $\ |b_t-\Tr(E_t\rho_t)| \le \frac{1}{3}\epsilon$. We consider it a mistake if the algorithm outputs an $x_t$ such that $|\Tr(E_t x_t)-\Tr(E_t\rho_t)| > \epsilon$.

\paragraph{Notations.} Let $\|\cdot\|_\rev{1}$ denote the nuclear norm, i.e., the $\ell_{1}$ norm of the singular values of a matrix. Let $x\bullet y = \Tr(x^\dagger y)$ denote the inner trace product between $x$ and $y$. Denote $\lambda_i(X)$ to be the $i^{\text{th}}$ largest eigenvalue of a Hermitian matrix $X$. $X\succeq 0$ means that the matrix $X$ is positive semidefinite. For any positive integer $n$, denote $[n]:=\{0,1,\ldots,n-1\}$. Denote $\otimes$ to be the tensor product, i.e., for matrices $A=(A_{i_{1}j_{1}})_{i_{1}\in[m_{1}],j_{1}\in[n_{1}]}\in\C^{m_{1}}\times\C^{n_{1}}$ and $B=(B_{i_{2}j_{2}})_{i_{2}\in[m_{2}],j_{2}\in[n_{2}]}\in\C^{m_{2}}\times\C^{n_{2}}$, we have $A\otimes B=(A_{i_{1}j_{1}}\cdot B_{i_{2}j_{2}})_{i\in[m_{1}m_{2}], j\in[n_{1}n_{2}]}\in\C^{m_{1}m_{2}}\times\C^{n_{1}n_{2}}$ where $i=i_{1}m_{2}+i_{2}$, $j=j_{1}n_{2}+j_{2}$. Throughout the paper, $\tilde{O}$ omits poly-logarithmic terms. 
 
\section{Minimizing Dynamic Regret}
\label{sec:omd_dynamic_regret}
In this section, we consider the case that $\rho_t$ may change over time. In particular, we do not assume the number of times that $\rho_t$ changes, but instead consider the total change over time measured by the path length. To this end, we study dynamic regret, where the path length $\mathcal{P}$ of the comparator in nuclear norm is restricted:
\begin{align}
\mathcal{P}=\sum_{t=1}^{T-1} \|\varphi_t-\varphi_{t+1}\|_\rev{1}.
\end{align}
Note that the set of comparators $\{\varphi_t\}$ do not have to be the ground truth states $\{\rho_t\}$; they can be any states satisfying the path length condition. We propose an algorithm achieving an $O(L\sqrt{T(n+\log (T)) \mathcal{P}})$ dynamic regret bound, which instantiates copies of OMD algorithms with different learning rates as experts, and uses the multiplicative weight algorithm to select the best expert. The need for an expert algorithm arises because to obtain the dynamic regret bound, the learning rate of the OMD algorithm has to be a function of the path length, which is unknown ahead of time. The main algorithm is given in Algorithm~\ref{algo:md+exp}, and the expert OMD algorithm is given in Algorithm~\ref{algo:algomd}. The following theorem gives the dynamic regret guarantee:

\begin{theorem}\label{thm:dynamic}
Assume the path length $\mathcal{P}=\sum_{t=1}^{T-1} \|\varphi_t-\varphi_{t+1}\|_\rev{1}\ge 1$ and the loss $\ell_{t}$ is convex, L-Lipschitz, and maps to $[0,1]$, the dynamic regret of Algorithm \ref{algo:md+exp} is bounded by $O(L \sqrt{T(n+\log (T)) \mathcal{P}})$.
\end{theorem}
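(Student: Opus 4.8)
The plan is to follow the standard two-level structure for obtaining dynamic regret with an unknown path length: first establish a dynamic regret bound for a single OMD expert run with a fixed learning rate $\eta$, expressed as a function of $\eta$, $n$, $T$ and $\mathcal{P}$; then run a grid of such experts over geometrically spaced learning rates and use the multiplicative-weights master algorithm to compete with the best one, paying only an additive $O(L\sqrt{T\log(T)})$ for the expert selection. Optimizing $\eta$ over the grid against the (unknown) $\mathcal{P}$ then yields the claimed $O(L\sqrt{T(n+\log T)\mathcal{P}})$ bound.

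First I would analyze one OMD expert (Algorithm~\ref{algo:algomd}) with regularizer the von Neumann entropy $R(x) = \Tr(x\log x)$ over $C_n$, whose Bregman divergence is the quantum relative entropy. The telescoping identity for mirror descent gives, for each round, a bound of the form $\ell_t(\Tr(E_t x_t)) - \ell_t(\Tr(E_t \varphi_t)) \le \frac{1}{\eta}\bigl(D(\varphi_t\|x_t) - D(\varphi_t\|x_{t+1})\bigr) + \frac{\eta}{2} L^2$, where the gradient-norm term is controlled using $\|\nabla\ell_t\| \le L$ and the fact that $E_t$ has operator norm at most $1$ (so the "local norm" of the gradient against the entropy Hessian is $O(L^2)$, as in the standard matrix-exponentiated-gradient analysis of \cite{aaronson2018online}). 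Summing over $t$, the divergence terms no longer telescope to a single $D(\varphi_1\|x_1)$ because the comparator moves; instead I would regroup as $\sum_t D(\varphi_t\|x_t) - D(\varphi_t\|x_{t+1}) = D(\varphi_1\|x_1) - D(\varphi_T\|x_{T+1}) + \sum_{t=1}^{T-1}\bigl(D(\varphi_{t+1}\|x_{t+1}) - D(\varphi_t\|x_{t+1})\bigr)$, and bound each difference $D(\varphi_{t+1}\|x_{t+1}) - D(\varphi_t\|x_{t+1})$ in terms of $\|\varphi_{t+1}-\varphi_t\|_*$. Since $D(\varphi\|x) = -S(\varphi) - \Tr(\varphi\log x)$, the difference splits into a von Neumann entropy difference $S(\varphi_t) - S(\varphi_{t+1})$ and a term $\Tr\bigl((\varphi_t - \varphi_{t+1})\log x_{t+1}\bigr)$; using $\|\log x_{t+1}\|_{\mathrm{op}} = O(n + \log T)$ (the eigenvalues of the OMD iterate are bounded below by roughly $1/(2^n T)$ after the standard mixing/clipping step) gives $O((n+\log T)\|\varphi_t - \varphi_{t+1}\|_*)$ for the second piece, while the entropy difference is handled by a continuity bound for the von Neumann entropy in terms of trace distance (Audenaert's bound, \cite{Audenaert_2007}, as flagged in the introduction). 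Altogether this yields a single-expert dynamic regret of $O\bigl(\frac{1}{\eta}(n+\log T)\mathcal{P} + \eta L^2 T\bigr)$, modulo the additive $O(n)$ for $D(\varphi_1\|x_1)$ with the maximally mixed initialization.

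Next I would set up the experts layer: instantiate $N = O(\log T)$ copies of the OMD expert with learning rates $\eta_i = 2^i \eta_{\min}$ covering the range $[1/\sqrt{T}, \sqrt{(n+\log T)\mathcal{P}/(L^2 T)}\,]$-ish, feed the losses $\ell_t(\Tr(E_t \cdot))$ (which are bounded in $[0,L]$ up to scaling) to a multiplicative-weights master, and invoke its standard static regret guarantee $O(L\sqrt{T\log N}) = O(L\sqrt{T\log\log T}) = \tilde O(L\sqrt{T})$ against any fixed expert. Chaining: (regret of master vs.\ best expert) $+$ (dynamic regret of that expert vs.\ $\{\varphi_t\}$) $\le \tilde O(L\sqrt T) + O\bigl(\frac{1}{\eta^\star}(n+\log T)\mathcal{P} + \eta^\star L^2 T\bigr)$, and because the grid is geometric there is an $\eta^\star$ within a factor $2$ of the minimizer $\sqrt{(n+\log T)\mathcal{P}/(L^2 T)}$, giving $O(L\sqrt{T(n+\log T)\mathcal{P}})$; the assumption $\mathcal{P}\ge 1$ ensures this dominates the $\tilde O(L\sqrt T)$ expert-selection term, so the final bound is $O(L\sqrt{T(n+\log T)\mathcal{P}})$ as stated.

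The main obstacle, and the place where genuine care is needed, is the per-step bound on $D(\varphi_{t+1}\|x_{t+1}) - D(\varphi_t\|x_{t+1})$ over complex Hermitian density matrices: one must control both the entropy-continuity term and the $\Tr((\varphi_t-\varphi_{t+1})\log x_{t+1})$ term cleanly, and the latter forces the OMD iterates to stay uniformly bounded away from rank-deficiency (hence the logarithmic factor $n+\log T$ from clipping eigenvalues at $\sim 1/(2^n T)$), while the former requires the right operator-convexity / continuity estimate for $S(\cdot)$ valid over $\C^{2^n\times 2^n}$ rather than just the real symmetric case — this is exactly where the matrix-analysis results of \cite{Audenaert_2007} for Hermitian matrices enter, and verifying that the dual OMD update (which need not itself be a density matrix before projection) does not break the divergence telescoping is the subtle bookkeeping step.
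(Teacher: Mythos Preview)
Your proposal is correct and follows essentially the same route as the paper: a single-expert OMD analysis with von Neumann entropy regularization over the mixed/clipped domain (telescoping Bregman divergences, bounding $\|\log x_{t+1}\|_{\mathrm{op}}=O(n+\log T)$, and using the Fannes--Audenaert continuity bound for the entropy difference), followed by a geometric grid of learning rates aggregated by multiplicative weights. The only places the paper adds detail beyond what you sketch are (i) a Golden--Thompson argument to bound $B_R(x_t\|y_{t+1})\le \tfrac{1}{2}\eta^2 G_R^2$ in lieu of a mean-value step, and (ii) an explicit linear upper bound $H(p)\le (\log T)\,p + O(1/T)$ on the binary-entropy term in Fannes--Audenaert so that summing over $t$ yields $O((n+\log T)\mathcal{P})$ rather than an unwanted additive $T$; both are minor refinements of steps you already flagged.
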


\begin{algorithm}[h]
\caption{Dynamic Regret for Quantum Tomography} \label{algo:md+exp}
\begin{algorithmic}[1]
\STATE \textbf{Input:} a candidate set of $\eta$, $S=\{2^{-k-1}\mid 1\le k\le \log T\}$, constant $\alpha$.
\STATE Initialize $\log T$ experts $A_1,\ldots,A_{\log T}$, where $A_k$ is an instance of Algorithm~\ref{algo:algomd} with $\eta=2^{-k-1}$ for all $k\in[\log T]$.
\STATE Set initial weights $w_1(k)=\frac{1}{\log T}$ for all $k\in[\log T]$.
\FOR{$t = 1, \ldots, T$}
\STATE Predict $x_t=\frac{\sum_k w_t(k) x_t(k)}{\sum_k w_t(k)}$, where $x_t(k)$ is the output of the $k$-th expert.
\STATE Observe loss function $\ell_t(\cdot)$.
\STATE Update the weights as
$$
w_{t+1}(k)=w_t(k) e^{-\alpha \ell_t(x_t(k))}.
$$
\STATE Send gradients $\nabla \ell_t(x_t(k))$ to each expert $A_k$.
\ENDFOR
\end{algorithmic}
\end{algorithm}

\begin{algorithm}[h]
\caption{OMD for Quantum Tomography} \label{algo:algomd}
\begin{algorithmic}[1]
\STATE \textbf{Input:}  domain $\mathcal{K}=(1-\frac{1}{T})C_n +\frac{1}{T 2^n}I$, step size $\eta <\frac{1}{2L}$.
\STATE Define $R(x)=\sum_{k=1}^{2^n} \lambda_k(x) \log \lambda_k(x)$, $\nabla R(x) := I + \log (x)$, and let $B_R$ denote the Bregman divergence defined by $R$.
\STATE Set $x_1=2^{-n} I$, and $y_1$ to satisfy $\nabla R(y_1)=\bf{0}$.
\FOR{$t = 1, \ldots, T$}
\STATE Predict $x_t$ and receive loss $\ell_t(\Tr(E_t x_t))$.
\STATE Define $\nabla_t= \ell'_t (\Tr(E_t x_t)) E_t$, where $\ell_t'(y)$ is a subderivative of $\ell_t$ with respect to $y$.
\STATE Update $y_{t+1}$ such that $\nabla R(y_{t+1})=\nabla R(x_t)-\eta \nabla_t$.
\STATE Update $ x_{t+1} = \text{argmin}_{x \in \mathcal{K}} B_R(x||y_{t+1})$.
\ENDFOR
\end{algorithmic}
\end{algorithm}

The proof of Theorem~\ref{thm:dynamic} is deferred to Appendix~\ref{append:proof-thm-dynamic} and we discuss some technical challenges here. Ref.~\citet{zhang2018adaptive} gives an $O(\sqrt{T \mathcal{P}})$ dynamic regret bound for online convex optimization:
\begin{align}
\hspace{-1.5mm}\text{Dynamic Regret}=\sum_{t=1}^T \ell_t(x_t)-\min_{\varphi_1, \ldots, \varphi_t\in \mathcal{K}} \sum_{t=1}^T \ell_t(\varphi_t)
\end{align}
where $\sum_{t=1}^{T-1} \|\varphi_t-\varphi_{t+1}\|_2= \mathcal{P}$. Directly adopting their result will lead to an {\bf exponential dependence on $n$}, because the original algorithm uses $\ell_2$ norm to measure the path length, which makes dual norm of the gradient depends polynomially on the dimension ($2^n$ in this case). To avoid this issue, we use OMD with von Neumann entropy regularization to obtain a poly-logarithmic dependence on the dimension. From~\citet{aaronson2018online}, we know that with the von Neumann entropy as the regularizer, the regret can be bounded by problem parameters measured in nuclear norm and spectral norm, which tend to be small for online shadow tomography. The analysis is in a similar vein as the classical OMD proof~\cite{hazan2016introduction}, but classical results often rely on theorems from calculus that cannot be straightforwardly applied to the complex domain. 

Given the challenge of complex numbers, we use tools from functional analysis and quantum information theory to study the convergence of the algorithm. Our analysis deviates from the classical analysis in the following aspects:

\begin{itemize}[leftmargin=*]
\item We use the notion of Gateaux differentiability and the Gateaux derivative to extend classical gradient-based methods to the quantum domain. In particular, we use the Gateaux derivative to define the Bregman divergence, a key ingredient of the OMD analysis. Subsequently, we show that the Bregman divergence obeys an Euler's inequality that corresponds to the generalized Pythagorean theorem in the classical analysis \cite{hazan2016introduction}.

\item To give a quantitative bound on the Bregman divergence between the current iterate and the dual update, we cannot rely on the interpretation of the Bregman divergence as a notion of distance. Instead, we upper bound the quantity using the power series of the matrix exponential, and the Golden-Thompson inequality.

\item When bounding the dynamic regret, we need to bound the difference of von Neumann entropy between two quantum states. This can be seen as a type of continuity bound for the von Neumann entropy, and was first given by \citet{fannes} and later sharpened by \citet{Audenaert_2007}. The bound is known as the Fannes-Audenaert inequality with numerous applications in quantum information theory~\cite{wolf2008area,aaberg2014catalytic,winter2016operational,chitambar2019quantum}.
\end{itemize}

\paragraph{Mistake bound of the path length setting.}
We can obtain a mistake bound in this setting when we assume the ground truth $\rho_t$ has a bounded path length.

\begin{corollary}\label{cor:path-length-mistake}

Let $\{\rho_t\}$ be a sequence of $n$-qubit mixed state whose path length $P_T = \sum_{t=1}^{T-1}\|\rho_{t+1}-\rho_t\|_\rev{1} \le \mathcal{P}$, and let $E_1,E_2,\ldots$ be a sequence of two-outcome measurements that are revealed to the learner one by one, each followed by a value $b_t \in[0, 1]$ such that $|\Tr(E_t\rho_t)-b_t| \le \frac{1}{3}\epsilon$. Then there is an explicit strategy for outputting hypothesis states $x_1, x_2, \ldots$ such that $|\Tr(E_t x_t) - \Tr(E_t\rho_t)| > \epsilon$ for at most $\tilde{O}(\frac{L^2 n \mathcal{P}}{\epsilon^2})$ times.
\end{corollary}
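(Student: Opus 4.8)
The plan is to derive this mistake bound as a corollary of the dynamic regret bound in Theorem~\ref{thm:dynamic}, taking the comparator sequence to be the ground truth itself ($\varphi_t = \rho_t$) and using the standard regret-to-mistakes reduction with one twist: the learner is run \emph{lazily}, updating only on rounds where a mistake is possible. Concretely, I would fix the convex loss $\ell_t(z) = |z - b_t|$, which has Lipschitz constant $L = 1$ and subderivatives in $\{-1,0,1\}$ (a generic $L$-Lipschitz loss enters the bound only through $L$), and run Algorithm~\ref{algo:md+exp} with the modification that on a round $t$ the multiplicative-weights update and the gradient feed to the experts are performed \emph{only if} the current prediction satisfies $|\Tr(E_t x_t) - b_t| > \tfrac{2}{3}\epsilon$; otherwise the learner simply replays the unchanged $x_t$ and discards the feedback. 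Call these the \emph{update rounds} and let $M'$ be their number. Since $x_t$ is always a convex combination of density matrices, $x_t \in C_n$ and $\Tr(E_t x_t) \in [0,1]$ is a legitimate prediction.

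The first step is a per-round lower bound: on every update round $t$ the comparator pays $\ell_t(\Tr(E_t\rho_t)) = |\Tr(E_t\rho_t) - b_t| \le \tfrac{1}{3}\epsilon$ by hypothesis, whereas the learner pays $\ell_t(\Tr(E_t x_t)) > \tfrac{2}{3}\epsilon$ by definition of an update round, so the instantaneous loss gap against $\rho_t$ exceeds $\tfrac{1}{3}\epsilon$. Summing over the $M'$ update rounds, the learner's dynamic regret restricted to those rounds is at least $\tfrac{1}{3}M'\epsilon$.

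The second step bounds that same restricted dynamic regret from above via Theorem~\ref{thm:dynamic}. Because the learner's internal state is frozen outside update rounds, its trajectory is exactly that of Algorithm~\ref{algo:md+exp} run on the length-$M'$ subsequence of update rounds $t_1 < \dots < t_{M'}$ against the comparator subsequence $\rho_{t_1},\dots,\rho_{t_{M'}}$. The path length of that subsequence is at most $P_T = \sum_{t=1}^{T-1}\|\rho_{t+1}-\rho_t\|_* \le \mathcal{P}$, since by the triangle inequality $\|\rho_{t_j}-\rho_{t_{j+1}}\|_* \le \sum_{s=t_j}^{t_{j+1}-1}\|\rho_{s+1}-\rho_s\|_*$ and these index blocks are disjoint: subsampling never increases the path length. (If $\mathcal{P}<1$ one replaces it by $1$ to meet the hypothesis of Theorem~\ref{thm:dynamic}, costing only an absolute constant.) Hence Theorem~\ref{thm:dynamic} gives a restricted dynamic regret bound of $O\!\big(L\sqrt{M'(n+\log T)\mathcal{P}}\big)$.

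Combining the two bounds, $\tfrac{1}{3}M'\epsilon \le O\!\big(L\sqrt{M'(n+\log T)\mathcal{P}}\big)$, which when solved for $M'$ yields $M' \le O\!\big(L^2(n+\log T)\mathcal{P}/\epsilon^2\big) = \tilde{O}(L^2 n\mathcal{P}/\epsilon^2)$. Finally, every mistake round is an update round: if $|\Tr(E_t x_t)-\Tr(E_t\rho_t)| > \epsilon$, then $|\Tr(E_t x_t)-b_t| \ge |\Tr(E_t x_t)-\Tr(E_t\rho_t)| - |\Tr(E_t\rho_t)-b_t| > \epsilon - \tfrac{1}{3}\epsilon = \tfrac{2}{3}\epsilon$. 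Thus the number of mistakes is at most $M' = \tilde{O}(L^2 n\mathcal{P}/\epsilon^2)$, as claimed. I expect the only point needing genuine care to be the second step — checking that the lazy learner's restricted behavior really coincides with a bona fide $M'$-round instance of Algorithm~\ref{algo:md+exp} and that the path-length budget $\mathcal{P}$ survives the subsampling; everything else is two applications of the triangle inequality together with the already-proven Theorem~\ref{thm:dynamic}.
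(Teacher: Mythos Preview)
Your proposal is correct and follows essentially the same approach as the paper's proof: run the dynamic-regret algorithm lazily (updating only when $\ell_t(\Tr(E_t x_t))>\tfrac{2}{3}\epsilon$), observe that every mistake round is an update round, and apply Theorem~\ref{thm:dynamic} on the subsequence of update rounds with $\{\rho_{t_i}\}$ as the comparator to get $\tfrac{1}{3}\epsilon M' \le \tilde{O}(L\sqrt{M' n\mathcal{P}})$. Your write-up is in fact more careful than the paper's --- you make explicit the triangle-inequality argument that subsampling cannot increase path length and flag the one genuine subtlety (that the lazy learner restricted to update rounds must coincide with an honest $M'$-round instance of Algorithm~\ref{algo:md+exp}), both of which the paper leaves implicit.
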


\paragraph{Dynamic regret given dynamical models.}
So far, we have not made assumptions on how the state evolves. However, in many cases, the quantum state only evolves under a family of dynamical models. A natural question is the following: If we know the family of possible dynamical models in advance, can we achieve a better dynamic regret?

First, we consider the case when there is a fixed family of dynamical models. The most general linear maps $\Phi\colon C_{n}\to C_{n}$ implementable on quantum computers are \emph{quantum channels} satisfying:
\begin{itemize}[leftmargin=*]
\item Completely positive: For any $\rho\in C_{n}$ and $m$-dimensional identity matrix $I_{m}$, $\rho\otimes I_{m}\succeq 0$;
\item Trace preserving: $\Tr(\Phi(\rho))=\Tr(\rho)=1$ for any quantum state $\rho\in C_{n}$.
\end{itemize}

A common example is $\Phi(x)=e^{iHt}xe^{-iHt}$, where $H$ is a background Hamiltonian that evolves the quantum state following the Schr{\"o}dinger equation.

We first consider the case with $M$ possible dynamical models $\{\Phi^{(1)},\ldots,\Phi^{(M)}\}$, each of which is a quantum channel. Following \citet{hall2013dynamical} and \citet{zhang2018adaptive}, we restrict a variant of the path length of the comparator:
\begin{align}\label{eq:dynamic-regret-family}
\mathcal{P}' = \min_{\Phi^{(i)}\in\{\Phi^{(1)},\ldots,\Phi^{(M)}\}}\sum_{t = 1}^{T-1} \|\varphi_{t+1}-\Phi^{(i)}(\varphi_{t})\|_\rev{1}.
\end{align}

\begin{corollary}\label{cor:dynamic-model-family}
Under the assumption that we have $M$ possible dynamical models $\{\Phi^{(1)},\ldots,\Phi^{(M)}\}$ and the path length $\mathcal{P}' = \min_{\Phi^{(i)}\in\{\Phi^{(1)},\ldots,\Phi^{(M)}\}}\sum_{t = 1}^{T-1} \|\varphi_{t+1}-\Phi^{(i)}(\varphi_{t})\|_\rev{1}\ge1$, there is an algorithm with dynamic regret $O(L\sqrt{T(n+\log(T))\mathcal{P}'}+\sqrt{T\log(M\log(T))})$.
\end{corollary}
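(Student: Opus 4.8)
}
The plan is to reduce to the analysis behind \thm{dynamic} by upgrading each OMD expert of \algo{algomd} to a \emph{dynamic} mirror descent that folds a candidate dynamical model into every update, and then aggregating with multiplicative weights over an enlarged expert pool. Precisely: for every pair $(\Phi^{(i)},\eta)$ with $1\le i\le M$ and $\eta\in S$ we run one copy of \algo{algomd} with step size $\eta$, but we replace the predicted iterate by $x_{t+1}=\Pi_{\mathcal K}\big(\Phi^{(i)}(\tilde x_{t+1})\big)$, where $\tilde x_{t+1}=\Pi_{\mathcal K}(y_{t+1})$ is the usual projected iterate and $\Pi_{\mathcal K}(z)=\text{argmin}_{x\in\mathcal K}B_R(x\|z)$ is the Bregman projection; the extra projection ensures $x_{t+1}\succ 0$ even when $\Phi^{(i)}$ is rank-degrading, so $\nabla R(x_{t+1})=I+\log x_{t+1}$ stays well defined. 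The meta-algorithm is \algo{md+exp} run over these $N=M|S|\le M\log T$ experts. This is the quantum version of the dynamic-mirror-descent-with-model-aggregation scheme of \cite{hall2013dynamical,zhang2018adaptive}: in the ``frame'' defined by $\Phi^{(i)}$ the comparator drift is measured by $\sum_t\|\varphi_{t+1}-\Phi^{(i)}(\varphi_t)\|_*$.

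\textbf{Per-expert dynamic regret.} Fix $\Phi=\Phi^{(i)}$ and rerun the proof of \thm{dynamic} for this expert. Two facts about quantum channels let $\Phi$ pass through the argument. (1) Any positive trace-preserving map is a nuclear-norm contraction on Hermitian matrices: decomposing $A-B=(A-B)_+-(A-B)_-$ into orthogonal PSD parts, $\|\Phi(A)-\Phi(B)\|_*\le\Tr\Phi((A-B)_+)+\Tr\Phi((A-B)_-)=\|A-B\|_*$. (2) $\Phi$ is CPTP, so quantum relative entropy -- which coincides with $B_R$ on trace-one matrices -- is monotone: $B_R(\Phi(\varphi)\|\Phi(z))\le B_R(\varphi\|z)$ (data processing). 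Starting from the usual one-step OMD inequality and applying (2) followed by the generalized Pythagorean theorem for Hermitian matrices (established for \thm{dynamic}), one gets, up to $O(1/T)$ boundary terms absorbed exactly as in \thm{dynamic},
\[
\ell_t\big(\Tr(E_tx_t)\big)-\ell_t\big(\Tr(E_t\varphi_t)\big)\ \le\ \tfrac1\eta\Big(B_R(\varphi_t\|x_t)-B_R\big(\Phi(\varphi_t)\|x_{t+1}\big)\Big)+\tfrac\eta2 L^2 .
\]
Writing $B_R(\Phi(\varphi_t)\|x_{t+1})=B_R(\varphi_{t+1}\|x_{t+1})-\big(B_R(\varphi_{t+1}\|x_{t+1})-B_R(\Phi(\varphi_t)\|x_{t+1})\big)$ and bounding the parenthesis by the Audenaert continuity inequality \cite{Audenaert_2007} (as in \thm{dynamic}) by $O\big((n+\log T)\,\|\varphi_{t+1}-\Phi(\varphi_t)\|_*\big)$, then summing over $t$ and telescoping the $B_R(\varphi_t\|x_t)$ terms (using $B_R(\varphi_1\|x_1)\le n$ since $x_1=2^{-n}I$, and nonnegativity of $B_R$), the regret of expert $(\Phi^{(i)},\eta)$ against $\{\varphi_t\}$ is
\[
O\!\Big(\tfrac1\eta\,(n+\log T)\,\textstyle\sum_{t}\|\varphi_{t+1}-\Phi^{(i)}(\varphi_t)\|_* \,+\, \eta L^2 T\Big).
\]
Taking $\Phi^{(i)}$ to be the minimizer in the definition of $\mathcal P'$ (so $\sum_t\|\varphi_{t+1}-\Phi^{(i)}(\varphi_t)\|_*=\mathcal P'\ge1$) and $\eta\in S$ the grid point closest to the optimal $\sqrt{(n+\log T)\mathcal P'/(L^2T)}$ -- valid exactly where the grid of \thm{dynamic} is valid -- this becomes $O\big(L\sqrt{T(n+\log T)\mathcal P'}\big)$.

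\textbf{Aggregation and conclusion.} The multiplicative-weights meta-algorithm over $N\le M\log T$ experts with per-round losses in $[0,O(1)]$ (as for $\ell_t(z)=|z-b_t|$ or $(z-b_t)^2$) has regret $O\big(\sqrt{T\log N}\big)=O\big(\sqrt{T\log(M\log T)}\big)$ against its best expert. Combining with the per-expert bound for the best $(\Phi^{(i)},\eta)$ pair yields dynamic regret $O\big(L\sqrt{T(n+\log T)\mathcal P'}+\sqrt{T\log(M\log T)}\big)$, which is \cor{dynamic-model-family}. The delicate part -- and where I expect most of the work -- is redoing the Bregman-divergence bookkeeping of \thm{dynamic} with $\Phi^{(i)}$ inserted: invoking data processing for quantum relative entropy between matrices that need not be simultaneously diagonalizable, re-deriving the entropy continuity estimate when one argument is $\Phi^{(i)}(\varphi_t)$ and the other is $\Pi_{\mathcal K}(\Phi^{(i)}(\tilde x_t))$, and checking that composing with a possibly non-unital channel and reprojecting keeps the iterates in the regime where the matrix-exponential bounds of \thm{dynamic} apply; once these are in place, the rest is the standard experts-over-dynamic-mirror-descent reduction.
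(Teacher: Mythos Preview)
Your proposal is correct and follows essentially the same route as the paper: both run one dynamic-OMD expert per pair $(\Phi^{(i)},\eta)$ that applies $\Phi^{(i)}$ after the Bregman projection step, invoke monotonicity of quantum relative entropy under CPTP maps (Lindblad's data-processing inequality) to obtain $B_R(\varphi_t\|\hat x_{t+1})\ge B_R(\Phi^{(i)}(\varphi_t)\|x_{t+1})$, bound $|B_R(\varphi_{t+1}\|x_{t+1})-B_R(\Phi^{(i)}(\varphi_t)\|x_{t+1})|$ via Fannes--Audenaert exactly as in \thm{dynamic}, and then aggregate with multiplicative weights over $M\log T$ experts. The only cosmetic difference is your extra Bregman projection after $\Phi^{(i)}$ to guard against rank-degrading channels; the paper instead asserts that mixing with the identity commutes with $\Phi$, so that $\Phi$ already maps $\mathcal K$ into $\mathcal K$.
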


In general, the possible dynamical models can be a continuous set. Considering that current quantum computers~\citep{arute2019supremacy,IBM-Prague,zhu2022quantum} commonly use 2D chips where a quantum gate only influences a constant number of qubits, it is natural to consider $l$-local quantum channels where each channel only influences at most $l$ qubits (and perform identity on the other $n-l$ qubits), and the following result holds. The proofs of Corollary~\ref{cor:dynamic-model-family} and Corollary~\ref{cor:dynamic-model-local-channel} are deferred to Appendix~\ref{append:dynamical-model}.
\begin{corollary}\label{cor:dynamic-model-local-channel}
Under the assumption that the dynamical models are $l$-local quantum channels, there is an algorithm with dynamic regret $\tilde{O}(L\sqrt{nT\mathcal{P}'}+2^{6l}\sqrt{T})$, where $\mathcal{P}' = \min_{\Phi\in S_{n,l}}\sum_{t = 1}^{T-1} \|\varphi_{t+1}-\Phi(\varphi_{t})\|_\rev{1}\ge1$ and $S_{n,l}$ is the set of all $l$-local quantum channels.
\end{corollary}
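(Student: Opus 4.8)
The plan is to reduce the continuous family $S_{n,l}$ to the finite‑family setting of Corollary~\ref{cor:dynamic-model-family} by replacing $S_{n,l}$ with a sufficiently fine finite net and charging the discretization error to the path length. The key structural fact is that an $l$-local channel on $n$ qubits has the form $\Phi = \Phi_0 \otimes \mathrm{id}_{2^{n-l}}$, up to the choice of which $l$ of the $n$ qubits are acted on (only $\binom{n}{l}\le n^l$ choices, contributing an $O(l\log n)$ term absorbed by $\tilde O$), where $\Phi_0$ is a CPTP map on the $2^l$-dimensional block. For two such channels and any density matrix $\varphi$ on the full $n$-qubit space, the stabilization property of the diamond norm gives $\|\Phi(\varphi)-\Phi'(\varphi)\|_* \le \|\Phi_0-\Phi_0'\|_\diamond$, because $\|\varphi\|_*=1$ and tensoring with $\mathrm{id}_{2^{n-l}}$ cannot increase the diamond norm beyond its value on the $2^l$-dimensional input. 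This is precisely what keeps the ambient dimension $2^n$ from reappearing in the final bound.

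Concretely I would proceed as follows. (i) The CPTP maps on $\mathbb{C}^{2^l}$ are parametrized by their Choi matrices, a compact convex subset of the $O(2^{4l})$-real-dimensional space of Hermitian $2^{2l}\times 2^{2l}$ matrices; a standard volumetric covering argument produces a $\delta$-net $\mathcal{N}$ in diamond norm with $|\mathcal{N}| =: M \le (C/\delta)^{2^{O(l)}}$ for an absolute constant $C$, and one may take each net point to be a genuine channel by projecting its Choi matrix onto the Choi set (which only decreases distances). (ii) Let $\Phi^\star\in S_{n,l}$ attain $\mathcal{P}'$ and pick $\tilde\Phi\in\mathcal{N}$ with $\|\Phi_0^\star-\tilde\Phi_0\|_\diamond\le\delta$; then $\sum_{t=1}^{T-1}\|\varphi_{t+1}-\tilde\Phi(\varphi_t)\|_*\le \mathcal{P}'+(T-1)\delta$ by the triangle inequality and step~(i). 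Taking $\delta=1/T$ and using $\mathcal{P}'\ge1$, the net path length $\mathcal{P}'_{\mathcal{N}}$ is at most $2\mathcal{P}'$, while $\log M = 2^{O(l)}\log(CT)$. (iii) Run the algorithm of Corollary~\ref{cor:dynamic-model-family} on the $M$ models $\mathcal{N}$; its guarantee $O\!\big(L\sqrt{T(n+\log T)\,\mathcal{P}'_{\mathcal{N}}}+\sqrt{T\log(M\log T)}\big)$ then becomes $\tilde O\!\big(L\sqrt{nT\mathcal{P}'}\big)+\tilde O\!\big(2^{O(l)}\sqrt{T}\big)$, and the implicit constant in the exponent together with the logarithmic factors are absorbed into the stated form $\tilde O(L\sqrt{nT\mathcal{P}'}+2^{6l}\sqrt{T})$.

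The main obstacle is the choice of metric for the net: a naive net in operator or Frobenius norm would, after tensoring with $\mathrm{id}_{2^{n-l}}$ and evaluating at an adversarial comparator $\varphi_t$, incur a factor as large as $2^{n-l}$ in the path-length error and destroy the bound. Using the diamond norm (= stabilized trace norm) is exactly what collapses this to a dimension-free estimate depending only on $2^l$. The remaining work — a volumetric net count for the Choi set, checking that the rounded net points stay CPTP, and verifying that the reduction of an $l$-local channel to its $2^l$-dimensional block is compatible with the dynamical-model algorithm of Corollary~\ref{cor:dynamic-model-family} — is routine. A minor additional point is that $\mathcal{P}'$ is defined as a minimum over all of $S_{n,l}$ whereas the algorithm competes against the net; the choice $\delta=1/T$ together with $\mathcal{P}'\ge1$ closes this gap at negligible cost.
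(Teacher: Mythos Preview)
Your proposal is correct and follows the same reduction as the paper: build a finite $\delta$-net of $l$-local channels, apply Corollary~\ref{cor:dynamic-model-family} to that net, and take $\delta=1/T$ so the discretization error $(T-1)\delta$ is absorbed into $\mathcal{P}'\ge 1$. The only substantive difference is in how the net is constructed. The paper goes through Stinespring dilation and an explicit $\epsilon$-net of $3l$-qubit unitaries, arriving at the concrete cardinality $\binom{n}{l}(6/\delta)^{2^{12l}}$, and the $2^{6l}$ in the statement is exactly $\sqrt{2^{12l}}$. You instead parametrize by Choi matrices and cover volumetrically in diamond norm. Your emphasis on the diamond norm (rather than the unstabilized induced trace norm) as the device that keeps $\Phi_0\mapsto\Phi_0\otimes\mathrm{id}_{2^{n-l}}$ from picking up a $2^{n-l}$ factor is well placed; the paper obtains the same stability implicitly, because its error estimate is in terms of $\|U-V\|$ in operator norm and $\|(U-V)\otimes I\|=\|U-V\|$, so the Stinespring bound already tensorizes. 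Both constructions give $\log M=2^{O(l)}\log(1/\delta)$ and the remaining algebra is identical, though a bare volumetric count on the Choi set will not in general pin the exponent to exactly $12l$, so your route recovers the $\tilde O$ shape but not necessarily the specific constant $2^{6l}$.
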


\section{Minimizing Adaptive Regret}
In this section, we minimize the adaptive regret in Eq.~\eq{SA-regret} in the non-realizable case, i.e., $b_t$ can be arbitrarily selected.
We follow a meta-algorithm called Coin Betting for Changing Environment (CBCE) proposed \rev{by~\citet{jun2017improved}} with its black-box algorithm $\mathcal{B}$ being the Regularized Follow-the-Leader based algorithm introduced by~\citet{aaronson2018online} for learning quantum states. The algorithm is given in Algorithm~\ref{algo:adaptive}.

\begin{algorithm}
\caption{Adaptive Regret for Quantum Tomography} \label{algo:adaptive}
\begin{algorithmic}[1]
\STATE \textbf{Input:} CBCE from \citet{jun2017improved} as the meta-algorithm, RFTL from \citet{aaronson2018online} as the black-box algorithm.
\STATE Initialize experts $\mathcal{B}_J$, where $\mathcal{B}_J$ is an instance of the RFTL algorithm over a different interval $J$, $J\in \mathcal{J}$ where $\mathcal{J}$ is the set of geometric intervals.
\STATE Initialize a prior distribution over $\mathcal{B}_J$.
\FOR{$t = 1, \ldots, T$}
\STATE Predict among the black-box experts $\mathcal{B}_J$ according to CBCE.
\STATE Update the distribution over the black-box experts $\mathcal{B}_J$ according to CBCE.
\ENDFOR
\end{algorithmic}
\end{algorithm}

The basic idea is that the CBCE meta-algorithm can transfer the regret bound of any black-box algorithm to strongly adaptive regret, and the RFTL algorithm with von Neumann entropy regularization achieves $O(\sqrt{T})$ regret bound which can serve as the black-box algorithm. Formally:

\begin{lemma}
\label{lem:SA-Regret}
Let $E_{1}, E_{2}, \ldots$ be a sequence of two-outcome measurements on an $n$-qubit state presented to the learner, and suppose the loss $\ell_{t}$ is convex, L-Lipschitz, and maps to $[0,1]$. Then Algorithm \ref{algo:adaptive} guarantees strongly adaptive regret $\text{SA-Regret}_T^{\mathcal{A}}(\tau)=O(L \sqrt{n\tau \log(T) })$ for all $T$.
\end{lemma}

\paragraph{$k$-shift regret bound.}
As an application of \rev{Lemma \ref{lem:SA-Regret}}, we derive a $k$-shift regret bound. In the $k$-shift setting, we assume that $\varphi$ can change at most $k$ times, and we derive the following \rev{theorem}. 
% \ry{Do we need to reorganize the writing in this section?}

\begin{theorem}
\label{thm:k_shift regret}
Under the same setting as \rev{Lemma \ref{lem:SA-Regret}}, if we allow $\varphi$ to change at most $k$ times, we have an $O(L\sqrt{knT \log(T)})$ bound on the $k$-shift regret $R_{k\text{-shift}}^{\mathcal{A}}$:
\begin{align}
    R_{k\text{-shift}}^{\mathcal{A}} = \sum_{t= 1}^T \ell_t(E_t x_t^{\mathcal{A}}) -\min_{1=t_1\le t_2\le \cdots \le t_{k+1}=T+1} \min_{\varphi_1,\varphi_2,\ldots, \varphi_k\in C_n}\sum_{j = 1}^{k}\sum_{t = t_j}^{t_{j+1}-1}\ell_t(E_t\varphi_j).
\end{align}

\end{theorem}

\paragraph{Mistake bound of the $k$-shift setting.}
With \rev{Theorem \ref{thm:k_shift regret}} in hand, when we further assume that the ground truth $\rho_t$ also shifts at most $k$ times, we can derive the following mistake bound. %\tyl{Highlight this in the main body or appendices?}

\begin{corollary}
\label{cor:k-shift-mistake}
Let $\{\rho_t\}$ be a sequence of $n$-qubit mixed state which changes at most $k$ times, and let $E_1,E_2,\ldots$ be a sequence of two-outcome measurements that are revealed to the learner one by one, each followed by a value $b_t \in[0, 1]$ such that $|\Tr(E_t\rho_t)-b_t| \le \frac{1}{3}\epsilon$. Then there is an explicit strategy for outputting hypothesis states $x_1, x_2, \ldots$ such that $|\Tr(E_t x_t) - \Tr(E_t\rho_t)| > \epsilon$ for at most $O\left(\frac{kn}{\epsilon^2}\log(\frac{kn}{\epsilon^2})\right)$ times.
\end{corollary}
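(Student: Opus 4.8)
## Proof Proposal for Corollary~\ref{cor:k-shift-mistake}

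The plan is to reduce the statement to the $k$-shift regret bound of Corollary~\ref{cor:k_shift regret} via the standard \emph{lazy-update} trick, combined with a clipped loss that annihilates the comparator's cost. Concretely, run the $k$-shift algorithm $\mathcal{A}$ of Corollary~\ref{cor:k_shift regret}, but let it update only on rounds where it is in danger of erring: after predicting $x_t$ and receiving $b_t$, call round $t$ \emph{active} if $|\Tr(E_t x_t)-b_t|>\tfrac{2}{3}\epsilon$, and on inactive rounds freeze the learner (output $x_{t+1}=x_t$ and feed it nothing). On active rounds, feed $\mathcal{A}$ the loss $\ell_t(z)=\max\{0,\,|z-b_t|-\tfrac{1}{3}\epsilon\}$, which is convex, $1$-Lipschitz, and maps $[0,1]$ into $[0,1]$, hence is admissible for Corollary~\ref{cor:k_shift regret} with $L=1$.

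Let $M'$ be the number of active rounds. Two observations close the argument. First, every mistake round is active: $|\Tr(E_t x_t)-\Tr(E_t\rho_t)|>\epsilon$ together with $|\Tr(E_t\rho_t)-b_t|\le\tfrac{1}{3}\epsilon$ forces $|\Tr(E_t x_t)-b_t|>\tfrac{2}{3}\epsilon$, and moreover on \emph{any} active round $\ell_t(\Tr(E_t x_t))=|\Tr(E_t x_t)-b_t|-\tfrac{1}{3}\epsilon>\tfrac{1}{3}\epsilon$, so $\mathcal{A}$'s cumulative loss over the active rounds exceeds $\tfrac{1}{3}\epsilon\,M'$. Second, take as comparator the ground-truth sequence itself: restricted to the active rounds $\{\rho_t\}$ still changes at most $k$ times, so it is an admissible $k$-shift comparator, and $\ell_t(\Tr(E_t\rho_t))=0$ on every round because $|\Tr(E_t\rho_t)-b_t|\le\tfrac{1}{3}\epsilon$. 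Since the algorithm of Corollary~\ref{cor:k_shift regret} is anytime (CBCE with geometric intervals; otherwise a doubling schedule costs only constants), applying its guarantee to the length-$M'$ active subsequence gives
\begin{equation*}
\tfrac{1}{3}\epsilon\, M' \;<\; \sum_{t \text{ active}}\ell_t(\Tr(E_t x_t)) \;\le\; \sum_{t \text{ active}}\ell_t(\Tr(E_t\rho_t)) + O\!\big(\sqrt{k n M' \log M'}\big) \;=\; O\!\big(\sqrt{k n M' \log M'}\big).
\end{equation*}

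Dividing by $\sqrt{M'}$ yields the self-bounding inequality $M'\le c\,\tfrac{kn}{\epsilon^{2}}\log M'$ for an absolute constant $c$, which by a routine calculation (using that $x\mapsto x-2c\log x$ is increasing for large $x$) forces $M'=O\!\big(\tfrac{kn}{\epsilon^{2}}\log\tfrac{kn}{\epsilon^{2}}\big)$, the regime $\tfrac{kn}{\epsilon^{2}}=O(1)$ being trivial. Since the mistake rounds form a subset of the active rounds, the number of mistakes is at most $M'$, which is exactly the claimed bound. The only genuine obstacle is that the naive reduction — feeding every round with loss $|z-b_t|$ — yields merely the $T$-dependent estimate $\tilde O(\sqrt{knT}/\epsilon)$; the lazy-update device is what removes the $T$, and the step requiring care is verifying that freezing inactive rounds preserves both the admissibility of $\ell_t$ and the $k$-shift structure of the comparator on the active subsequence, so that Corollary~\ref{cor:k_shift regret} applies to it verbatim.
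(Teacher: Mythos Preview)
Your proposal is correct and follows essentially the same route as the paper: restrict the algorithm to ``active'' rounds via a lazy update, observe that mistakes are a subset of active rounds and that $\{\rho_t\}$ restricted to them is still a $k$-shift comparator, apply Corollary~\ref{cor:k_shift regret} on that subsequence, and solve the resulting self-bounding inequality $\tfrac{1}{3}\epsilon M' \le O(\sqrt{knM'\log M'})$. The only cosmetic difference is that you use the clipped loss $\max\{0,|z-b_t|-\tfrac{1}{3}\epsilon\}$ so the comparator's cost vanishes, whereas the paper uses the plain $\ell_1$ loss $|z-b_t|$ and bounds the comparator's cost by $\tfrac{1}{3}\epsilon M'$; either way the same inequality drops out.
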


The proofs of the claims in this section are deferred to Appendix~\ref{append:small-proof}.

\section{Numerical Experiments}
%\tyl{small subsection, use what package, generate what kind of states, change the state slowly, regret plot}

\graphicspath{Figures}

In this section, we present numerical results which support our regret bounds.\footnote{\rev{The implementation codes can be found at the link: https://github.com/EstelYang/Adaptive-Online-Learning-of-Quantum-States}} %attained for changing quantum states. 
Specifically, we conduct experiments for comparing with non-adaptive algorithms and testing our performance of the $k$-shift setting and the path length setting, respectively. All the experiments are performed on Visual Studio Code 1.6.32 on a 2021 MacBook Pro (M1 Pro chip) with 10-core CPU and 16GB memory. We generate random $n$-qubit quantum states using the QuTiP package~\citep{QuTiP,QuTiP2}. %, i.e., a $2^n \times 2^n$ Hermitian positive semi-definite matrix with $\Tr(\rho)=1$. 
In the $k$-shift setting, each of the $k$ changes of the quantum state are chosen uniformly at random from all iterations $\{1,2,\ldots,T\}$. In the path length setting, our quantum state evolves slowly following a background Hamiltonian $H$ also generated by QuTiP. At time $t$ we generate a POVM measurement $E_t$ with eigenvalues and eigenvectors chosen uniformly at random from $[0,1]$ and $\ell_{2}$-norm unit vectors, respectively. Then, we apply the measurement, receive loss, and update our prediction using Algorithm~\ref{algo:md+exp} and Algorithm~\ref{algo:adaptive}. For all experiments, the instantaneous regret at each time step is computed using the ground truth state, which changes over each run depending on the setting.

%============================================================
\paragraph{Comparison to non-adaptive algorithms.}
One eminent strength of adaptive online learning of quantum states is its ability to capture the changing states. Previous algorithms such as regularized follow-the-leader (RFTL) perform well for predicting fixed quantum states~\citep{aaronson2018online}, but they are not adaptive when the quantum state evolves. We compare the cumulative regret produced by RFTL and our algorithms in the $k$-shift setting and the path length setting. The results are illustrated as follows.

\begin{figure*}[htbp]
\centering
% \hspace{-2em}
\subfigure[$k$-shift setting, $k=80$, $n=2$.]{
    \begin{minipage}{.47\linewidth}
    \centering
    \includegraphics[scale=0.47]{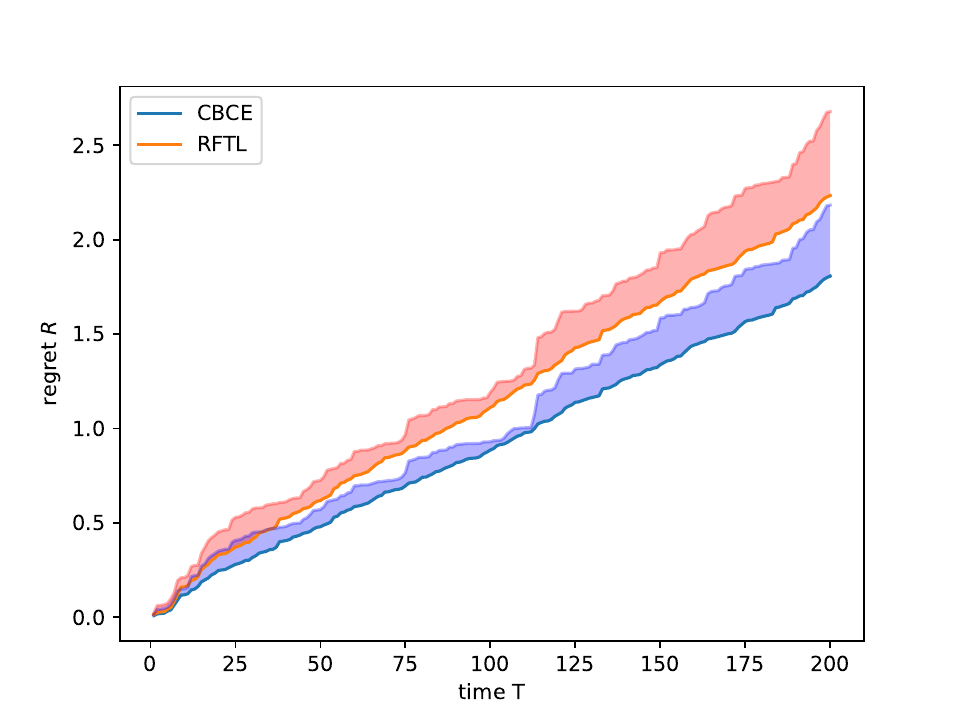}
    \end{minipage}
}
\subfigure[$k$-shift setting, $k=4$, $n=2$.]{
    \begin{minipage}{.47\linewidth}
    \centering
    \includegraphics[scale=0.47]{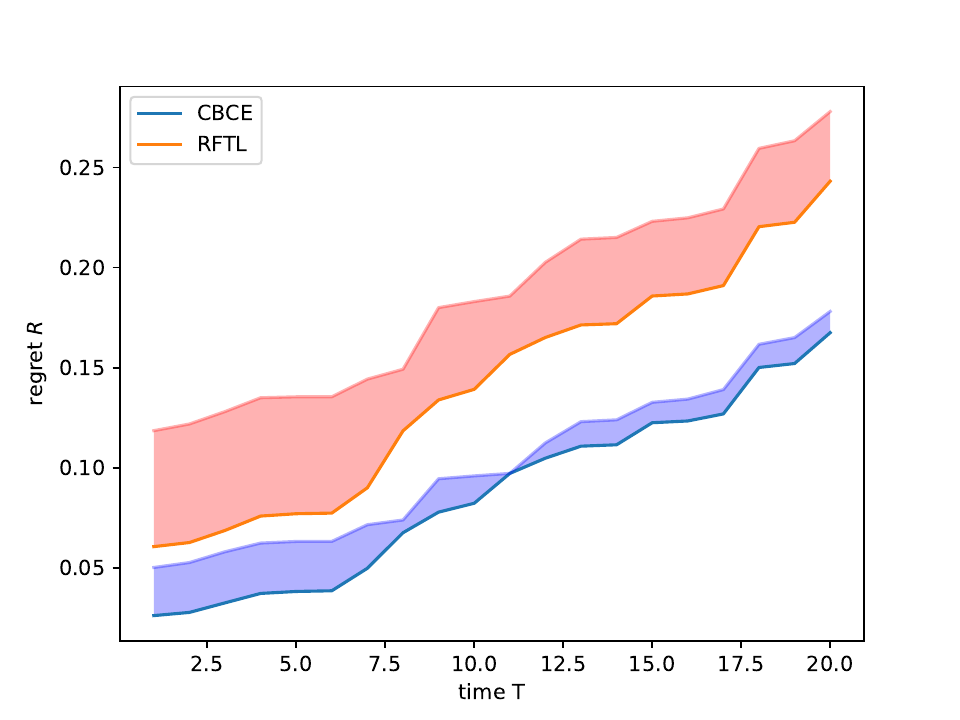}
    \end{minipage}
}
\subfigure[$k$-shift setting, $k=4$, $n=3$.]{
    \begin{minipage}{.47\linewidth}
    \centering
    \includegraphics[scale=0.47]{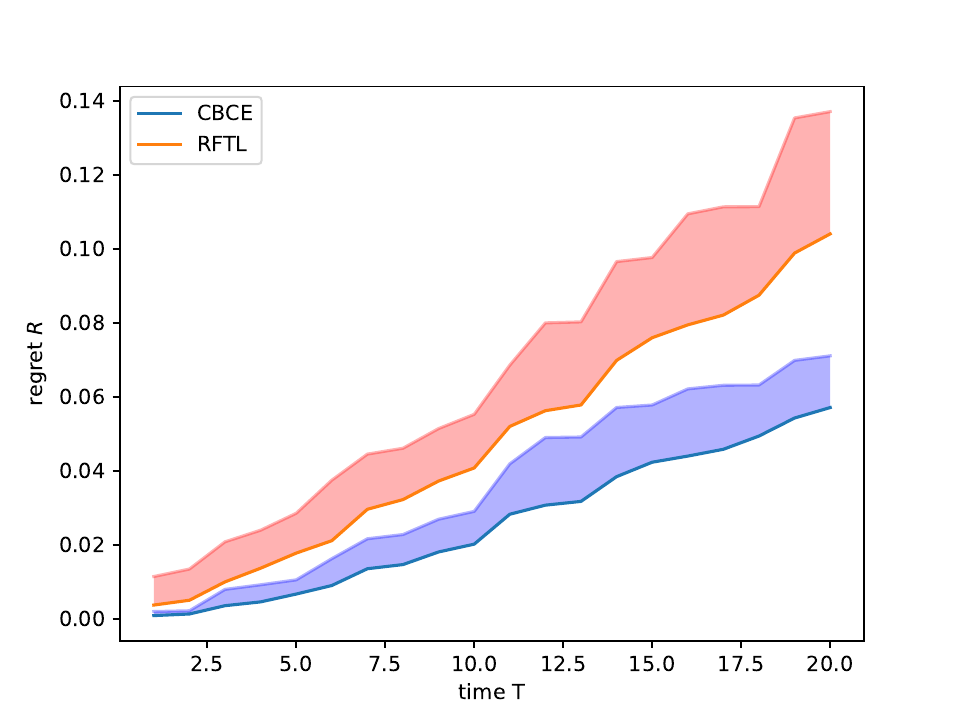}
    \end{minipage}
}
\subfigure[$k$-shift setting, $k=4$, $n=4$.]{
    \begin{minipage}{.47\linewidth}
    \centering
    \includegraphics[scale=0.47]{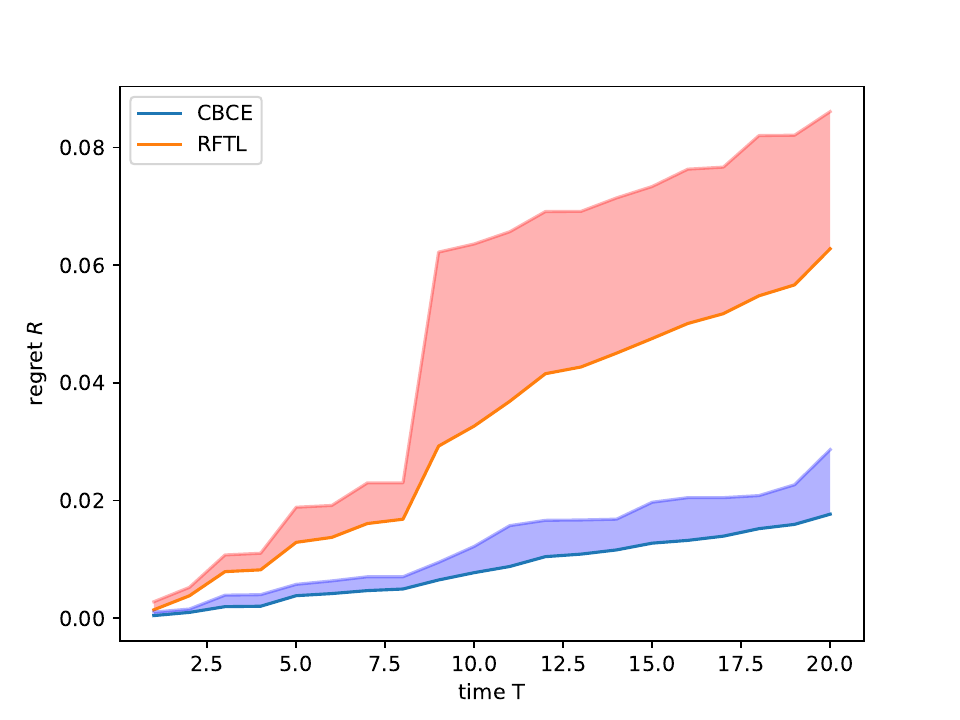}
    \end{minipage}
}
\subfigure[$k$-shift setting, $k=4$, $n=5$.]{
    \begin{minipage}{.47\linewidth}
    \centering
    \includegraphics[scale=0.47]{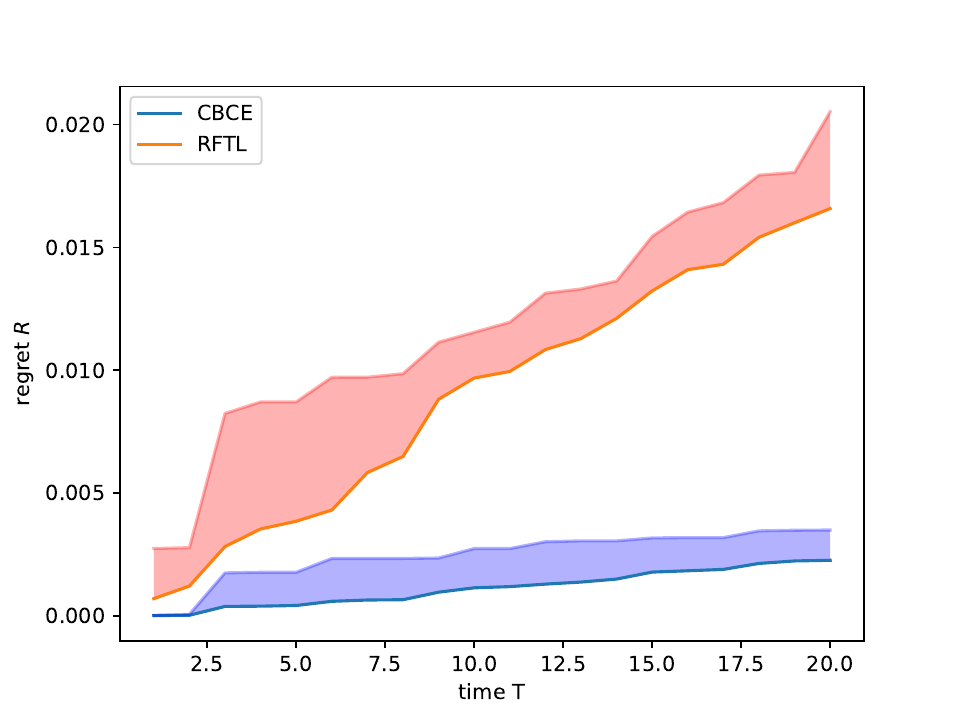}
    \end{minipage}
}    
\subfigure[$k$-shift setting, $k=4$, $n=6$.]{
    \begin{minipage}{.47\linewidth}
    \centering
    \includegraphics[scale=0.47]{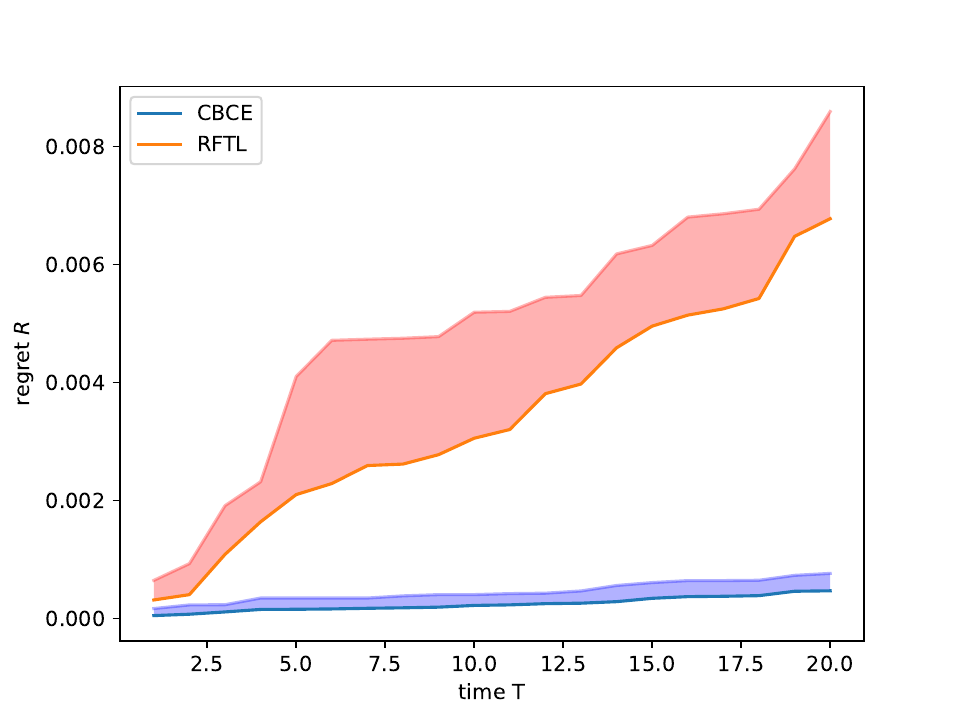}
    \end{minipage}
}
\caption{Comparison between Algorithm~\ref{algo:adaptive} and non-adaptive RFTL in the $k$-shift setting. \rev{The regret attained by our adaptive Algorithm~\ref{algo:adaptive} is denoted by ``CBCE'' (the blue curve with purple shadow), and the regret generated by the non-adaptive algorithm RFTL is denoted by the orange curve with red shadow. The lower solid line stands for the average value of regret over 10 random experiments and the upper line stands for the maximum.}}
\label{fig:non-adaptive-comparison}
% \vspace{-4mm}
\end{figure*}

\begin{figure*}[ht]
\centering
\subfigure[Path length setting, Algorithm~\ref{algo:md+exp}, $n=2$.]{
    \begin{minipage}{.47\linewidth}
    \centering
    \includegraphics[scale=0.47]{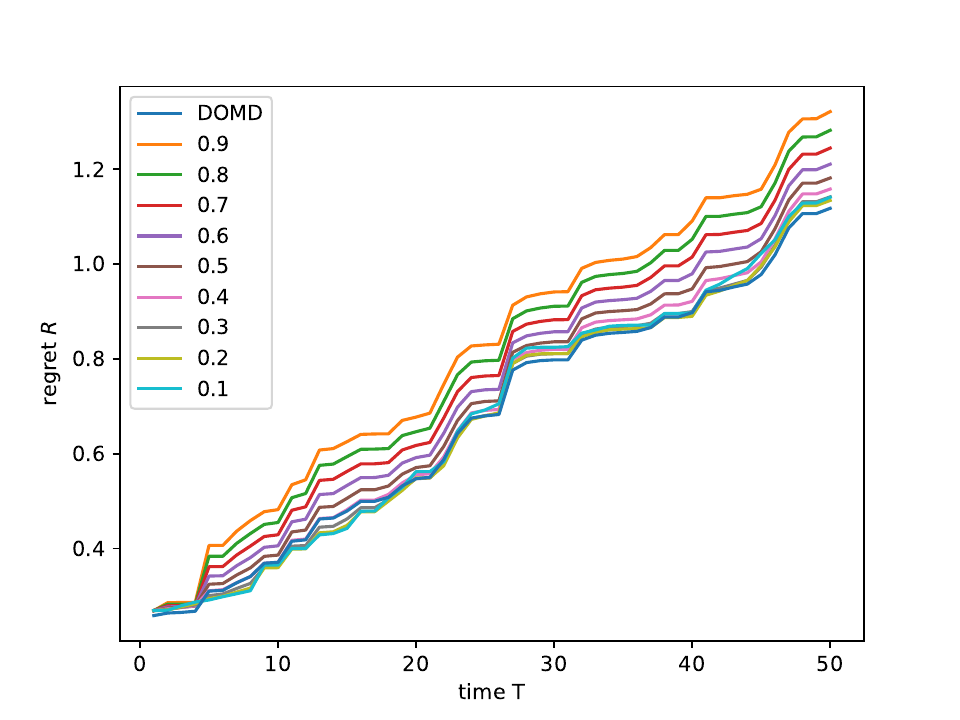}
    \end{minipage}
}
\subfigure[Path length setting, Algorithm~\ref{algo:md+exp}, $n=3$.]{
    \begin{minipage}{.47\linewidth}
    \centering
    \includegraphics[scale=0.47]{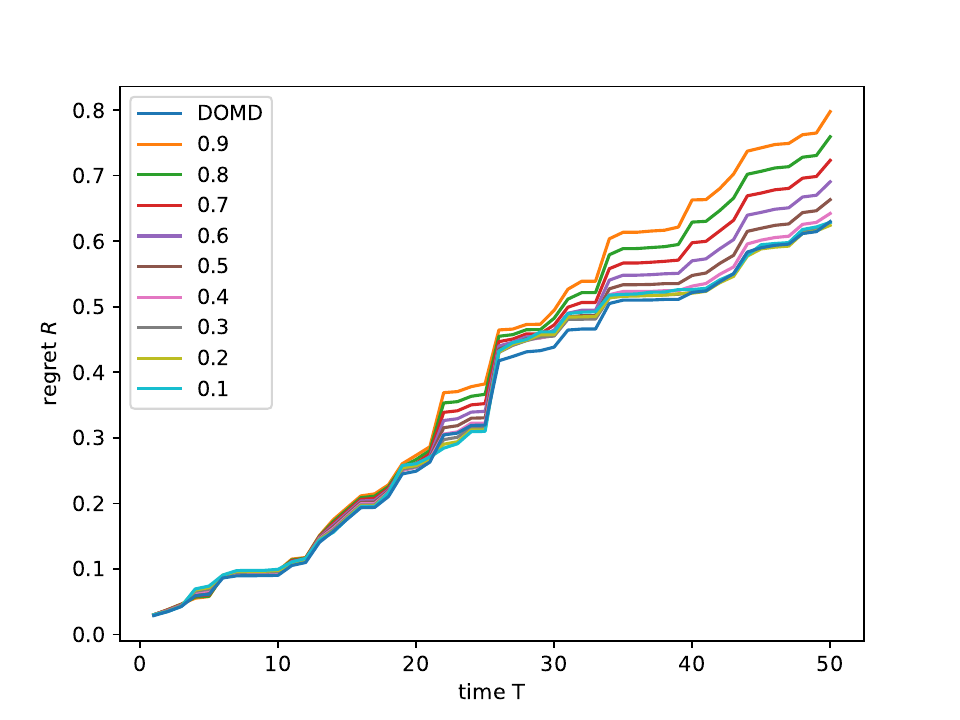}
    \end{minipage}
}
\subfigure[Path length setting, Algorithm~\ref{algo:adaptive}, $n=2$.]{
    \begin{minipage}{.47\linewidth}
    \centering
    \includegraphics[scale=0.47]{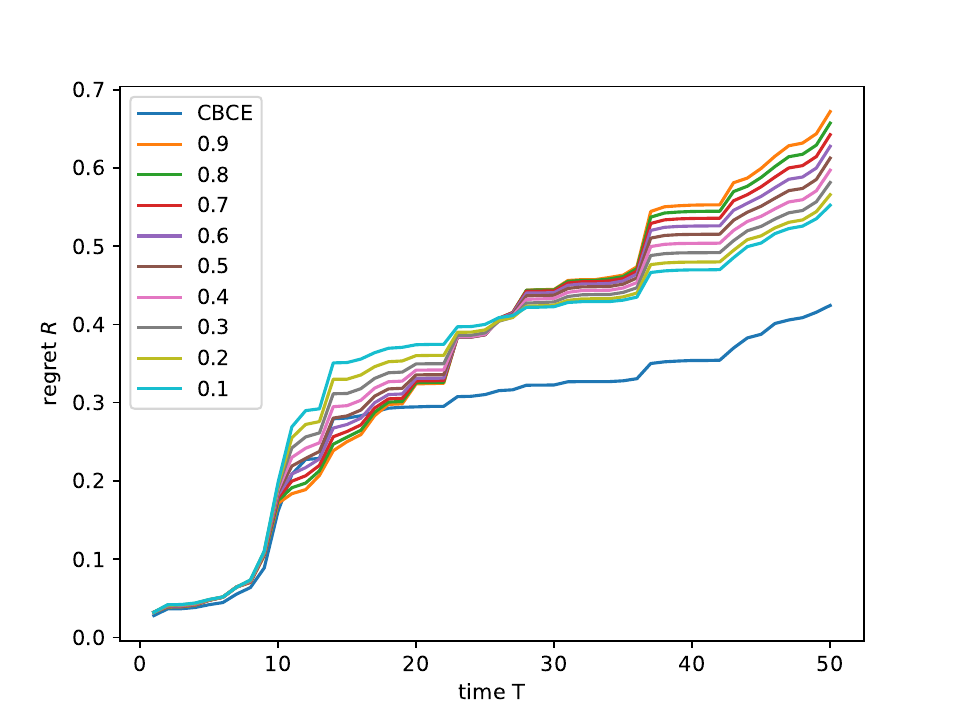}
    \end{minipage}
}
\subfigure[Path length setting, Algorithm~\ref{algo:adaptive}, $n=3$.]{
    \begin{minipage}{.47\linewidth}
    \centering
    \includegraphics[scale=0.47]{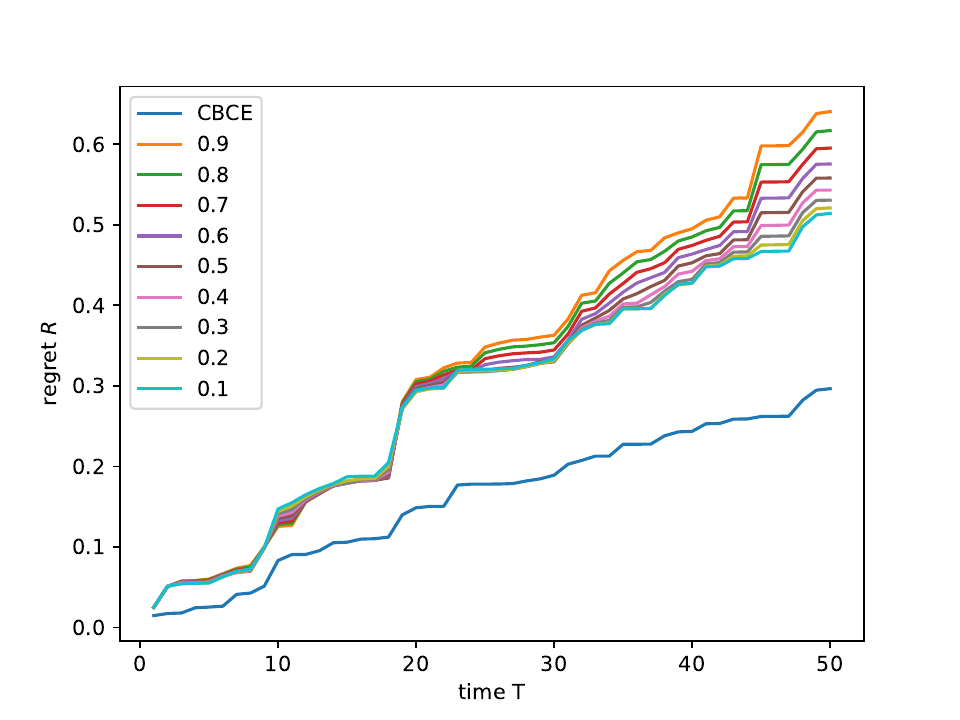}
    \end{minipage}
}
\caption{Comparison between our algorithms and non-adaptive RFTL in the path length setting. \rev{Our algorithms are denoted by ``DOMD'' / ``CBCE", and the non-adaptive algorithm RFTL with different learning rates $\eta$ are denoted by the value of $\eta$.}}
\label{fig:non-adaptive-comparison-path}
\end{figure*}

The advantage of our algorithms in the $k$-shift setting is presented in Figure~\ref{fig:non-adaptive-comparison}. In Subfigure(a), we plot the long term changes of the cumulative regret. The two curves have similar trends but CBCE achieves smaller regret. %Further experiments show this discrepancy could be near-linear. 
In Subfigure(b)-(f), we choose a small time scale $T =20$ to display the details. It is clearly shown that the distances between two curves grow larger as $T$ increases, which means our algorithm maintains the advantage in the long run. %We also conduct experiments under path length setting. Unlike in the $k$-shift setting, the advantage is not as salient. This may be attributed to the fact that in the path length setting, the quantum state changes more slowly, which is more amenable to the RFTL algorithm.

Figure~\ref{fig:non-adaptive-comparison-path} shows the advantage of our algorithms in the path length setting. In Subfigure(a) and (b), we plot the comparison between our Algorithm~\ref{algo:md+exp} (denoted by ``DOMD'') and the non-adaptive algorithm RFTL \rev{with different learning rates}. It can be seen that in a single run, our algorithm attains the minimum regret and is comparable to RFTL with the best learning rate. Indeed, given the similarity of RFTL and OMD, we expect RFTL with the optimal learning rate to do as well as DOMD in practice. However, since the optimal learning rate for OMD depends on the path length, it is unknown a priori, and non-adaptive methods require more tuning to achieve good performance. In Subfigure(c) and (d), we plot the comparison between our Algorithm~\ref{algo:adaptive} (denoted by ``CBCE'') and the non-adaptive RFTL with different $\eta$. There is a significant gap between the regret of CBCE and RFTL with the best learning rate from the grid search, demonstrating the interesting phenomenon that empirically, CBCE with adaptive regret guarantees also does well in the path length setting.

%\zl{it's possible that $P$ is too small to show the advantage of $\sqrt{P+1}$ over $P+1$}

%============================================================
\paragraph{Adaptive regret bound on the $k$-shift setting.}
We also conduct experiments to verify our adaptive regret bound $O(\sqrt{knT\log T})$ in the $k$-shift setting (\rev{Theorem~\ref{thm:k_shift regret}}). Results are shown in Figure~\ref{fig:k-shift}.
\begin{figure}[H]
\centering
\hspace{-2em}
\subfigure[Average Regret $R_{\text{aver}}$]{
\begin{minipage}{0.47\linewidth}
    \centering
    \includegraphics[scale=0.45]{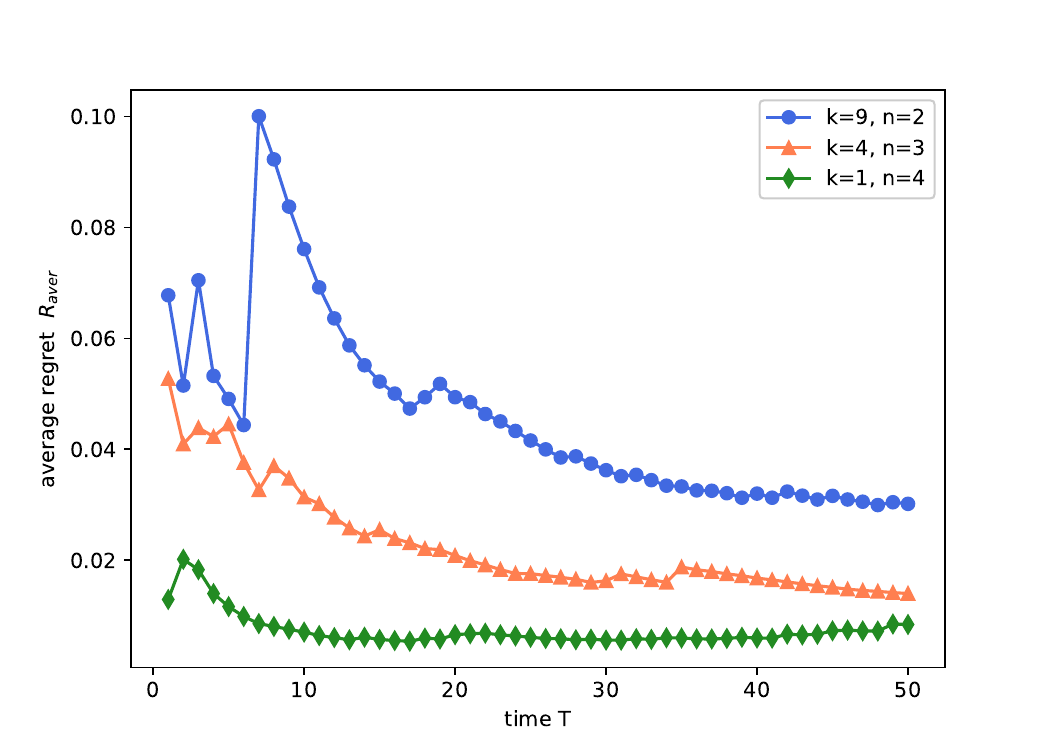}
\end{minipage}
}
\hspace{1em}
\subfigure[Ratio $C= R_{k\text{-shift}}/\sqrt{knT\log T}$]{
\begin{minipage}{0.47\linewidth}
    \centering
    \includegraphics[scale=0.45]{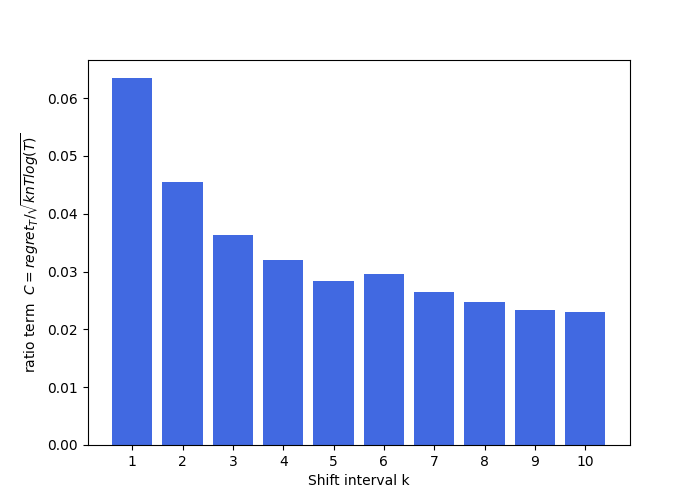}
    \end{minipage}
}
\caption{Adaptive regret bounds in the $k$-shift setting.}
\label{fig:k-shift}
\vspace{-1mm}
\end{figure}
%\zl{can we plot a histogram of the regret constant $\textbf{regret}/\sqrt{I}$ on the k intervals, in the k-shift setting.}

In Subfigure(a), we plot the average regret $R_{\text{aver}}=\text{regret}_T /T$ as $T$ grows. Every data-point was executed by 100 random experiments and taking the maximum regret, since regret is an upper bound. Under different parameter choices $(k,n)$, $R_{\text{aver}}$ always converges to 0 when $T$ increases, implying a sublinear regret $R_{k\text{-shift}}$. More concretely, curves with larger $k$ has larger $R_{\text{aver}}$ and converges more slowly, consistent with the intuition that more shifts make the learning task more difficult.

In Subfigure(b), to verify the precise bound we plot the ratio $C = \text{regret}_T / \sqrt{knT\log T}$. We choose $n=2, k=10, T=200$, and repeat the experiments 50 times randomly. We then plot the maximum of the regret constant $C$ on every interval. As the figure shows, the ratio is limited by a constant $C \leq 0.07$. This strongly supports our upper bound $O(\sqrt{knT\log T})$ on $\text{regret}_T$  with every fixed $n$. %Strictly speaking, we need to enumerate all possible $\rho$ and $E$ to prove the validity of the upper bound. However, the state space is too large to be traversed with limited computing power.  Therefore,
%We repeat the experiments 100, 200, and 500 times randomly to compare the differences in the upper bounds obtained. As shown in the figure, the three curves interweave at a later stage, indicating that 200 (or even 100) random repetitions of the experiment are sufficient to convincingly cover the vast majority of possible scenarios.

%============================================================
\vspace{-1mm}
\paragraph{Dynamic regret bound on the path length setting.}
In addition, we conduct experiments in terms of average regret $R_{\text{aver}}$ and ratio term $C_{\text{path}} =R_{\text{path}}/\sqrt{T(n+\log T)\mathcal{P}}$ to testify our dynamic regret bound $O(L\sqrt{T(n+\log T)\mathcal{P}})$ in the path length setting (Theorem~\ref{thm:dynamic}). Results are shown in Figure~\ref{fig:path-length}.
\begin{figure}[htbp]
\centering
\hspace{-2em}
\subfigure[Average Regret $R_{\text{aver}}$]{
    \begin{minipage}{0.47\linewidth}
    \centering
    \includegraphics[scale=0.45]{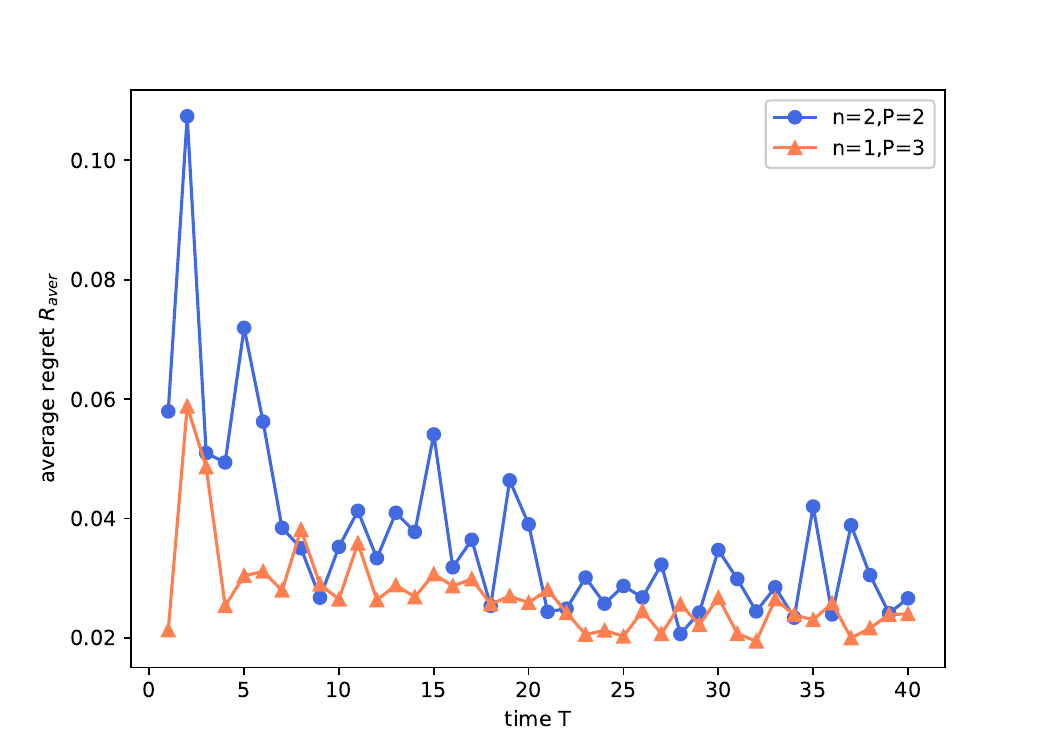}
    \end{minipage}
}
\hspace{1em}
\subfigure[Ratio $C=R_{\text{path}}/\sqrt{T(n+\log T)\mathcal{P}}$]{
    \begin{minipage}{0.47\linewidth}
    \centering
    \includegraphics[scale=0.45]{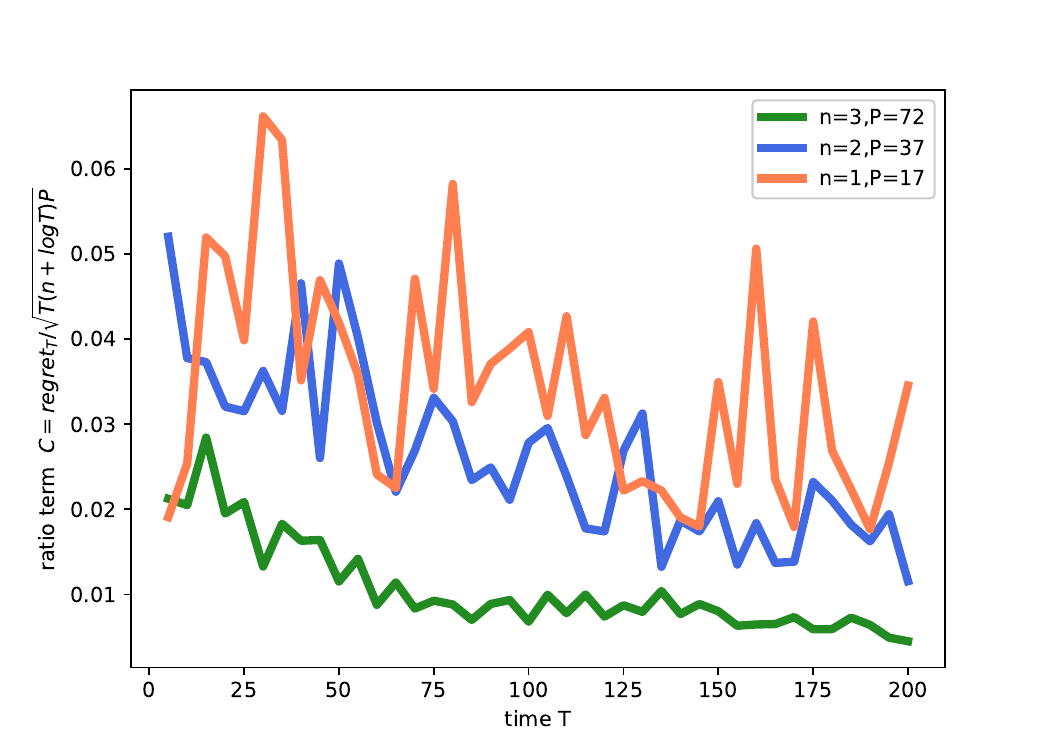}
    \end{minipage}
}
\caption{Dynamic regret bound on the path length setting.}
\label{fig:path-length}
\end{figure}

\vspace{-2mm}
In Subfigure(a), we %assign various values for parameters $n$ and $\mathcal{P}$, and
plot the change of average regret $R_{\text{aver}}=\text{regret}_T /T$. We observe that as $T$ grows, $R_{\text{aver}}$ converges to 0 with different pairs of $(n,\mathcal{P})$, indicating that $R_{\text{path}}$ is sublinear with respect to $T$. In addition, the blue curve representing the average regret of a 2-qubit quantum state appears higher than the orange curve representing the average regret of a 1-qubit quantum state, consistent with the fact that our regret bound depends polynomially on the number of qubits.

In Subfigure(b), we choose different scales of $T$ to demonstrate how the ratio $C$ evolves in the long term. For every fixed $T$, we execute 100 random experiments and take the maximum ratio term $C$. Although there are noticeable fluctuations, it can be seen that the curve is non-increasing and only fluctuates within a fixed interval of 0 to 0.065, which shows that $C$ has a constant upper bound $C'=0.065$. In conclusion,  $R_{\text{path}}/\sqrt{T(n+\log T)\mathcal{P}}$ is within the range of a small positive constant.

\section{Conclusions}
In this paper, we consider online learning of changing quantum states. We give algorithms with adaptive and dynamic regret bounds that are polynomial in the number of qubits and sublinear in the number of measurements, and verify our results with numerical experiments. 

\rev{Our work leaves a couple of open questions for future investigation:
\begin{itemize}
\item Lower bound in $T$ and path length $P$ are known in \cite{zhang2018adaptive}, but there is no lower bound in $n$ to our knowledge. Can we prove an $\Omega(\sqrt{n})$ lower bound for dynamic regret or adaptive regret of online learning of quantum states? We noticed a recent work \cite{raza2024online} that proposed a lower bound in terms of $n$ for quantum channels. It would be intriguing in future work to explore whether these techniques can be adapted to the quantum state scenario.
\item Although our algorithms do not care how the loss function is revealed, in practice, a standard way is to measure several copies of the state at each step, so the copies need to follow the same dynamics. Accounting for different state evolution for different state copies is left as future work.
\end{itemize}
}
 
\section*{Acknowledgments}
TL, XW, and RY were supported by the National Natural Science Foundation of China (Grant Numbers 92365117 and 62372006), and The Fundamental Research Funds for the Central 
Universities, Peking University.

% \small
\bibliographystyle{quantum}
\bibliography{reference}

% \onecolumn

\newpage
\appendix

\section{Proof of Dynamic Regret Bounds for Quantum Tomography}\label{append:proof-thm-dynamic}
We prove Theorem \ref{thm:dynamic} in this section. 

We use the online mirror descent (OMD) algorithm (similar to Algorithm 1 in \citet{aaronson2018online}) to replace the online gradient descent (OGD) expert in \citet{zhang2018adaptive}. We denote $G_R$ to be an upper bound on the spectral norm of $\nabla_t= \ell’_t (\Tr(E_t x_t)) E_t$, and $D_R=\sqrt{\max_{x,y\in \mathcal{K}} \{R(x)-R(y)\}}$ to be the diameter of $\mathcal{K}$ with respect to the function $R$. The main motivation is that the dimension now is $2^n$ and we need logarithmic dependence on dimension for our regret bound, where $\ell_2$ norm (used by OGD) is not the most appropriate choice and fails to control $G_R$.

Instead, the OMD algorithm with von Neumann entropy has a polynomial upper bound on both $D_R$ and $G_R$, and the only question left is how to incorporate the path length into the analysis of OMD to achieve dynamic regret. We observe that such incorporation is natural, due to the essential similarity between OGD and OMD. We bound the path distance $\|\varphi_t-\varphi_{t+1}\|$ in nuclear norm, and bound the gradient of the regularization $\nabla R(x)$ by spectral norm.

A potential issue is that the von Neumann entropy contains logarithmic terms, which might cause the gradient to explode when $\lambda_k$ is very small. To mitigate this issue, we mix $x$ with $\frac{I}{T 2^n}$: $\tilde{x}=(1-\frac{1}{T})x+\frac{I}{T 2^n}$, for both $x_t$ and $\varphi_t$. Notice that this  mixing does not hurt the regret bound: mixing such a small fraction of the identity only increases the regret by  $O(LT \frac{1}{T})=O(L)$, since the losses are $L$-Lipschitz. In addition, the path length will only decrease, since we are working over a smaller domain. Henceforth we will consider the modified $\mathcal{K}=(1-\frac{1}{T})C_n+\frac{1}{T 2^n} I$ throughout the proof. 

Our proof uses tools in convex analysis for Hilbert spaces. Since all matrices used in Algorithm 
\ref{algo:algomd} are Hermitian (by Proposition \ref{lem:pd}), it suffices to consider the Hilbert space of Hermitian matrices equipped with the trace inner product. Note that this is a real Hilbert space: the vector space of Hermitian matrices only allows scalar multiplication over the field of reals.

We first present some basic results for the von Neumann Entropy. For completeness, we give the following definitions from \citet{operator_theory}; for a more comprehensive treatment of the topic of analysis in Hilbert spaces, see also \citet{operator_theory}.

% \begin{definition}[Proper]
% A function $f: \mathcal{X}\rightarrow [-\infty, \infty]$ is proper if $-\infty \notin f(\mathcal{X})$, and $\text{dom} f\neq \varnothing$.
% \end{definition}

\begin{definition}[Gateaux differentiability]
Let $C$ be a subset of a Hilbert space $\mathcal{H}$, and  $f:C\rightarrow [-\infty, \infty]$. Let $x \in C$ such that for each $y\in \mathcal{H}$, $\exists \alpha > 0$ such that $[x, x+\alpha y] \subset C$.Then $f$ is Gateaux differentiable at $x$ if there exists a bounded linear operator $Df(x)$ called the Gateaux derivative of $f$ at $x$, such that 
$$
D f(x)y = \lim_{\alpha \downarrow 0} \frac{f(x + \alpha y) - f(x)}{\alpha}.
$$
\end{definition}

\begin{definition}[Directional derivatives]
For a Hilbert space $\mathcal{H}$, let $f: \mathcal{H}\rightarrow (-\infty, \infty]$ be proper, and $x\in \text{dom} f,y\in\mathcal{H}$.The directional derivative of $f$ at $x$ in the direction $y$ is
$$
f'(x;y) = \lim_{\alpha \downarrow 0} \frac{f(x + \alpha y) - f(x)}{\alpha},
$$
if the limit exists on the extended real line. 
\end{definition}

It is well-known that $f$ is Gateaux differentiable at $x$ if and only if all directional derivatives at $x$ exist and they form a bounded linear operator. By the Riesz-Frechet representation theorem, let $\langle \cdot, \cdot \rangle$ denote the Hilbert space inner product, there exists a unique $\nabla f(x) \in \mathcal{H}$ such that for each $y\in \mathcal{H}$, 
$$
f'(x;y) = \langle y, \nabla f(x)\rangle.
$$

\begin{lemma}[Lemma VI.4 in Ref.~\cite{gateaux}]\label{lem:gateaux}
The von Neumann entropy $R(x) = \Tr(x\log x)$ for Hermitian PD matrices $x$ is Gateaux differentiable, and the Gateaux derivative is 
$$
\nabla R(x) = \log x + I.
$$
\end{lemma}

The quantum analog of the classical Bregman divergence is therefore defined as
$$
B_R(x\|y) = R(x) - R(y) - \nabla R(y) \bullet (x-y).
$$
The above expression is also known as the quantum relative entropy of $x$ with respect to $y$, the quantum information divergence, and the von Neumann divergence \citep{quantum_divergence}.

\begin{proposition}\label{lem:pd}
The variable $y_t$ in Algorithm~\ref{algo:algomd} is always Hermitian PD. 
\end{proposition}
\begin{proof}
By the definition of $y_t$ and $\nabla R(\cdot)$, we have
$
\log(y_{t+1}) = \log(x_{t-1}) - \eta \nabla_{t-1}.
$
Recall that the loss functions are of the form $\ell_t(\rho)  = \ell(E_t \bullet \rho)$. Since $E_t$ is Hermitian, $\nabla_{t-1} = \ell'(\cdot)  E_{t-1}$ is also Hermitian. The lemma follows by definition.
% Moreover, since $x_{t-1}$ is Hermitian, we can write $x_{t-1} = UQU^\dagger$, where $Q$ is a diagonal matrix. Therefore $\log(x_{t-1}) = U\log(Q)U^\dagger$, where $\log(Q)$ is the diagonal matrix with $\log(Q_{ii})$ on the diagonal. Therefore $\log(y_t)$ is Hermitian and admits decomposition $\log(y_t) = V\Sigma V^\dagger$, where $\Sigma$ is diagonal. We conclude that $y_t = V\exp(\Sigma)V^\dagger$ is Hermitian PD, where $\exp(\Sigma)$ is the diagonal matrix with $\exp(\Sigma_{ii})$ on the diagonal.
\end{proof}

\begin{proposition}\label{prop:prop_2}
For all Hermitian PSD matrices $x, y$, $B_R(x\|y)\in \mathbb{R}$. For all $x\in \mathcal{K}$ and all Hermitian PD $y$, $B_R(x\|y)\ge 0$, and for all $\varphi_1\in \mathcal{K}$ and $x_1$ as defined in Algorithm~\ref{algo:algomd}, $B_R(\varphi_1\|x_1)\le \max_{x, y \in \mathcal{K}} \left\{R(x) - R(y)\right\}$.
\end{proposition}

\begin{proof}
The first claim follows from the definition of $R$ and $\nabla R$. Since both $x, y$ are both positive definite, by Fact 3 in \citet{quantum_divergence} (Klein's inequality), $B_R(x\|y)\ge 0$. For the last claim, $\varphi_1 \in \mathcal{K}$ be any element, to show $B_R(\varphi_1\|x_1) = R(\varphi_1) - R(x_1) - \nabla R(x_1) \bullet (\varphi_1 - x_1) \le \max_{x, y\in \mathcal{K}} R(x) - R(y)$, we only need to show $\nabla R(x_1) \bullet (\varphi_1 - x_1) \ge 0$. By direct computation,
\begin{align*}
    \nabla R(x_1) \bullet (\varphi_1 - x_1) = (I + \log(2^{-n} I))\bullet (\varphi_1 - 2^{-n}I) 
    = (-n\log(2) I)\bullet (\varphi_1 - 2^{-n}I) = 0,
\end{align*}
since $\Tr(\varphi_1) = \Tr(2^{-n}I) = 1$.
\end{proof}

% For the second claim, we consider the set of PSD Hermitian matrices as a Hilbert space with a topology induced by the Frobenius norm (isomorphic to $\mathbb{R}^{4^n}$). \xc{In this space, the inner product is the trace inner product between matrices. Observe that that interior of PSD Hermitian matrices is the set of PD Hermitian matrices. (need proof) Moreover, $R(x)$ is strictly convex (need citation), proper (need definition), and Gateaux differentiable on the set of PD Hermitian matrices. } Note that $R(x)$ is strictly convex, proper, and Gateaux differentiable on the interior of the set of PSD Hermitian matrices. Since the set of PSD Hermitian matrices is exactly the set of PD Hermitian matrices, by Exercise 17.5 in \citet{operator_theory} \xc{to add proof}, we have that $B_R(x\|y) \ge 0$ for $x\in \mathcal{K}$, $y$ being Hermitian PD. 

\begin{proposition}[Euler's inequality]\label{prop:euler} Let $x_t, y_t$ be as defined in Algorithm \ref{algo:algomd}, then for any $\varphi \in \mathcal{K}$, we have 
$$B_R(\varphi\|x_t) \le B_R(\varphi\|y_t).
$$
\end{proposition}
\begin{proof}
For $x\in \mathcal{K}$, 
$$
B_R(x\|y) = \Tr(x\log x - x\log y - x + y).
$$
By Lemma \ref{lem:gateaux}, $\Tr(x
\log x)$ is Gateaux differentiable on the set of positive definite Hermitian matrices. Since $\Tr(x\log y)$ and $\Tr(x)$ are both Gateaux differentiable, $B_R(x\|y)$ is also Gateaux differentiable as a function of $x$. By definition, $x_t = \text{argmin}_{x\in \mathcal{K}} B_R(x\|y_t)$, and therefore for $\alpha \in (0, 1]$ and any $\varphi\in \mathcal{K}$,
$$B_R(x_t\|y_t) \le B_R((1-\alpha)x_t + \alpha \varphi\|y_t).
$$
\rev{Dividing by $\alpha$ and} taking $\alpha \downarrow 0$, we have 
$$
\nabla B_R(x_t\|y_t)\bullet (\varphi - x_t) \ge 0.
$$
Expanding the derivative, $(\nabla R(x_t) - \nabla R(y_t)) \bullet (\varphi - x_t) \ge 0$. Observe that we have the decomposition
$$
    ( \nabla R(y_t)-\nabla R(x_t))\bullet ( \varphi -x_t ) =B_R(x_t||y_t)-B_R(\varphi||y_t)+B_R(\varphi||x_t), \notag
$$
which means $B_R(\varphi||y_t)\ge B_R(x_t||y_t)+B_R(\varphi||x_t)\ge B_R(\varphi||x_t)$, because $B_R(x_t||y_t)\ge 0$ by Proposition \ref{prop:prop_2}.
\end{proof}

We now prove the main lemma which gives a dynamic regret bound of the OMD algorithm, when the learning rate $\eta$ is chosen optimally.

\begin{lemma}
\label{thm:dynamic_regret}
Assume the path length $\mathcal{P}=\sum_{t=1}^{T-1} \|\varphi_t-\varphi_{t+1}\|_\rev{1}\ge 1$ where $\|\cdot\|_\rev{1}$ is the nuclear norm, the dynamic regret of Algorithm~\ref{algo:algomd} is upper bounded by $O(L \sqrt{T(n+\log T) \mathcal{P}})$, if we choose $\eta=\sqrt{\frac{\mathcal{P}(n+\log T)}{T L^2}}$.
\end{lemma}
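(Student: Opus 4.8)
The plan is to mimic the standard online mirror descent regret analysis, but track a time-varying comparator $\varphi_t$ and account for the path length. Fix the learning rate $\eta$ for now. For each round $t$, I would start from the one-step inequality for the loss of the linearized problem: since $\ell_t$ is convex, $\ell_t(\Tr(E_t x_t)) - \ell_t(\Tr(E_t\varphi_t)) \le \nabla_t \bullet (x_t - \varphi_t)$, where $\nabla_t = \ell_t'(\Tr(E_t x_t))E_t$ is Hermitian. The next step is the key OMD identity: using $\nabla R(y_{t+1}) = \nabla R(x_t) - \eta\nabla_t$, write
\begin{align*}
\eta\, \nabla_t \bullet (x_t - \varphi_t) = (\nabla R(x_t) - \nabla R(y_{t+1})) \bullet (x_t - \varphi_t),
\end{align*}
and expand this via the three-point identity for Bregman divergences,
\begin{align*}
(\nabla R(x_t) - \nabla R(y_{t+1})) \bullet (x_t - \varphi_t) = B_R(\varphi_t \| x_t) - B_R(\varphi_t \| y_{t+1}) + B_R(x_t \| y_{t+1}).
\end{align*}
Here I would invoke the lemmas already proved in this appendix: all the Bregman divergences involved are real-valued (since $y_{t+1}$ is Hermitian PSD by Lemma~\ref{lem:psd}, and the arguments are Hermitian with the relevant trace identities), and the generalized Pythagorean inequality — proved in the style of \cite{aaronson2018online} for Hermitian matrices, using that $x_{t+1}$ is the Bregman projection of $y_{t+1}$ onto $\mathcal{K}$ — gives $B_R(\varphi_t \| y_{t+1}) \ge B_R(\varphi_t \| x_{t+1})$. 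So the per-round bound becomes
\begin{align*}
\eta(\ell_t(\Tr(E_t x_t)) - \ell_t(\Tr(E_t\varphi_t))) \le B_R(\varphi_t \| x_t) - B_R(\varphi_t \| x_{t+1}) + B_R(x_t \| y_{t+1}).
\end{align*}

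Next I would sum over $t = 1,\dots,T$. The term $\sum_t B_R(x_t\|y_{t+1})$ is the ``stability'' term: using $\nabla R(y_{t+1}) - \nabla R(x_t) = -\eta\nabla_t$ and the explicit form $y_{t+1} = \exp(\log x_t - \eta\nabla_t)$, I would bound each $B_R(x_t\|y_{t+1})$ by $O(\eta^2 G_R^2)$ where $G_R = \max_t \|\nabla_t\|_{\mathrm{op}} \le L$ — this is exactly the step flagged in the paper where the mean value theorem fails and one instead uses properties of the matrix exponential (e.g. the Golden–Thompson or a direct second-order Taylor estimate for $\Tr$ of matrix exponentials). The telescoping-with-drift term is the genuinely new piece: $\sum_{t=1}^T (B_R(\varphi_t\|x_t) - B_R(\varphi_t\|x_{t+1}))$ does not telescope cleanly because the comparator moves, so I would regroup it as
\begin{align*}
B_R(\varphi_1\|x_1) + \sum_{t=2}^T (B_R(\varphi_t\|x_t) - B_R(\varphi_{t-1}\|x_t)) - B_R(\varphi_T\|x_{T+1}),
\end{align*}
drop the last (nonnegative) term, bound $B_R(\varphi_1\|x_1) \le D_R^2 = O(n + \log T)$ by the second lemma, and bound each difference $B_R(\varphi_t\|x_t) - B_R(\varphi_{t-1}\|x_t)$ in terms of $\|\varphi_t - \varphi_{t-1}\|_*$.

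The main obstacle — and the technically delicate step — is bounding $B_R(\varphi_t\|x_t) - B_R(\varphi_{t-1}\|x_t)$. Writing out the Bregman divergence for the von Neumann entropy, this difference equals $\Tr(\varphi_t\log\varphi_t) - \Tr(\varphi_{t-1}\log\varphi_{t-1}) - \log x_t \bullet (\varphi_t - \varphi_{t-1})$ (the $I$-terms cancel because both have trace one). The last term is at most $\|\log x_t\|_{\mathrm{op}}\|\varphi_t - \varphi_{t-1}\|_* = O((n + \log T)\|\varphi_t-\varphi_{t-1}\|_*)$, using that eigenvalues of $x_t\in\mathcal{K}$ are bounded below by $\frac{1}{T2^n}$. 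The difference of entropies, $\Tr(\varphi_t\log\varphi_t) - \Tr(\varphi_{t-1}\log\varphi_{t-1})$, is where $\varphi_t$ and $\varphi_{t-1}$ need not be simultaneously diagonalizable; I would control it by the Audenaert continuity bound for the von Neumann entropy \cite{Audenaert_2007}, which gives $|S(\varphi_t) - S(\varphi_{t-1})| = O(\|\varphi_t - \varphi_{t-1}\|_* \cdot (n + \log T))$ (again exploiting the eigenvalue floor from the mixing with $I/(T2^n)$, so the log-factor is only logarithmic). Collecting everything:
\begin{align*}
\eta\sum_{t=1}^T (\ell_t(\Tr(E_t x_t)) - \ell_t(\Tr(E_t\varphi_t))) = O((n+\log T)(1 + \mathcal{P})) + O(\eta^2 L^2 T),
\end{align*}
so the dynamic regret is $O\!\big(\tfrac{(n+\log T)\mathcal{P}}{\eta} + \eta L^2 T\big)$ using $\mathcal{P}\ge 1$; choosing $\eta = \sqrt{\mathcal{P}(n+\log T)/(TL^2)}$ balances the two terms and yields $O(L\sqrt{T(n+\log T)\mathcal{P}})$. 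Finally I would check this $\eta$ satisfies $\eta < \frac{1}{2L}$ for the relevant parameter regime (which holds once $T$ is large enough relative to $\mathcal{P}(n+\log T)$), and note the $O(L)$ additive cost of mixing with $I/(T2^n)$ is absorbed.
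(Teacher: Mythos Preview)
Your proposal follows essentially the same route as the paper: convexity, the three-point Bregman identity, generalized Pythagoras for the projection onto $\mathcal{K}$, the bound $B_R(x_t\|y_{t+1})\le \tfrac{1}{2}\eta^2 G_R^2$ via Golden--Thompson and the matrix-exponential expansion, and then splitting the drift term $B_R(\varphi_t\|x_{t+1})-B_R(\varphi_{t+1}\|x_{t+1})$ into a $\nabla R(x_{t+1})\bullet(\varphi_t-\varphi_{t+1})$ piece (controlled by the eigenvalue floor on $x_{t+1}$) and an entropy-difference piece $R(\varphi_t)-R(\varphi_{t+1})$.

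The one imprecision is your handling of the entropy difference. The Fannes--Audenaert inequality gives $|R(\varphi_t)-R(\varphi_{t+1})|\le \tfrac{n}{2}\|\varphi_t-\varphi_{t+1}\|_* + H\!\big(\tfrac{1}{2}\|\varphi_t-\varphi_{t+1}\|_*\big)$, and the binary-entropy term $H(p)$ is \emph{not} $O(p\,(n+\log T))$ per step; the eigenvalue floor on $\varphi_t$ plays no role in Audenaert's bound. The paper handles this with a separate elementary lemma $H(p)\le \lambda p + \log\frac{e^\lambda+1}{e^\lambda}$ and takes $\lambda=\log T$, so that after summing over $t$ the additive constants contribute $O(1)$ and the linear parts contribute $O(\mathcal{P}\log T)$. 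Alternatively, since both $\varphi_t,\varphi_{t+1}$ lie in $\mathcal{K}$, you could bypass Audenaert and use that $R$ is $(1+n\log 2+\log T)$-Lipschitz in trace norm on $\mathcal{K}$ (via the gradient bound along the segment joining them); that actually matches the justification you wrote, but it is a Lipschitz argument, not the Audenaert bound. Either fix closes the gap and the rest of your argument goes through unchanged.
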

\begin{proof}
We follow the standard analysis of OMD, assuming $\ell_t$ are $L$-Lipschitz. By convexity of loss $\ell_t$, we have that
\begin{align}
   \ell_t (\Tr(E_t x_t))-\ell_t(\Tr(E_t \varphi_t))
   \le \ell_t'(\Tr(E_tx_t))(\Tr(E_tx_t) - \Tr(E_t \varphi_t))
   \rev{=} \nabla_t \bullet (x_t-\varphi_t). \label{eq:convexity}
\end{align}

The following property of Bregman divergences follows from the definition:
for any Hermitian PSD matrices $x, y, z$,
\begin{align}
    (x-y) \bullet (\nabla R(z)- \nabla R(y)) = B_R(x\|y)- B_R(x\|z)+B_R(y\|z).\label{eq:bregman}
\end{align}
Now, take $x = \varphi_t$, $y = x_t$, and $z = y_{t+1}$ and combine \eqref{eq:convexity} and \rev{ \eqref{eq:bregman}}. Writing $\ell_t(x) = \ell_t(\Tr(E_t x))$, we have
\begin{equation}
\begin{split}
    \label{eqn:loss_ineq}
    & \ell_t(x_t)-\ell_t(\varphi_t) \notag \\
    \le & \frac{1}{\eta}(B_R(\varphi_t\|x_t)- B_R(\varphi_t\|y_{t+1})+B_R(x_t\|y_{t+1})) \notag \\
    \le & \frac{1}{\eta}(B_R(\varphi_t\|x_t)- B_R(\varphi_t\|x_{t+1})+B_R(x_t\|y_{t+1})) \notag \\
    = & \frac{1}{\eta}\big(B_R(\varphi_t\|x_t)-B_R(\varphi_{t+1}\|x_{t+1})+B_R(\varphi_{t+1}\|x_{t+1}) 
        - B_R(\varphi_t\|x_{t+1})+B_R(x_t\|y_{t+1})\big),
    \notag
\end{split}
\end{equation}

where the second inequality is due to Proposition \ref{prop:euler}.
% Using Euler's inequality, because $x_t$ is the minimizer of the function $B_R(x||y_t)$, the gradient of $B_R(x||y_t)$ at $x_t$
% is equal to $\nabla R(x_t)-\nabla R(y_t)$ and must satisfy
% $$ ( \nabla R(x_t)-\nabla R(y_t))\bullet ( \varphi -x_t )\ge 0$$ 
% for any $\varphi \in \mathcal{K}$ since $\mathcal{K}$ is a convex subset of the set of PSD Hermitian matrices which is a Hilbert space, and $B_R(x\|y_t)$ is Gateaux differentiable in $x$. We have the decomposition
% $$( \nabla R(y_t)-\nabla R(x_t))\bullet ( \varphi -x_t )=B_R(x_t||y_t)-B_R(\varphi||y_t)+B_R(\varphi||x_t)$$
% which means $B_R(\varphi||y_t)\ge B_R(x_t||y_t)+B_R(\varphi||x_t)\ge B_R(\varphi||x_t)$ because $B_R(x_t||y_t)\ge 0$.

The following lemma gives an upper bound on one of the quantities in Eq.~(\ref{eqn:loss_ineq}), which we will use next in the proof. In vanilla OMD, this quantity is upper bounded using the fact that the Bregman divergence can be interpreted as a distance measured by a certain local norm; however, under complex domain, this fact is no longer immediate from the Mean Value Theorem. We instead prove the upper bound by using properties of the dual space updates $y_{t+1}$.
\begin{lemma}
Let $G_R$ be an upper bound on $\|\nabla_t\|$, then $B_R(x_t\|y_{t+1}) \le \frac{1}{2}\eta^2 G_R^2.$
\end{lemma}
\begin{proof}
First, note that by definition, $y_{t+1}$ satisfies
\begin{equation}\label{eqn:dual_update}
\log(y_{t+1}) = \log(x_t) - \eta \nabla_t.
\end{equation}
Then we have $
y_{t+1} = \exp(\log(x_t) - \eta \nabla_t),
$
and by the Golden-Thompson inequality,
\begin{align}
    \Tr(y_{t+1}) \le \Tr(\exp(\log(x_t))\exp(-\eta\nabla_t))  \le \Tr(x_t\exp(-\eta\nabla_t)).\label{eqn:golden_thompson}
\end{align}

Expanding the Bregman divergence, we have
\begin{align*}
    B_R(x_t\|y_{t+1})
    = & \Tr(x_t\log(x_t)) - \Tr(y_{t+1}\log(y_{t+1})) - (I + \log(y_{t+1})) \bullet (x_t - y_{t+1})\\
    =&\Tr(x_t\log(x_t)) - \Tr(x_t\log(y_{t+1})) - 1 + \Tr(y_{t+1})\\
    =& \Tr(x_t\log(x_t)) - \Tr(x_t(\log(x_t) - \eta \nabla_t)) - 1 + \Tr(y_{t+1}) \tag{by \eqref{eqn:dual_update}}\\
    =& \eta\Tr(x_t\nabla_t) -1 + \Tr(y_{t+1}) \\
    \le & \eta\Tr(x_t\nabla_t) -1 +\Tr(x_t\exp(-\eta\nabla_t)). \tag{by \eqref{eqn:golden_thompson}}
\end{align*}
By definition, the matrix exponential can be written as an infinite sum,
$$\exp(-\eta \nabla_t) = \sum_{k=0}^\infty \frac{1}{k!}(-\eta \nabla_t)^k.
$$
Using this expression in the last term of the inequality,
\begin{align*}
    \Tr(x_t\exp(-\eta\nabla_t)) = 1 - \eta\Tr(x_t\nabla_t) + \frac{1}{2} \eta^2 \Tr(x_t\nabla_t^2) + \sum_{k=3}^\infty \frac{1}{k!} \Tr(x_t(-\eta \nabla_t)^k).
\end{align*}
Now we proceed to show that $\sum_{k=3}^\infty \frac{1}{k!} \Tr(x_t(-\eta \nabla_t)^k) \le 0$ if $\eta$ is sufficiently small.
\begin{align*}
    \sum_{k=3}^\infty \frac{1}{k!} \Tr(x_t(-\eta \nabla_t)^k) 
    = & \sum_{k=1}^\infty \frac{1}{(2k+1)!} \Tr(x_t(-\eta \nabla_t)^{2k+1}) + \frac{1}{(2k+2)!} \Tr(x_t(-\eta \nabla_t)^{2k+2})\\
    = & \sum_{k=1}^\infty  \frac{\eta^{2k+1}}{(2k+1)!} \Tr\big(x_t\big(  -\nabla_t^{2k+1} + \frac{\eta}{2k+2} \nabla_t^{2k+2}\big)\big).
\end{align*}
Let $\nabla_t = VQV^\dagger$ be the eigendecomposition of $\nabla_t$. Then
\begin{align*}
    -\nabla_t^{2k+1} + \frac{\eta}{2k+2} \nabla_t^{2k+2} = VQ^{2k+1}(-I +\frac{\eta}{2k+2}Q)V^\dagger.
\end{align*}

Since $\eta < \frac{1}{2L}$, and $\|\nabla_t\| \le L$,
\begin{align*}
    -I +\frac{\eta}{2k+2}Q \preceq 0, \text{and}\ \  \Tr\big(x_t\big(  -\nabla_t^{2k+1} + \frac{\eta}{2k+2} \nabla_t^{2k+2}\big)\big) \le 0.
\end{align*}

Because the summands are nonpositive, we conclude that $\sum_{k=3}^\infty \frac{1}{k!} \Tr(x_t(-\eta \nabla_t)^k) \le 0$. The lemma follows from applying the generalized Cauchy-Schwartz inequality (see~\citealp[Claim 14]{aaronson2018online} and~\citealp[Exercise IV.1.14, page 90]{bhatia1997matrix}),
\begin{align*}
    B_R(x_t\|y_{t+1}) \le \frac{1}{2} \eta^2 \Tr(x_t\nabla_t^2) \le \frac{\eta^2}{2} \|x_t\|_\rev{1}\|\nabla_t\|^2 \le \frac{\eta^2G_R^2}{2}.
\end{align*}
\end{proof}
Summing over $t$ and telescoping and observing any eigenvalue is lower bounded by $\frac{1}{2^n T}$, we have
\begin{align}
    \label{eqn:bregman_distance_ineq}
    & \textbf{Regret} \notag \\
    \le & \frac{1}{\eta}(B_R(\varphi_1\|x_1)-B_R(\varphi_T\|x_T)) \notag \\  
    & + \frac{1}{\eta} \sum_{t=1}^{T-1} |B_R(\varphi_t\|x_{t+1})-B_R(\varphi_{t+1}\|x_{t+1})| +\frac{1}{\eta} \sum_{t=1}^{T-1} B_R(x_t\|y_{t+1}) \notag \\
    \le & \frac{2(D_R^2+2n+\log T)}{\eta}+ \frac{\eta T G_R^2}{2} + \frac{1}{\eta} \sum_{t=1}^{T-1} |B_R(\varphi_t\|x_{t+1})-B_R(\varphi_{t+1}\|x_{t+1})| \notag \\
    \le & \frac{2(D_R^2+2n+\log T)}{\eta}+\frac{\eta T G_R^2}{2} + \frac{1}{\eta} \sum_{t=1}^{T-1} |R(\varphi_t)-R(\varphi_{t+1}) -\nabla R(x_{t+1})\bullet (\varphi_t-\varphi_{t+1})|\notag \\
    \le & \frac{2(D_R^2+2n+\log T)}{\eta}+\frac{\eta T G_R^2}{2} \notag \\  
    & + \frac{1}{\eta} \sum_{t=1}^{T-1} \|\nabla R(x_{t+1})\|\| (\varphi_t-\varphi_{t+1})\|_\rev{1} + \frac{1}{\eta} \sum_{t=1}^{T-1} |R(\varphi_t)-R(\varphi_{t+1})|.
    % &\le \frac{D_R^2}{\eta}+ \frac{2 L_R \mathcal{P}}{\eta}+\frac{\eta T G_R^2}{2},
\end{align}

% where $L_R$ is the Lipschitz constant of $R$, which is $D_R$ when $R$ is $\ell_2$.
From \citet{aaronson2018online} we already know $D_R=O(\sqrt{n})$ and $G_R=O(L)$ due to the use of the von Neumann entropy. We now compute an upper bound on $\|\nabla R(x_{t+1})\|$, suppose $x_{t+1} = VQV^\dagger$ is the eigendecomposition of $x_{t+1}$:
\begin{align*}
    \|\nabla R(x_{t+1})\| = \|I + \log(x_{t+1})\| = \|I + V\log(Q)V^\dagger\| \le n\log(2) + \log(T).
\end{align*}
The inequality is due to the fact that for $x_{t+1}\in \mathcal{K}$, the eigenvalues of $x_{t+1}$ are in $[\frac{1}{2^nT}, 1]$.

For the last term in the regret bound, the Fannes–Audenaert inequality \citep{Audenaert_2007} states that
\begin{align*}
    |R(\varphi_t) - R(\varphi_{t+1})| \le \frac{n}{2}\|\varphi_t - \varphi_{t+1}\|_\rev{1} + H(\frac{1}{2}\|\varphi_t - \varphi_{t+1}\|_\rev{1}),
\end{align*} 
where $H(p) = -(p\log(p) + (1-p)\log(1-p))$ is the Shannon entropy.

We try to bound $H(p)$ by a linear function $g(p)=\lambda p+c$. 
Formally we have the following lemma:
\begin{lemma}
For any $\lambda>0$, we have for $p \in (0,1)$,
$$ H(p) \leq \lambda p + \log \frac{e^\lambda+1}{e^{\lambda}}. $$ 
\end{lemma}
\begin{proof}
Take the derivative of $g(p)-H(p)$, we have that the minimal happens when $\lambda=\log \frac{1-p}{p}$ which is $p=\frac{1}{e^{\lambda}+1}$, and it's easy to verify that the minimal value of $g(p)-H(p)$ equals
$$\frac{\lambda}{e^{\lambda}+1}+c+\frac{1}{e^{\lambda}+1} \log \frac{1}{e^{\lambda}+1}+\frac{e^{\lambda}}{e^{\lambda}+1} \log \frac{e^{\lambda}}{e^{\lambda}+1} $$
by taking $c=\log \frac{e^{\lambda}+1}{e^{\lambda}}$, the minimal value equals zero and therefore $g(p)$ is an upper bound of $H(p)$.
\end{proof}
Choosing $\lambda=\log T$, we can upper bound the regret by
\begin{align}
    \label{eqn:dynamic_regret_bound}
\textbf{Regret}\le \frac{2(D_R^2+2n+\log T)}{\eta}+\frac{\eta T G_R^2}{2} +\frac{(n+\log T)\mathcal{P}}{\eta}+\frac{n\mathcal{P}+\log T \mathcal{P}+1}{2\eta}.
\end{align}

% We need to calculate $\nabla R(x)$. We make use of the useful property that $\frac{\partial \lambda_k}{x_{ij}}=v_k^{\top} e_i v_k^{\top} e_j$. Remember $B_R(x)=\sum_{k=1}^{2^n} \lambda_k \log (\lambda_k)$, we have that
% $$
% \nabla B_R(x)=\sum_{k=1}^{2^n} (1+\log (\lambda_k)) v_k v_k^{\top}
% $$
% where $v_k$ are unit eigenvectors. Consider the special case when $x$ is diagonal, then it becomes
% $$
% \nabla B_R(x)=\text{Diag}(1+\log (x_{11}),...,1+\log (x_{2^n 2^n}))
% $$
% whose spectral norm is bounded by $n+\log T$.

% For general PSD, Hermitian $x$, we know that all $v_k$ are orthogonal, therefore for some orthonormal matrix $Q$, it holds that $Q v_k=e_k$ for all $k$. Then we have

% $$
% \nabla B_R(x)=Q^{-1} \text{Diag}(1+\log (\lambda_1),...,1+\log (\lambda_{2^n }))   Q^{-1 \top}
% $$
% whose spectral norm is also bounded by $n+\log T$ because $1\ge \lambda_k\ge \frac{1}{T 2^n}$.

Choosing $\eta=\sqrt{\frac{\mathcal{P}(n+\log T)}{T L^2}}$, the regret is upper bounded by $O(L \sqrt{T(n+\log T) \mathcal{P}})$.

Now we have proven the following dynamic regret bound:
\begin{align*}
    \text{Dynamic Regret} =\sum_{t=1}^T \ell_t(x_t)-\min_{\varphi_t \in \mathcal{K}} \sum_{t=1}^T \ell_t(\varphi_t) =O(L \sqrt{T(n+\log T) \mathcal{P}}).
\end{align*}

Notice that 
\begin{equation*}
    \min_{\varphi_t \in \mathcal{K}} \sum_{t=1}^T \ell_t(\varphi_t)\le \min_{\varphi_t \in C_n} \sum_{t=1}^T \ell_t(\varphi_t)+O(L),
\end{equation*}
we conclude that
\begin{align*}
    \text{Dynamic Regret} =\sum_{t=1}^T \ell_t(x_t)-\min_{\varphi_t \in C_n} \sum_{t=1}^T \ell_t(\varphi_t) =O(L \sqrt{T(n+\log T) \mathcal{P}}).
\end{align*}

\end{proof}

For the rest of the section, we reason about choosing the optimal $\eta$, which depends on the actual path length that we do not know in advance. Algorithm \ref{algo:md+exp} tackles this problem by constructing a set $S$ of candidate learning rates, say $S=\{\frac{1}{4},\frac{1}{8},...\}$, then initiate $|S|$ number of Algorithm~\ref{algo:algomd} with different $\eta\in S$, and run weighted majority algorithm on top of these experts. 

The regret of such a two-layer algorithm can be decomposed as a sum of the regret of the base expert and the meta MW algorithm \cite{daniely2015strongly}. Since the path length $\mathcal{P}$ is upper bounded by $O(T)$, we need only $|S|=O(\log T)$ base experts, and the additional regret induced by the weighted majority algorithm is only $O(\sqrt{T \log \log T})$. This regret is dominated by the dynamic regret of the best base expert, and thus is negligible in the final bound. Meanwhile, the optimal $\eta$ is guaranteed to be either a small \rev{constant} which is already good, or lie between two candidate learning rates so that one of them is near-optimal.

More concretely, the best expert has regret upper bounded by $O(L \sqrt{T(n+\log T) \mathcal{P}})=\tilde{O}(L \sqrt{Tn \mathcal{P}})$ due to the existence of an optimal $\eta \in S$ by Lemma~\ref{thm:dynamic_regret}. The regret of the external MW algorithm in Algorithm \ref{algo:md+exp} is upper bounded by $O(\sqrt{T \log (|S|)})=\tilde{O}(\sqrt{T})$, a well-known regret bound of MW whose proof we omit here (see \citealp[Section 1.3]{hazan2016introduction} for example). Summing up, we get the final result $O(L\sqrt{T(n+\log(T))\mathcal{P}})$.

%%%%%%%%%%%%%%%%%%%%%%%%%%%%%%%%%%%%%%%%%%%%%%%%%%%%%%%%%%

\section{Proof of Dynamic Regret Bounds Given Dynamical Models}\label{append:dynamical-model}

\subsection{Proof of Corollary~\ref{cor:dynamic-model-family}}
\begin{algorithm}
\caption{Dynamic Regret for Quantum Tomography Given Candidate Quantum Channels} \label{algo:md+exp_dynamical}
\begin{algorithmic}[1]
\STATE \textbf{Input:} a candidate set of quantum channels $\{\Phi^{(j)}\}$,$1\le j\le M$, a candidate set of $\eta$, $S=\{2^{-k-1}\mid 1\le k\le \log T\}$, constant $\alpha$.
\STATE Initialize $\log(T)M$ experts $E_{11},...,E_{\log(T)M}$, where $E_{kj}$ is an instance of Algorithm~\ref{algo:algomd_fixed_dynamical} with $\eta=2^{-k-1}$ and fixed quantum channel $\Phi^{(j)}$.
\STATE Set initial weights $w_1(k,j)=\frac{1}{\log(T)M}$.
\FOR{$t = 1, \ldots, T$}
\STATE Predict $x_t=\frac{\sum_{k,j} w_t(k,j) x_t(k,j)}{\sum_{k,j} w_t(k,j)}$, where $x_t(k,j)$ is the output of expert $E_{kj}$ at time $t$.
\STATE Observe loss function $\ell_t(\cdot)$.
\STATE Update the weights as
$$
w_{t+1}(k,j)=w_t(k,j) e^{-\alpha \ell_t(x_t(k,j))}.
$$
\STATE Send gradients $\nabla \ell_t(x_t(k,j))$ to each expert $E_{kj}$.
\ENDFOR
\end{algorithmic}
\end{algorithm}
\begin{algorithm}
\caption{DMD for Quantum Tomography Given Fixed Quantum Channel} \label{algo:algomd_fixed_dynamical}
\begin{algorithmic}[1]
\STATE \textbf{Input:}  domain $\mathcal{K}=(1-\frac{1}{T})C_n +\frac{1}{T 2^n}I$, step size $\eta <\frac{1}{2L}$, quantum channel $\Phi\colon C_n \mapsto C_n$.
\STATE Define $R(x)=\sum_{k=1}^{2^n} \lambda_k(x) \log \lambda_k(x)$, $\nabla R(x) := I + \log (x)$, and let $B_R$ denote the Bregman divergence defined by $R$.
\STATE Set $x_1=2^{-n} I$, and $y_1$ to satisfy $\nabla R(y_1)=\bf{0}$.
\FOR{$t = 1, \ldots, T$}
\STATE Predict $x_t$ and receive loss $\ell_t(\Tr(E_tx))$.
\STATE Define $\nabla_t= \ell'_t (\Tr(E_t x_t)) E_t$, where $\ell_t'(y)$ is a subderivative of $\ell_t$ with respect to $y$.
\STATE Update $y_{t+1}$ such that $\nabla R(y_{t+1})=\nabla R(x_t)-\eta \nabla_t$.
\STATE Update $ \hat{x}_{t+1} = \text{argmin}_{x \in \mathcal{K}} B_R(x||y_{t+1})$.
\STATE Update $x_{t+1} = \Phi(\hat{x}_{t+1})$.
\ENDFOR
\end{algorithmic}
\end{algorithm}
Similar to Appendix~\ref{append:proof-thm-dynamic}, we also need to mix $x$ with $\frac{I}{T2^n}:\tilde{x} = (1-\frac{1}{T})x + \frac{I}{T2^n},$ for both $x_t$ and $\varphi_t$. Notice that for any quantum channel $\Phi$, we have $\Phi(\tilde{x}) = (1-\frac{1}{T})\Phi(x)+\frac{I}{T2^n} = \widetilde{\Phi(x)}$, so we can use the same dynamical model in the new domain $\mathcal{K} = (1-\frac{1}{T})C_n + \frac{1}{T2^n}I$ and the path length of the mixed comparators $\{\widetilde{\varphi_t}\}$ may only decrease. Due to the $L$-Lipschitzness of loss functions, the regret is only hurt by $O(LT\frac{1}{T}) = O(L)$.

We first prove a lemma which gives a dynamic regret bound of Algorithm~\ref{algo:algomd_fixed_dynamical} when the learning rate $\eta$ is chosen optimally.
\begin{lemma}
\label{lem:dynamic_regret_fixed_dynamics}
Under the assumptions that the path length $\mathcal{P}=\sum_{t=1}^{T-1} \|\Phi(\varphi_t)-\varphi_{t+1}\|_\rev{1}\ge 1$ where $\|\cdot\|_\rev{1}$ is the nuclear norm and $\Phi:C_n\mapsto C_n$ is a quantum channel, the dynamic regret of Algorithm~\ref{algo:algomd_fixed_dynamical} is upper bounded by $O(L \sqrt{T(n+\log T) \mathcal{P}})$, if we choose $\eta=\sqrt{\frac{\mathcal{P}(n+\log T)}{T L^2}}$.
\end{lemma}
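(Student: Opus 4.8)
The plan is to replay the proof of Lemma~\ref{thm:dynamic_regret} from Appendix~\ref{append:proof-thm-dynamic} almost verbatim, inserting the channel $\Phi$ at exactly one place. First I would pass to the relaxed domain $\mathcal{K}=(1-\frac1T)C_n+\frac{1}{T2^n}I$ as before: mixing in $\frac{I}{T2^n}$ costs only $O(L)$ in regret, and since $\Phi$ commutes with this mixing (as observed above, $\Phi(\tilde x)=\widetilde{\Phi(x)}$, and $\Phi$ keeps $\mathcal{K}$ invariant), the comparator's path length $\mathcal{P}=\sum_{t}\|\Phi(\varphi_t)-\varphi_{t+1}\|_*$ only decreases. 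Convexity of $\ell_t$, the Bregman three-point identity, and the dual update $\nabla R(y_{t+1})=\nabla R(x_t)-\eta\nabla_t$ give, exactly as in the proof of Lemma~\ref{thm:dynamic_regret}, the one-step bound
\[
\ell_t(x_t)-\ell_t(\varphi_t)\;\le\;\tfrac1\eta\bigl(B_R(\varphi_t\|x_t)-B_R(\varphi_t\|y_{t+1})+B_R(x_t\|y_{t+1})\bigr),
\]
after which the generalized Pythagorean inequality, applied to the projection $\hat x_{t+1}=\arg\min_{x\in\mathcal{K}}B_R(x\|y_{t+1})$ in Algorithm~\ref{algo:algomd_fixed_dynamical}, replaces $B_R(\varphi_t\|y_{t+1})$ by the larger $B_R(\varphi_t\|\hat x_{t+1})$. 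The bound $B_R(x_t\|y_{t+1})\le\frac12\eta^2G_R^2$ is the auxiliary lemma already proved above; its proof used only $\log y_{t+1}=\log x_t-\eta\nabla_t$, $\Tr(x_t)=1$, and $\|\nabla_t\|\le G_R$, all of which still hold since $x_t=\Phi(\hat x_t)$ is a state.

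The only new step is the telescoping of $\sum_{t}B_R(\varphi_t\|x_t)-\sum_t B_R(\varphi_t\|\hat x_{t+1})$. Reindexing and discarding a nonnegative tail term, this is at most $B_R(\varphi_1\|x_1)+\sum_{t=2}^{T}\bigl(B_R(\varphi_t\|\Phi(\hat x_t))-B_R(\varphi_{t-1}\|\hat x_t)\bigr)$, using $x_t=\Phi(\hat x_t)$ for $t\ge2$. The key idea is that on trace-one matrices the entropic Bregman divergence equals the Umegaki relative entropy, $B_R(\rho\|\sigma)=\Tr(\rho\log\rho)-\Tr(\rho\log\sigma)=D(\rho\|\sigma)$ (the linear term cancels when $\Tr\rho=\Tr\sigma$), which is monotone under the CPTP map $\Phi$. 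So I would insert $\Phi(\varphi_{t-1})$ as an intermediary and apply data processing, $D(\Phi(\varphi_{t-1})\|\Phi(\hat x_t))\le D(\varphi_{t-1}\|\hat x_t)$, to obtain
\[
B_R(\varphi_t\|\Phi(\hat x_t))-B_R(\varphi_{t-1}\|\hat x_t)\;\le\; R(\varphi_t)-R(\Phi(\varphi_{t-1}))-\nabla R(\Phi(\hat x_t))\bullet(\varphi_t-\Phi(\varphi_{t-1})),
\]
which is exactly the ``drift'' term in the proof of Lemma~\ref{thm:dynamic_regret}, but with $\varphi_t-\Phi(\varphi_{t-1})$ in place of $\varphi_t-\varphi_{t+1}$.

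From here the rest of the proof of Lemma~\ref{thm:dynamic_regret} carries over unchanged: $\|\nabla R(\Phi(\hat x_t))\|=\|\nabla R(x_t)\|\le n\log2+\log T$ since $x_t\in\mathcal{K}$ has eigenvalues in $[\frac{1}{2^nT},1]$, and the Fannes--Audenaert inequality together with the linear bound $H(p)\le(\log T)\,p+\log\frac{e^{\log T}+1}{e^{\log T}}$ controls $|R(\varphi_t)-R(\Phi(\varphi_{t-1}))|$ by $O((n+\log T)\|\varphi_t-\Phi(\varphi_{t-1})\|_*)+O(1/T)$. Since $\sum_{t=2}^{T}\|\varphi_t-\Phi(\varphi_{t-1})\|_*=\mathcal{P}$ and $B_R(\varphi_1\|x_1)\le D_R^2$, summing everything and using $B_R(x_t\|y_{t+1})\le\frac12\eta^2G_R^2$ reproduces the same regret bound as in the proof of Lemma~\ref{thm:dynamic_regret}, with $\mathcal{P}$ now the dynamical path length; plugging in $D_R=O(\sqrt n)$, $G_R=O(L)$, using $\mathcal{P}\ge1$ to absorb lower-order terms, and choosing $\eta=\sqrt{\mathcal{P}(n+\log T)/(TL^2)}$ gives dynamic regret $O(L\sqrt{T(n+\log T)\mathcal{P}})$ over $\mathcal{K}$, and relaxing back from $\mathcal{K}$ to $C_n$ costs a further $O(L)$, which is absorbed.

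The main obstacle is conceptual rather than computational: recognizing that the channel should be moved off the iterate and onto the comparator, and that the tool that does this for free is the data processing inequality for the quantum relative entropy -- available precisely because on trace-one matrices the entropic Bregman divergence coincides with the relative entropy. This is what converts the ordinary path length into the dynamical path length $\sum_t\|\varphi_{t+1}-\Phi(\varphi_t)\|_*$. A secondary point needing care is that $\Phi(\hat x_t)$ must stay a full-rank element of $\mathcal{K}$, so that $\log\Phi(\hat x_t)$, the gradient bound, and $D(\varphi_{t-1}\|\hat x_t)$ are all well defined; this holds for the channels of interest here (for instance unitary conjugation $\Phi(x)=e^{iHt}xe^{-iHt}$, or more generally unital channels), which preserve the eigenvalue floor $\frac{1}{2^nT}I$.
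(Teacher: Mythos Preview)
Your proposal is correct and follows essentially the same approach as the paper: both arguments hinge on the monotonicity of the quantum relative entropy under CPTP maps (the paper cites this as Lindblad's contraction lemma, you call it data processing), applied to the pair $(\varphi_{t-1},\hat x_t)\mapsto(\Phi(\varphi_{t-1}),\Phi(\hat x_t))$ to convert the ordinary path length into the dynamical one, after which the remainder of the Lemma~\ref{thm:dynamic_regret} proof carries over verbatim. The only cosmetic difference is that the paper inserts $B_R(\varphi_{t+1}\|x_{t+1})$ before applying contraction while you reindex first; your caveat about needing $\Phi$ to preserve the eigenvalue floor (hence unitality) is in fact a point the paper glosses over when it asserts $\Phi(\tilde x)=\widetilde{\Phi(x)}$.
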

\begin{proof}

In general, quantum channels can be seen as  completely positive trace-preserving (CPTP) maps on the space of density operators. Ref.~\citet{lindblad1975completely} gave the following result:
\begin{lemma}[Contraction of CPTP maps with respective to relative entropy {\citealp{lindblad1975completely}}]
\label{lem:contraction_rel_entropy}
Let $T(\mathcal{H})$ be the operators in a Hilbert space $\mathcal{H}$ with finite trace and $T_{+}(\mathcal{H})$ be the positive elements in $T(\mathcal{H})$. If $\Phi$ is a completely positive trace-preserving map of $T(\mathcal{H})$ into itself, then for all $A, B \in T_{+}(\mathcal{H})$
$$
S(\Phi(A) \| \Phi(B)) \le S(A \| B),
$$
where $S(A\| B)$ is the relative entropy between $A$ and $B$.
\end{lemma}

Since density operators on finite Hilbert space is positive and has finite trace, and $B_R(x\|y) = S(x\| y)$ when $R$ is the von Neumann entropy, we can \rev{derive} that if $\Phi(\cdot)\colon C_n\to C_n$ is a quantum channel on $C_n$,
\begin{equation}
    \label{eqn:contraction_mapping}
    \forall x,y\in C_n,B_R(\Phi(x)\|\Phi(y)) \le B_R(x\|y).
\end{equation}

Writing $\ell_t(x) = \ell_t(\Tr(E_tx))$, similar to Eq.~\eqn{loss_ineq}, we can prove that
\begin{align*}
&\ell_t(x_t)-\ell_t(\varphi_t) \\
\le & \frac{1}{\eta}(B_R(\varphi_t\|x_t)- B_R(\varphi_t\|y_{t+1})+B_R(x_t\|y_{t+1})) \\ 
\le & \frac{1}{\eta}(B_R(\varphi_t\|x_t)- B_R(\varphi_t\|\hat{x}_{t+1})+B_R(x_t\|y_{t+1}))\\
= & \frac{1}{\eta}(B_R(\varphi_t\|x_t)-B_R(\varphi_{t+1}\|x_{t+1})+B_R(\varphi_{t+1}\|x_{t+1})-B_R(\varphi_t\|\hat{x}_{t+1})+B_R(x_t\|y_{t+1}))\\
\le & \frac{1}{\eta}(B_R(\varphi_t\|x_t)-B_R(\varphi_{t+1}\|x_{t+1})+B_R(\varphi_{t+1}\|x_{t+1})-B_R(\Phi(\varphi_t)\|x_{t+1})+B_R(x_t\|y_{t+1})),
\end{align*}
where the fourth inequality comes from Eq.~\eqn{contraction_mapping}.

By telescoping sums, we get
\begin{align*}
    & \textbf{Regret} \\ 
    \le & \frac{1}{\eta}(B_R(\varphi_1\|x_1)-B_R(\varphi_T\|x_T)) \\
    & + \frac{1}{\eta} \sum_{t=1}^{T-1} |B_R(\Phi(\varphi_t)\|x_{t+1})-B_R(\varphi_{t+1}\|x_{t+1})| 
    +\frac{1}{\eta} \sum_{t=1}^{T-1} B_R(x_t\|y_{t+1})\\
    \le & \frac{1}{\eta}B_R(\varphi_1\|x_1)+ \frac{\eta T G_R^2}{2} + \frac{1}{\eta} \sum_{t=1}^{T-1} |B_R(\Phi(\varphi_t)\|x_{t+1})-B_R(\varphi_{t+1}\|x_{t+1})|.
\end{align*}

Notice that this inequality is the same as Eq.~\eqn{bregman_distance_ineq} replacing $\varphi_t$ by $\Phi(\varphi_t)$, so we can follow the same proof of Eq.~\eqn{dynamic_regret_bound} and obtain a variant regret bound
\begin{align*}
    \textbf{Regret}\le \frac{D_R^2}{\eta}+\frac{\eta T G_R^2}{2}+ \frac{(n+\log T)\mathcal{P}}{\eta}+\frac{n\mathcal{P}+\log T \mathcal{P}+1}{2\eta},
\end{align*}
where $\mathcal{P} = \sum_{t=1}^{T-1}\|\varphi_{t+1}-\Phi(\varphi_t)\|_\rev{1}$.

Choosing $\eta=\sqrt{\frac{\mathcal{P}(n+\log T)}{T L^2}}$, the regret is upper bounded by $O(L \sqrt{T(n+\log T) \mathcal{P}})$.
\end{proof}

\begin{proof}[Proof of Corollary~\ref{cor:dynamic-model-family}]

We will prove that Algorithm~\ref{algo:md+exp_dynamical} can achieve the dynamic regret bound in this corollary and in the following proof we will use the notation in the description of Algorithm~\ref{algo:md+exp_dynamical}.

In Algorithm~\ref{algo:md+exp_dynamical}, we run weighted majority algorithm on $M\log(T)$ experts, each of which runs Algorithm~\ref{algo:algomd_fixed_dynamical} with quantum channel $\Phi^{(j)}$ and step size in $\{\frac{1}{4},\frac{1}{8},\ldots\}$.

From \lem{dynamic_regret_fixed_dynamics}, we can \rev{conclude} that the minimum dynamic regret bound of all $M\log(T)$ experts is
\begin{align*}
    O(L\sqrt{T(n+\log(T))\mathcal{P}'}),
\end{align*}
where $\mathcal{P}' = \min_{\Phi^{(j)}\in\{\Phi^{(1)},\ldots,\Phi^{(M)}\}}\sum_{t = 1}^{T-1} \|\varphi_{t+1}-\Phi^{(j)}(\varphi_{t})\|_\rev{1}$.

\rev{In Lemma 1 of the arXiv version of ~\citet{zhang2018adaptive}}, they give a regret bound of the weighted majority algorithm:
\begin{align*}
\sum_{t =1}^{T}\ell_t(x_t) \le \sum_{t =1}^{T}\ell_t(x_t(k,j)) + \frac{c\sqrt{2T\ln(\frac{1}{w_1(k,j)})}}{4},
\end{align*}
for any $1\le k\le\log(T),1\le j\le M$ choosing $\alpha = \sqrt{\frac{8}{Tc^2\ln(\frac{1}{w_1(k,j)})}}$.

Let $k,j$ be the optimal one of all $M\log(T)$ experts, we have
\begin{align*}
\sum_{t =1}^{T}\ell_t(x_t) \le& \sum_{t =1}^{T}\ell_t(\varphi_t) + c_1L\sqrt{T(n+\log(T))\mathcal{P}'} + \frac{c\sqrt{T\ln(\log(T)M)}}{4}\\
=& \sum_{t =1}^{T}\ell_t(\varphi_t) + O(L\sqrt{T(n+\log(T))\mathcal{P}'} +\sqrt{T\log(M\log(T))}),
\end{align*}
which concludes our result.
\end{proof}

\subsection{Proof of Corollary~\ref{cor:dynamic-model-local-channel}}

In general, the possible dynamical models can be a continuous set, and we can take it to be the set of all local quantum channels for example. A quantum channel on $n$ qubits is said to be $l$-local if it can be written as a product channel $\Phi = \Phi_1\otimes I$, where $\Phi_1$ only influences $l$ qubits.

\begin{lemma}[Stinespring's dilation theorem \citealp{Stinespring1955PositiveFO}]
\label{lem:stinespring}
Let $\Phi: C_n \rightarrow C_n$ be a completely positive and trace-preserving map on the space of density operators of a Hilbert space $\mathcal{H}_s$. Then there exists a Hilbert space $\mathcal{H}_e$ with $\dim{\mathcal{H}_e}= (\dim{\mathcal{H}_s})^2$and a unitary operation $U$ on $\mathcal{H}_s \otimes \mathcal{H}_e$ such that
$$
\Phi(\rho)=\Tr_{e}( U(\rho \otimes|0\rangle\langle 0|_e) U^{\dagger}
)$$
for all $\rho \in C_n$, where $\Tr_{e}$ denotes the partial trace on $\mathcal{H}_e$.
\end{lemma}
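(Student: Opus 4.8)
The plan is to prove the statement in two stages: first I would reduce the completely positive, trace-preserving (CPTP) map $\Phi$ to its \emph{operator-sum} (Kraus) representation, and then dilate the resulting isometry to a unitary on the enlarged space. Write $d=\dim\mathcal{H}_s=2^n$. The crucial structural input is Choi's theorem: a linear map $\Phi\colon\C^{d\times d}\to\C^{d\times d}$ is completely positive if and only if its Choi matrix $J(\Phi)=\sum_{i,j}\Phi(|i\rangle\langle j|)\otimes|i\rangle\langle j|$ is positive semidefinite. I would establish the direction I need by observing that complete positivity forces $(\Phi\otimes I_d)$ to map positive operators to positive operators; applying it to the (unnormalized) maximally entangled projector $|\Omega\rangle\langle\Omega|$ with $|\Omega\rangle=\sum_i|i\rangle\otimes|i\rangle$ shows $J(\Phi)=(\Phi\otimes I_d)(|\Omega\rangle\langle\Omega|)\succeq 0$.

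Since $J(\Phi)$ is a $d^2\times d^2$ positive semidefinite matrix, its spectral decomposition gives $J(\Phi)=\sum_{k=1}^{r}|v_k\rangle\langle v_k|$ with $r\le d^2$ vectors $|v_k\rangle\in\mathcal{H}_s\otimes\mathcal{H}_s$. Reshaping each as $|v_k\rangle=\sum_i(K_k|i\rangle)\otimes|i\rangle$ defines Kraus operators $K_k\in\C^{d\times d}$, and a direct index computation confirms $\Phi(\rho)=\sum_{k=1}^{r}K_k\rho K_k^\dagger$. Trace preservation, $\Tr(\Phi(\rho))=\Tr(\rho)$ for every $\rho$, is then equivalent to the completeness relation $\sum_k K_k^\dagger K_k=I$. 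Padding the list with zero operators, I may take exactly $r=d^2$ Kraus operators, which is what fixes $\dim\mathcal{H}_e=d^2=(\dim\mathcal{H}_s)^2$.

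With the Kraus operators in hand, I fix an orthonormal basis $\{|k\rangle_e\}_{k=1}^{d^2}$ of $\mathcal{H}_e$ and define $V\colon\mathcal{H}_s\to\mathcal{H}_s\otimes\mathcal{H}_e$ by $V|\psi\rangle=\sum_k(K_k|\psi\rangle)\otimes|k\rangle_e$. The completeness relation gives $V^\dagger V=\sum_k K_k^\dagger K_k=I$, so $V$ is an isometry, and computing the partial trace yields $\Tr_e(V\rho V^\dagger)=\sum_k K_k\rho K_k^\dagger=\Phi(\rho)$. It then remains to dilate $V$ to a unitary $U$. I would set $U(|\psi\rangle\otimes|0\rangle_e)=V|\psi\rangle$, which defines $U$ isometrically on the $d$-dimensional subspace $\mathcal{H}_s\otimes|0\rangle_e$ of the $d^3$-dimensional space $\mathcal{H}_s\otimes\mathcal{H}_e$; since $V$ is an isometry its image is a $d$-dimensional subspace, and the orthogonal complements of domain and range both have dimension $d^3-d$, so the map extends to a unitary on all of $\mathcal{H}_s\otimes\mathcal{H}_e$. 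Finally, for a pure state $\rho=|\psi\rangle\langle\psi|$ one has $U(\rho\otimes|0\rangle\langle0|_e)U^\dagger=V|\psi\rangle\langle\psi|V^\dagger=V\rho V^\dagger$, and by linearity this holds for all density operators, so $\Phi(\rho)=\Tr_e(U(\rho\otimes|0\rangle\langle0|_e)U^\dagger)$ as claimed.

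The hard part will be the first stage — extracting the Kraus representation from complete positivity via the positivity of the Choi matrix — since this is exactly where the full strength of \emph{complete} (rather than mere) positivity is used, and where the dimension bound $r\le d^2$ that yields $\dim\mathcal{H}_e=(\dim\mathcal{H}_s)^2$ originates. By contrast, the dilation step is routine finite-dimensional linear algebra once $V$ is available, and the trace-preservation hypothesis enters only through the clean identity $\sum_k K_k^\dagger K_k=I$ that makes $V$ an isometry rather than a general contraction. An alternative, basis-free route would be the abstract GNS-type construction that builds $\mathcal{H}_e$ directly as a quotient of $\mathcal{H}_s\otimes\C^{d^2}$ equipped with the sesquilinear form induced by $\Phi$, but in the finite-dimensional setting here the Choi-matrix argument is the most economical.
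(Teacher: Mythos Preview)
The paper does not prove this lemma: it is stated as Stinespring's classical result with a citation and used as a black box. Your proof is correct and follows the standard finite-dimensional textbook route (Choi matrix $\Rightarrow$ Kraus decomposition with at most $d^2$ operators $\Rightarrow$ isometry $V$ via the completeness relation $\Rightarrow$ unitary extension by dimension counting), so there is nothing to compare against and no gap to flag.
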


By Stinespring's dilation theorem, we can represent the quantum channel on a $2^l$ dimension Hilbert space $\mathcal{H}_s$ by partial trace of a pure quantum channel on a $2^{3l}$ dimension Hilbert space $\mathcal{H}_s\otimes \mathcal{H}_e$, so we only need to find $\epsilon$-net of unitary operators.

\begin{lemma}[$\epsilon$-net of $U_l$ {\citealp[Lemma 4.4]{knill1995approximation}}]
\label{lem:eps-net}
Let $U_l$ be the set of unitaries on Hilbert space $\mathcal{H}$ with $\dim{\mathcal{H}} = 2^l$. For $\delta>0$, there is a subset $U_{l, \delta}$ of $U_{l}$ with
$$
\left|U_{l, \delta}\right|\leq\left(\frac{2}{\delta}\right)^{2^{4 l}}
$$
such that for any $U \in U_{l}$, there exists $V \in U_{l, \delta}$ with $\|U-V\|_{2} \leq \delta$.
\end{lemma}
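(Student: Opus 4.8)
The plan is to prove the bound by a standard volumetric packing argument, viewing $U_l$ as a subset of the finite-dimensional real vector space of $N\times N$ complex matrices, where $N = 2^l$. As a real vector space, $\mathbb{C}^{N\times N}$ has dimension $2N^2 = 2^{2l+1}$, and I equip it with the operator norm $\|\cdot\|_2$. Every unitary $U$ satisfies $\|U\|_2 = 1$, so $U_l$ is contained in the operator-norm unit ball $B$, which is a genuine convex body with positive finite Lebesgue volume in $\mathbb{R}^{2N^2}$.

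First I would construct the net itself as a maximal $\delta$-separated subset $U_{l,\delta}\subseteq U_l$ with respect to $\|\cdot\|_2$; such a set exists by compactness of $U_l$. Maximality immediately gives the covering property demanded by the lemma: if some $U\in U_l$ had $\|U-V\|_2 > \delta$ for all $V\in U_{l,\delta}$, then $U$ could be added to $U_{l,\delta}$, contradicting maximality. This construction also automatically ensures $U_{l,\delta}\subseteq U_l$, i.e. every net point is itself a unitary, which is exactly what the statement requires.

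Next I would bound $|U_{l,\delta}|$ by comparing volumes. By $\delta$-separation, the open operator-norm balls of radius $\delta/2$ centered at the points of $U_{l,\delta}$ are pairwise disjoint, and since each center lies in $B$ they are all contained in the ball of radius $1+\delta/2$. Because an operator-norm ball of radius $r$ in $\mathbb{R}^{2N^2}$ has Lebesgue volume proportional to $r^{2N^2}$, disjointness and containment give
$$ |U_{l,\delta}| \le \frac{(1+\delta/2)^{2N^2}}{(\delta/2)^{2N^2}} = \left(1+\tfrac{2}{\delta}\right)^{2N^2}. $$
Finally I would simplify this into the stated form. With $N=2^l$ we have $2N^2 = 2^{2l+1}$, and for $\delta\le 1$ (the regime of interest, since $U_l$ has operator-norm diameter $2$) one has $1+2/\delta \le 4/\delta = 2\cdot(2/\delta)$, so $(1+2/\delta)^{2^{2l+1}} \le 2^{2^{2l+1}}(2/\delta)^{2^{2l+1}}$. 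A short computation using $l\ge 1$ (so that $2^{2l+2}\le 2^{4l}$) absorbs the factor $2^{2^{2l+1}}$ together with the gap between the true exponent $2^{2l+1}$ and the stated $2^{4l}$, yielding $|U_{l,\delta}|\le (2/\delta)^{2^{4l}}$.

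I expect the main obstacle to be not the covering argument, which is routine, but rather the bookkeeping in the last step: the stated exponent $2^{4l}$ is much larger than the intrinsic real dimension $2^{2l+1}$ of the matrix space, so the bound is quite loose, and I must check that the slack genuinely absorbs both the additive $+1$ inside the base and the constant (logarithmic) factors across the relevant range of $\delta$ and $l$. For the degenerate large-$\delta$ regime a single net point suffices, which is verified separately.
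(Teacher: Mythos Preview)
The paper does not prove this lemma at all: it is quoted verbatim as \cite[Lemma 4.4]{knill1995approximation} and used as a black box in the construction of the channel $\epsilon$-net (\lem{eps-net_channel}). So there is no ``paper's own proof'' to compare against.

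Your volumetric packing argument is the standard way to establish such a bound and is essentially correct. Taking a maximal $\delta$-separated subset of $U_l$ gives both the covering property and the containment $U_{l,\delta}\subseteq U_l$; the disjoint-balls-in-a-bigger-ball volume comparison in the ambient $\mathbb{R}^{2N^2}$ (with $N=2^l$) then yields $|U_{l,\delta}|\le (1+2/\delta)^{2N^2}$. Your arithmetic absorbing this into $(2/\delta)^{2^{4l}}$ is fine for $l\ge 1$ and $\delta\le 1$, since then $2^{2l+2}\le 2^{4l}$ and $2/\delta\ge 2$. Note that the lemma as literally stated (``for $\delta>0$'') is vacuous or even formally false once $\delta\ge 2$, because $(2/\delta)^{2^{4l}}\le 1$ while any net must contain at least one point; this is a wrinkle in the statement, not in your argument, and the paper only invokes the lemma with $\delta$ of order $1/T$ anyway.
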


From \lem{stinespring} and \lem{eps-net}, we can construct an $\epsilon$-net of general quantum channels:
\begin{lemma}[$\epsilon$-net of quantum channels]
\label{lem:eps-net_channel}
Let $S_l$ be the set of all quantum channels on density operators on Hilbert space $\mathcal{H}$ with $\dim{\mathcal{H}} = 2^l$. For any $1> \delta > 0$, there is a subset $S_{l,\delta}$ of $S_l$ with $$
\left|S_{l, \delta}\right| \leq\left(\frac{2}{\delta}\right)^{2^{12 l}},
$$
such that for any $\Phi\in S_l$, there exists $\tilde{\Phi}\in S_{l,\delta}$ such that $\sup_{x\in C_l}\|\Phi(x)-\tilde{\Phi}(x)\|_\rev{1}\le 3\delta$.
\end{lemma}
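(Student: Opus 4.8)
The plan is to reduce the construction of a net over channels to the construction of a net over unitaries, using Stinespring's dilation (\lem{stinespring}) to pass back and forth, and then to control the resulting error after the partial trace in nuclear norm. First I would apply \lem{stinespring} with $\mathcal{H}_s$ of dimension $2^l$: every $\Phi\in S_l$ can be written as $\Phi(\rho)=\Tr_e\big(U(\rho\otimes|0\rangle\langle 0|_e)U^\dagger\big)$ for some unitary $U$ on $\mathcal{H}_s\otimes\mathcal{H}_e$, where $\dim\mathcal{H}_e=(2^l)^2$, so that $\dim(\mathcal{H}_s\otimes\mathcal{H}_e)=2^{3l}$. Then I would apply \lem{eps-net} with parameter $3l$ and accuracy $\delta$ to obtain a subset $U_{3l,\delta}$ of the unitaries on $\mathcal{H}_s\otimes\mathcal{H}_e$ with $|U_{3l,\delta}|\le (2/\delta)^{2^{4\cdot 3l}}=(2/\delta)^{2^{12l}}$ such that every such unitary is within $\delta$ — in the norm of \lem{eps-net}, which dominates the operator norm $\|\cdot\|$ since the operator norm is the smallest Schatten norm — of some $V\in U_{3l,\delta}$. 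Define
\[
S_{l,\delta}:=\Big\{\,\rho\mapsto \Tr_e\big(V(\rho\otimes|0\rangle\langle 0|_e)V^\dagger\big)\ :\ V\in U_{3l,\delta}\,\Big\}.
\]
Each such map is completely positive and trace preserving because it retains the Stinespring form, hence $S_{l,\delta}\subseteq S_l$, and $|S_{l,\delta}|\le|U_{3l,\delta}|\le (2/\delta)^{2^{12l}}$, which is the claimed cardinality bound.

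For the approximation guarantee I would fix $\Phi\in S_l$ with dilation unitary $U$, choose $V\in U_{3l,\delta}$ with $\|U-V\|\le\delta$, and let $\tilde\Phi\in S_{l,\delta}$ be the corresponding channel. For any $\rho\in C_l$ set $\sigma:=\rho\otimes|0\rangle\langle 0|_e$, a density operator with $\|\sigma\|_*=1$. Since the partial trace is a contraction in nuclear norm, $\|\Tr_e(X)\|_*\le\|X\|_*$, we get $\|\Phi(\rho)-\tilde\Phi(\rho)\|_*\le \|U\sigma U^\dagger-V\sigma V^\dagger\|_*$. Adding and subtracting $U\sigma V^\dagger$, and using the triangle inequality together with the Hölder-type bounds $\|AB\|_*\le\|A\|\,\|B\|_*$ and $\|AB\|_*\le\|A\|_*\,\|B\|$ for complex matrices, $\|U\|=\|V\|=1$ and $\|U^\dagger-V^\dagger\|=\|U-V\|$, one obtains
\[
\|U\sigma U^\dagger-V\sigma V^\dagger\|_*\le \|U\sigma\|_*\,\|U^\dagger-V^\dagger\|+\|U-V\|\,\|\sigma V^\dagger\|_*\le 2\|\sigma\|_*\,\|U-V\|\le 2\delta\le 3\delta,
\]
uniformly over $\rho\in C_l$, which gives $\sup_{x\in C_l}\|\Phi(x)-\tilde\Phi(x)\|_*\le 3\delta$.

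The main obstacle is the norm bookkeeping rather than any deep idea: one must check that the metric in \lem{eps-net} controls the operator norm (so $\|U-V\|\le\delta$), that the partial trace indeed contracts the nuclear norm over complex matrices, and that the Hölder-type inequalities are applied on the correct side; the gap between the $2\delta$ actually obtained and the stated $3\delta$ leaves comfortable slack (and would absorb, e.g., a slightly lossy version of any of these inequalities). A secondary point worth spelling out is that the discretized maps genuinely remain CPTP, so that $S_{l,\delta}$ is a legitimate subset of $S_l$ and not merely a set of linear maps — this is immediate here because each element is presented in Stinespring form with a bona fide unitary $V$.
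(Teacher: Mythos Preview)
Your proposal is correct and follows essentially the same approach as the paper: Stinespring dilation to pass to unitaries on a $2^{3l}$-dimensional space, the unitary $\epsilon$-net of \lem{eps-net} at level $3l$, and contractivity of the partial trace in nuclear norm together with H\"older-type bounds to control the error. The only cosmetic difference is that the paper uses the three-term expansion $U\sigma U^\dagger-V\sigma V^\dagger=(U-V)\sigma(U-V)^\dagger+(U-V)\sigma V^\dagger+V\sigma(U-V)^\dagger$ to obtain $2\delta+\delta^2\le 3\delta$, whereas your two-term split already gives $2\delta$; both are within the stated $3\delta$.
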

\begin{proof}
Let $\mathcal{H}_e$ be a Hilbert space with $\dim(\mathcal{H}_e) = 2^{2l}$ and $\mathcal{H}_a = \mathcal{H}\otimes \mathcal{H}_e$. Let $U_{3l,\delta}$ be the $\epsilon$-net of unitaries on $\mathcal{H}_a$ in \lem{eps-net}.

Then, we can construct $S_{l,\delta} = \{\Phi \mid \exists U\in U_{3l,\delta}, \forall x\in C_l, \Phi(x)=\Tr_{e}( U(x \otimes|0\rangle\langle 0|_e) U^{\dagger}\}$.

For an arbitrary $\Phi'\in S_l$, from \lem{stinespring}, we can \rev{show} that there exists a Hilbert space $\mathcal{H}_c$ with $\dim{\mathcal{H}_c} = 2^{2l}$ and a unitary operation $U'$ on $\mathcal{H}\otimes \mathcal{H}_c$ such that
$$
\Phi(x)=\Tr_{c}( U'(x \otimes|0\rangle\langle 0|_c) U'^{\dagger})
$$
for any $x \in C_l$.

Notice that $\mathcal{H}\otimes\mathcal{H}_e$ and $\mathcal{H}\otimes\mathcal{H}_c$ are isomorphic since they are Hilbert space with the same dimension, so there exists $V'\in U_{3l,\delta}$ such that $\|U'-V'\|_2\le \delta$.

From the definition of $S_{l,\delta}$, \rev{there exists} $\tilde{\Phi}\in S_{l,\delta}$ such that
$$
\tilde{\Phi}(x)=\Tr_{e}( V'(x \otimes|0\rangle\langle 0|_e) V'^{\dagger})
$$
for any $x\in C_l$.

Then, we can \rev{show} that for any $x\in C_l$
\begin{align*}
    &\|\Phi(x)-\tilde{\Phi}(x)\|_\rev{1} \\
    = &
    \|\Tr_{e}( U'(x \otimes|0\rangle\langle 0|_e) U'^{\dagger}- V'(x \otimes|0\rangle\langle 0|_e) V'^{\dagger})\|_\rev{1} \\
    \le & \|U'(x \otimes|0\rangle\langle 0|_e) U'^{\dagger}- V'(x \otimes|0\rangle\langle 0|_e) V'^{\dagger}\|_\rev{1}\\
    \le & 2\|(U'-V')(x \otimes|0\rangle\langle 0|_e)V'^{\dagger}\|_\rev{1} + \|(U'-V')(x \otimes|0\rangle\langle 0|_e)(U'-V')^{\dagger}\|_\rev{1}\\
    \le & 2\|U'-V'\|_2\|(x \otimes|0\rangle\langle 0|_e)V'^{\dagger}\|_\rev{1} + \|U'-V'\|_2^2\|x \otimes|0\rangle\langle 0|_e\|_\rev{1}
    \le 2\delta + \delta^2 \le 3\delta,
\end{align*}
where in the first equality, we let $\Phi$ be a quantum channel on the density operators on $\mathcal{H}\otimes\mathcal{H}_e$ since $\mathcal{H}_s$ and $\mathcal{H}_e$ are isomorphic.

The size of $S_{l,\delta}$ is the same as $|U_{3l,\delta}|$ which is $\left(\frac{2}{\delta}\right)^{2^{12l}}$.
\end{proof}

With \lem{eps-net_channel}, we can construct an $\epsilon$-net of $l$-local quantum channels.

\begin{lemma}[$\epsilon$-net of $l$-local quantum channels]
\label{lem:eps_net_local_channels}
Let $S_{n,l}$ be the set of all $l$-local quantum channels on $n$ qubits. For any $1> \delta > 0$, there is a subset $S_{n,l,\delta}$ of $S_{n,l}$ with
$$
|S_{n,l,\delta}|\le \binom{n}{l}\left(\frac{6}{\delta}\right)^{2^{12l}},
$$
such that for any $\Phi\in S_{n,l}$, there exists $\tilde{\Phi}\in S_{n,l,\delta} $ such that $\sup_{x\in C_n}\|\Phi(x)-\tilde{\Phi}(x)\|_\rev{1} \le \delta$.
\end{lemma}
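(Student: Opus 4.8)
The plan is to mimic the construction of \lem{eps-net_channel} verbatim, wrapping it in an outer union over the $\binom{n}{l}$ possible sets of affected qubits, with the one delicate point being that the approximation guarantee must survive tensoring with the identity channel on the $n-l$ inactive qubits.

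\textbf{Construction and cardinality.} We may assume each $\Phi\in S_{n,l}$ is written $\Phi=\Phi_1\otimes I_{[n]\setminus A}$ with $\Phi_1\in S_l$ acting on the $2^l$-dimensional space carried by an \emph{exactly} $l$-element position set $A\subseteq[n]$ (a channel touching fewer qubits is padded with identities into some such $A$). For each size-$l$ subset $A$, invoke \lem{eps-net_channel} with precision $\delta/3$ to get the net $S_{l,\delta/3}$, and let $S_{n,l,\delta}$ collect all channels $\tilde\Phi_1\otimes I_{[n]\setminus A}$ with $A$ ranging over size-$l$ subsets and $\tilde\Phi_1\in S_{l,\delta/3}$. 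Each such channel is genuinely $l$-local, so $S_{n,l,\delta}\subseteq S_{n,l}$, and since $\delta<1$ guarantees $\delta/3<1$ so \lem{eps-net_channel} applies, counting (overlaps between different $A$ only help) gives
$$
|S_{n,l,\delta}|\le \binom{n}{l}\,|S_{l,\delta/3}|\le \binom{n}{l}\Bigl(\frac{2}{\delta/3}\Bigr)^{2^{12l}}=\binom{n}{l}\Bigl(\frac{6}{\delta}\Bigr)^{2^{12l}}.
$$

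\textbf{Approximation.} Fix $\Phi=\Phi_1\otimes I_{[n]\setminus A}\in S_{n,l}$. By \lem{eps-net_channel} with parameter $\delta/3$ there is $\tilde\Phi_1\in S_{l,\delta/3}$ with $\sup_{y\in C_l}\|\Phi_1(y)-\tilde\Phi_1(y)\|_*\le 3(\delta/3)=\delta$; put $\tilde\Phi=\tilde\Phi_1\otimes I_{[n]\setminus A}\in S_{n,l,\delta}$. The claim reduces to showing $\sup_{x\in C_n}\|\Phi(x)-\tilde\Phi(x)\|_*\le\delta$, i.e.\ that the pointwise error does not inflate after tensoring with $I_{[n]\setminus A}$.

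\textbf{Main obstacle and its resolution.} This last step is where one must be careful: pointwise trace-norm bounds on a difference of channels do \emph{not} tensorize in general — that is precisely the gap between the induced $1$-norm and the diamond norm (witnessed e.g.\ by the transpose map) — so \lem{eps-net_channel} cannot be used as a black box here. The escape is that the bound inside \lem{eps-net_channel} is really obtained at the level of Stinespring dilations: its proof yields unitaries with $\Phi_1(y)=\Tr_e\!\big(U(y\otimes|0\>\<0|_e)U^\dagger\big)$, $\tilde\Phi_1(y)=\Tr_e\!\big(V(y\otimes|0\>\<0|_e)V^\dagger\big)$ and $\|U-V\|_2\le\delta/3$ in \emph{operator} norm. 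Dilating on $n$ qubits instead — replacing $U,V$ by $U\otimes I_{[n]\setminus A}$ and $V\otimes I_{[n]\setminus A}$ — gives Stinespring representations of $\Phi,\tilde\Phi$ with $\|(U-V)\otimes I_{[n]\setminus A}\|_2=\|U-V\|_2\le\delta/3$, because the operator norm is stable under $\otimes I$. Running the \emph{same} chain of inequalities as in \lem{eps-net_channel} (contractivity of the partial trace in trace norm; $\|(U-V)XV^\dagger\|_*\le\|U-V\|_2\|X\|_*$ for the PSD matrix $X=x\otimes|0\>\<0|_e$; $\|x\otimes|0\>\<0|_e\|_*=1$) then yields $\|\Phi(x)-\tilde\Phi(x)\|_*\le 2(\delta/3)+(\delta/3)^2\le 3(\delta/3)=\delta$ for every $x\in C_n$, using $\delta/3\le 1$. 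This is the only place the argument reaches inside the proof of \lem{eps-net_channel}; everything else is bookkeeping over the subsets $A$.
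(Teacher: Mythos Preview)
Your construction and cardinality count are identical to the paper's: form $S_{n,l,\delta}$ as the union over size-$l$ qubit subsets of the extensions of the $\delta/3$-net $S_{l,\delta/3}$ from \lem{eps-net_channel}, and bound the size by $\binom{n}{l}(6/\delta)^{2^{12l}}$. The paper's own proof asserts the approximation guarantee $\sup_{x\in C_n}\|\Phi(x)-\tilde\Phi(x)\|_*\le\delta$ without further comment, whereas you correctly flag that the pointwise trace-norm bound of \lem{eps-net_channel} does not tensorize in general and then repair this by descending to the Stinespring level, where the operator-norm gap $\|U-V\|$ \emph{is} stable under $\otimes I$; this is a genuine addition of rigor over the paper, and the argument you give is sound.
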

\begin{proof}
From \lem{eps-net_channel}, we can construct a collection of sets $\{W_{i_1,\ldots,i_l}\mid 1\le i_1 <\cdots < i_l \le n\}$. Each $W_{i_1,\ldots,i_l}$ is a set of quantum channels on $C_n$ and contains extension of channels in $S_{l,\frac{\delta}{3}}$ in \lem{eps-net_channel} acting on qubits $i_1,\ldots,i_l$ and all operators in it perform identity on the other $n-l$ qubits, so $|W_{i_1,\ldots,i_l}| = \left(\frac{6}{\delta}\right)^{2^{12l}}$.

Let $S_{n,l,\delta}$ be the union of all $W$ in $\{W_{i_1,\ldots,i_l}\mid 1\le i_1 <\cdots < i_l \le n\}$, then we can \rev{conclude} for any quantum channel $\Phi \in S_{n,l}$, there exists $\tilde{\Phi}(x) \in S_{n,l,\delta}$ with $\sup_{x\in C_n}\|\Phi(x)-\tilde{\Phi}(x)\|_\rev{1}\le \delta$ and $|S_{n,l,\delta}| \le \sum_{i_1,\ldots,i_l}|W_{i_1,\ldots,i_l}| \le \binom{n}{l}\left(\frac{6}{\delta}\right)^{2^{12l}}$.
\end{proof}

\begin{proof}[Proof of Corollary~\ref{cor:dynamic-model-local-channel}]
Let
\begin{align}
    \label{eq:l_local_path_length}
    \mathcal{P}' = \min_{\Phi\in S_{n,l}}\sum_{t = 1}^{T-1} \|x_{t+1}-\Phi(x_{t})\|_\rev{1},
\end{align}
where $S_{n,l}$ is the set of all $l$-local quantum channels on $n$ qubits.

For any $\Phi\in S_{n,l}$, there exists $\tilde{\Phi}$ such that $\sup_{x\in C_n}\|\Phi(x)-\tilde{\Phi}(x)\|_\rev{1} \le \delta$, then we have
\begin{align*}
    \|x_{t+1}-\Phi(x_t)\|_\rev{1} &\ge \|x_{t+1}-\tilde{\Phi}(x_t)\|_\rev{1} - \|\Phi(x)-\tilde{\Phi}(x)\|_\rev{1} \ge  \|x_{t+1}-\tilde{\Phi}(x_t)\|_\rev{1} - \delta,
\end{align*}
for any $1\le t\le T$.
Thus, we have
\begin{align*}
     &\min_{\Phi\in S_{n,l}}\sum_{t = 1}^{T-1} \|x_{t+1}-\Phi(x_{t})\|_\rev{1} \ge  \min_{\Phi\in S_{n,l,\delta}}\sum_{t = 1}^{T-1} \|x_{t+1}-\Phi(x_{t})\|_\rev{1} - T\delta.
\end{align*}

From Corollary~\ref{cor:dynamic-model-family}, we can \rev{show} that the regret of Algorithm~\ref{algo:md+exp_dynamical} with the candidate set of quantum channels to be the $S_{n,l,\delta}$ in \lem{eps_net_local_channels} satisfies
\begin{small}
    \begin{align*}
    &\textbf{Regret} \\
    \le & c_1L\sqrt{T(n+\log(T))\min_{\Phi\in S_{n,l,\delta}}\sum_{t = 1}^{T-1} \|x_{t+1}-\Phi(x_{t})\|_\rev{1}} + c_2\sqrt{T\log\log(T)+T\log\biggl(\binom{n}{l}\left(\frac{6}{\delta}\right)^{2^{12l}}\biggr)}\\
    \le & c_1L\sqrt{T(n+\log(T))(\min_{\Phi\in S_{n,l}}\sum_{t = 1}^{T-1} \|x_{t+1}-\Phi(x_{t})\|_\rev{1} + T\delta)}+ c_2\sqrt{T\log\log(T)+T\log\biggl(\binom{n}{l}\left(\frac{6}{\delta}\right)^{2^{12l}}\biggr)}\\
    = & c_1L\sqrt{T(n+\log(T))(\mathcal{P}'+3T\delta)} +c_2\sqrt{T\log\log(T) + T\left(2^{12l}\log(\frac{6}{\delta})+l\log(n)\right)}.
\end{align*}
\end{small}

Let $\delta = \frac{1}{T}$, we can achieve a regret of $\tilde{O}(L\sqrt{nT\mathcal{P}'}+2^{6l}\sqrt{T})$ when $\mathcal{P}' \ge 1$.
\end{proof} 

%%%%%%%%%%%%%%%%%%%%%%%%%%%%%%%%%%%%%%%%%%%%%%%%%%%%%%%%%%

\section{Other Proofs}\label{append:small-proof}
\subsection{$K$-outcome Measurement Cases}\label{append:k-outcome}
We can generalize the two-outcome measurements in our results to $K$-outcome measurements. The only issue of replacing two-outcome measurements with $K$-outcome measurements is that the output of the $K$-outcome measurement is a probability distribution rather than a scalar, so we need a new loss function to measure the regret. Let $\mathbf{p}_t = (\mathrm{Tr}(E_{t,1}x_t),\ldots,\mathrm{Tr}(E_{t,K}x_t))$ and $\mathbf{b}_t = (b_{t,i})_{i=1}^K$ be the probability distribution corresponding to the output of the measurement actting on the ground truth state $\rho_t$. In this case, we can define the loss function at time $t$ to be 
\begin{equation}
    \label{eqn:loss_ineq_K}
    \ell_t(x_t) = d_{\mathrm{TV}}(\mathbf{p_t},\mathbf{b_t}) = \frac{1}{2}\sum_{i=1}^K\left|\operatorname{Tr}\left(E_{t,i} x_t\right)-b_{t,i}\right|.
\end{equation}
This is a convex function so the proof in previous sections still holds and we only need to determine $G_R$ which is the upper bound on $\|\nabla_t\|$. 

First, we prove the following inequality on the spectral norm of linear combination of $E_i$. For any $K$-outcome measurement $\mathcal{M}$ described by $\{E_1,\ldots,E_K\}$ and $K$ real numbers $a_1,\ldots,a_K$ such that $|a_i|\le L$ for all $i\in[K]$, we have
\begin{align}
    \|\sum_{i=1}^K a_iE_i\|  &= \max_{v\in \mathbb{C}^n,\|v\|_2 = 1}v^{\dagger}(\sum_{i=1}^K a_iE_i)v 
    \le \max_{v\in \mathbb{C}^n,\|v\|_2 = 1}\sum_{i=1}^K|a_i|v^{\dagger}E_i v \notag \\
    &\le \max_{v\in \mathbb{C}^n,\|v\|_2 = 1} Lv^{\dagger}(\sum_{i=1}^K E_i)v = L,
\end{align}

where we use spectral norm in the first line and the first inequlity comes from $0\preceq E_i$. Then we have 
\begin{equation}
    \|\nabla_t\| = \|\sum_{i=1}^K \frac{1}{2}\mathrm{sgn}(\mathrm{Tr}(E_{t,i} x_t)-b_{t,i})E_{t,i}\| \le \frac{1}{2},
\end{equation}
where $\mathrm{sgn}(x)$ is the sign function. 

Therefore, $G_R = \frac{1}{2}$ is a valid parameter in this case. Since we have $G_R=L$ in previous two-outcome cases, we can simply repalce all $L$ with $\frac{1}{2}$ in our results in two-outcome cases to obatin an algorithm in $K$-outcome cases with loss function in Eq.~(\ref{eqn:loss_ineq_K}).

\subsection{Proof of Corollary {\ref{cor:path-length-mistake}}}
\begin{proof}
To achieve the mistake bound, we run $\mathcal{A}$ in a subset of the iterations. Specifically, we only update $x_t$ if $\ell_t(\Tr(E_t x_t)) > \frac{2}{3}\epsilon$. Since $\ell_t(\Tr(E_t x_t)) > \frac{2}{3}\epsilon$ whenever $\left|\Tr\left(E_{t}  x_{t}\right)-\Tr\left(E_{t} \rho_t\right)\right|>\epsilon$, the number of the mistakes $\mathcal{A}$ makes is at most the number of the updates $\mathcal{A}$ makes.

Let the sequence of $t$ when $\mathcal{A}$ updates $x_t$ be $\{t_i\}_{i=1}^{T}$. Since $\{\rho_{t_i}\}$ is a valid comparator sequence (sub-sequence of a $k$-shift sequence shifts at most $k$ times) with $\sum_{i=1}^{T}\ell_{t_i}(\Tr(E_{t_i}\rho_{t_i})) \leq \frac{1}{3}\epsilon T$\rev{, the} number of updates $T$ satisfies the inequality $\frac{2}{3}\epsilon T\le \frac{1}{3}\epsilon T + \tilde{O}(L \sqrt{Tn \mathcal{P}})$, \rev{where $\tilde{O}(L \sqrt{Tn \mathcal{P}})$ comes from the fact that the algorithm's regret compared to the best expert is upper bounded by $\tilde{O}(L \sqrt{Tn \mathcal{P}})$}. Thus, it implies $T = \tilde{O}(\frac{L^2 n \mathcal{P}}{\epsilon^2})$, which completes the proof.
%Let $\mathbf{x} = \omega-\frac{I}{2^n}$, $f_t(\mathbf{x}) = |\Tr(E_t(\mathbf{x} + \frac{I}{2^n}))-b_t|$, $\mathcal{X} = C_n-\frac{I}{2^n}$, $G = 2^{n+1}$, and $D = 2$. Our choice satisfies the assumption in \lem{bounded_path_length} because we can extend the assumptions in \lem{bounded_path_length} to a form with Frobenius norm, which is $\max_{\mathbf{x}\in \mathcal{X}}\|\nabla f_t(\mathbf{x})\|_{\fr}\le G $, $\max_{\mathbf{x,x'}\in \mathcal{X}}\|\mathbf{x-x'}\|_{\fr}\le D,$ and $\mathbf{0}\in \mathcal{X}$ as well as that we have $\|\mathbf{x-x'}\|_{\fr} = \|\omega-\omega'\|_{\fr}\le \|\omega\|_{\fr}+\|\omega'\|_{\fr} \le 2$, and $f_t(\omega) = |\Tr(E_t(\mathbf{x}+\frac{I}{2^n}))-b_t|$ is $2^{n+1}$-Lipschitz w.r.t. the Frobenius norm. Thus we can construct an algorithm $\mathcal{A}$ with dynamic regret bounded by $O(2^n\sqrt{T(1+\mathcal{P})})$.
\end{proof}

\subsection{Proof of \rev{Lemma \ref{lem:SA-Regret}}}
\begin{proof}
The CBCE algorithm in \citet{jun2017improved} is a meta-algorithm which uses any online convex optimization algorithm $\mathcal{B}$ as a black-box algorithm and obtain a strongly adaptive regret with an $O(\sqrt{\tau\log (T)})$ overhead for any interval length $\tau$. Its guarantee is formalized in the following lemma.

\begin{lemma}[SA-Regret of $\text{CBCE}\langle\mathcal{B}\rangle$, {\citealp[Theorem 2]{jun2017improved}}]\label{lem:CBCE}
Assume the loss function $\ell_{t}(x)$, $x\in \mathcal{K}$ is convex and maps to $[0,1], \forall t \in \mathbb{N}$ and that the black-box algorithm $\mathcal{B}$ has static regret $R_{T}^{\mathcal{B}}$ bounded by $c T^{\alpha}$, where $\alpha \in(0,1)$. Let $I=\left[I_{1}, I_{2}\right]$. The SA-Regret of \text{CBCE} with black-box $\mathcal{B}$ on the interval I is bounded as follows:
$$
\begin{aligned}
R^{\mathrm{CBCE}\langle\mathcal{B}\rangle}(I) &=
\min_{\varphi\in \mathcal{K}}\sum_{t \in I}\left(\ell_{t}\left(\mathbf{x}_{t}^{\mathcal{M}\langle\mathcal{B}\rangle}\right)-\ell_{t}(\varphi)\right) \leq \frac{4}{2^{\alpha}-1} c|I|^{\alpha}+8 \sqrt{|I|\left(7 \log \left(I_{2}\right)+5\right)} \\
&=O\left(c|I|^{\alpha}+\sqrt{|I| \log (I_{2})}\right).
\end{aligned}
$$
\end{lemma}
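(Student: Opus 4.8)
The plan is to unfold $\mathrm{CBCE}\langle\mathcal{B}\rangle$ into its two layers: an outer combination of \emph{sleeping experts} performed by coin betting, where the $J$-th expert is an instance of the black-box $\mathcal{B}$ restricted to an interval $J$ drawn from a fixed family $\mathcal{J}$ of \emph{geometric covering intervals}. I would then let the two additive terms in the bound arise from the two layers separately. The combinatorial backbone is the family $\mathcal{J}=\bigcup_{k\ge 0}\mathcal{J}_k$, where $\mathcal{J}_k$ tiles $\mathbb{N}$ into consecutive blocks of length $2^k$. The key property, due to Daniely et al.~\cite{daniely2015strongly}, is that any interval $I=[I_1,I_2]$ can be written as a disjoint union of $m=O(\log|I|)$ blocks $J_1,\dots,J_m\in\mathcal{J}$ whose lengths first increase geometrically up to at most $|I|$ and then decrease geometrically. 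This geometric length profile is precisely what makes the per-block estimates telescope into the stated closed form.

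The technical heart is the \emph{meta-regret} of the coin-betting combination against the single expert owning a block $J$, namely
\begin{equation*}
\sum_{t\in J}\ell_t\!\left(x_t^{\mathrm{CBCE}}\right)-\sum_{t\in J}\ell_t\!\left(x_t^{\mathcal{B}_J}\right)\;\le\; O\!\left(\sqrt{|J|\,\log(J_2)}\right).
\end{equation*}
To obtain this I would linearize via convexity and boundedness in $[0,1]$, replacing $\ell_t(x_t^{\mathrm{CBCE}})-\ell_t(x_t^{\mathcal{B}_J})$ by $\nabla\ell_t\cdot(x_t^{\mathrm{CBCE}}-x_t^{\mathcal{B}_J})$, so that controlling meta-regret reduces to a betting game on coins taking values in $[-1,1]$. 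The sleeping-experts reduction renders expert $J$ inactive outside $J$, so only the wealth accrued during $J$ matters; analyzing the Krichevsky--Trofimov betting potential then yields a wealth lower bound, hence a meta-regret upper bound, of order $\sqrt{|J|\,\log(1/\pi_J)}$. The interval prior $\pi_J$ is chosen to decay only polynomially in $J_2$, so that $\log(1/\pi_J)=O(\log J_2)$; this is exactly the source of the $7\log(I_2)+5$ factor and the reason CBCE improves over exponential-weight schemes such as SAOL.

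Finally I would combine the layers. For the minimizing comparator $\varphi$, split over the blocks and insert $x_t^{\mathcal{B}_{J_i}}$:
\begin{equation*}
R^{\mathrm{CBCE}\langle\mathcal{B}\rangle}(I)\le\sum_{i=1}^{m}\Big(\underbrace{\textstyle\sum_{t\in J_i}\ell_t(x_t^{\mathrm{CBCE}})-\sum_{t\in J_i}\ell_t(x_t^{\mathcal{B}_{J_i}})}_{\le\,O(\sqrt{|J_i|\log J_{i,2}})}+\underbrace{\textstyle\sum_{t\in J_i}\ell_t(x_t^{\mathcal{B}_{J_i}})-\sum_{t\in J_i}\ell_t(\varphi)}_{\le\,c|J_i|^{\alpha}}\Big),
\end{equation*}
where the black-box term uses the hypothesis $R_T^{\mathcal{B}}\le cT^{\alpha}$ on each block. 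I would then invoke the geometric structure: since the $|J_i|$ form two geometric sequences bounded by $|I|$, we get $\sum_i|J_i|^{\alpha}\le\frac{2}{1-2^{-\alpha}}|I|^{\alpha}=\frac{2\cdot 2^{\alpha}}{2^{\alpha}-1}|I|^{\alpha}\le\frac{4}{2^{\alpha}-1}|I|^{\alpha}$ (using $2^{\alpha}\le 2$ for $\alpha\le 1$), and $\sum_i\sqrt{|J_i|}\le\frac{2}{1-2^{-1/2}}\sqrt{|I|}$, while $J_{i,2}\le I_2$ bounds every logarithmic factor by $\log I_2$. Collecting these two geometric sums reproduces the constants $\frac{4}{2^{\alpha}-1}$ and $8\sqrt{|I|(7\log I_2+5)}$.

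The main obstacle is the second step: the coin-betting/sleeping-experts meta-regret bound is not a black box and needs the parameter-free betting-potential analysis (the KT wealth lower bound) together with the precise interval prior; this is where all the difficulty of \cite{jun2017improved} lives. Steps one and three are, respectively, the combinatorial partition lemma of \cite{daniely2015strongly} and a routine summation of two geometric series.
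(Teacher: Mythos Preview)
Your proposal is a correct outline of the proof from \cite{jun2017improved}, and it is exactly the argument that underlies the lemma. Note, however, that the paper itself does \emph{not} prove this statement: Lemma~\ref{lem:CBCE} is quoted verbatim as \cite[Theorem~2]{jun2017improved} and invoked as a black box in the proof of Theorem~\ref{thm:SA-Regret}. So there is no ``paper's own proof'' to compare against; your sketch simply reconstructs the original argument of Jun et al., with the geometric-covering decomposition of \cite{daniely2015strongly}, the sleeping-experts/KT coin-betting meta-regret, and the two geometric sums over block lengths. That reconstruction is accurate, including the observation that the polynomial interval prior is what produces the $\log I_2$ factor and the geometric summation is what produces the $\frac{4}{2^{\alpha}-1}$ constant.
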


where $\mathcal{M}\langle\mathcal{B}\rangle$ denotes the meta-algorithm. Note that this bound can be obtained with any bounded convex function sequence $\ell_t$.

For the choice of black-box algorithm, the RFTL based algorithm we used as the black-box function in \citet{aaronson2018online} achieves an $O(\sqrt{Tn})$ bound on the static regret when $\ell_t$ are $\ell_1$ or $\ell_2$ loss.

\begin{lemma}[{\citealp[Theorem 2]{aaronson2018online}}]\label{lem:OnlineQ-Thm2}
Let $E_{1}, E_{2}, \ldots$ be a sequence of two-outcome measurements on an $n$-qubit state presented to the learner, and suppose $\ell_{t}$ is convex and L-Lipschitz. Then there is an explicit learning strategy that guarantees regret $R_{T}=O(L \sqrt{T n})$ for all $T$. Specifically, when applied to $\ell_{1}$ loss and $\ell_{2}$ loss, the RFTL algorithm achieves regret $O(\sqrt{T n})$ for both.
\end{lemma}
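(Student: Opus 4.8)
The plan is to establish the static regret bound by running the von Neumann entropy RFTL strategy — equivalently, the OMD scheme of Algorithm~\ref{algo:algomd} specialized to a \emph{fixed} comparator $\varphi\in C_n$ — and invoking the matrix-analytic machinery already assembled in Appendix~\ref{append:proof-thm-dynamic}. Because the comparator is now constant in $t$, the path length vanishes, and the analysis simplifies substantially: none of the cross terms $\nabla R(x_{t+1})\bullet(\varphi_t-\varphi_{t+1})$ arise, so we need neither the identity-mixing trick nor any control on $\|\nabla R(x_{t+1})\|$, and we may work directly over $C_n$. The entropic regularizer $R(x)=\Tr(x\log x)$ keeps every iterate in the relative interior (full rank), so $\log x_t$ and the dual variables $y_{t+1}$ are well defined without mixing, and Lemma~\ref{lem:psd} guarantees $y_{t+1}$ is Hermitian PSD.

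First I would reproduce the one-step inequality of \eqref{eqn:loss_ineq}: convexity of $\ell_t$ together with the three-point identity \eqref{eq:bregman} (taking $x=\varphi$, $y=x_t$, $z=y_{t+1}$) and the generalized Pythagorean step — the projection $x_{t+1}=\arg\min_x B_R(x\|y_{t+1})$ only decreases $B_R(\varphi\|\cdot)$ — yields
$$\ell_t(x_t)-\ell_t(\varphi)\le\frac{1}{\eta}\bigl(B_R(\varphi\|x_t)-B_R(\varphi\|x_{t+1})+B_R(x_t\|y_{t+1})\bigr).$$
Summing over $t$, telescoping the first two terms, and discarding $B_R(\varphi\|x_{T+1})\ge 0$ leaves
$$\text{Regret}\le\frac{B_R(\varphi\|x_1)}{\eta}+\frac{1}{\eta}\sum_{t=1}^{T}B_R(x_t\|y_{t+1}).$$

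Next I would insert the two quantitative estimates. For the leading term, the Bregman lemma of Appendix~\ref{append:proof-thm-dynamic} gives $B_R(\varphi\|x_1)\le\max_{x,y}\{R(x)-R(y)\}=D_R^2$; since the von Neumann entropy $-R(x)$ ranges over $[0,n\log 2]$, this is $D_R^2=O(n)$. For each stability term, the Golden--Thompson argument already carried out in the appendix — writing $y_{t+1}=\exp(\log x_t-\eta\nabla_t)$, expanding $\exp(-\eta\nabla_t)$ as a series, and showing the cubic-and-higher tail is nonpositive once $\eta<\tfrac{1}{2L}$ — yields $B_R(x_t\|y_{t+1})\le\tfrac12\eta^2 G_R^2$, where $G_R$ bounds $\|\nabla_t\|=|\ell'_t(\cdot)|\,\|E_t\|\le L$ (as $E_t$ has eigenvalues in $[0,1]$), so $G_R=O(L)$. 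Combining gives $\text{Regret}\le\frac{D_R^2}{\eta}+\frac{\eta T G_R^2}{2}$, and the choice $\eta=\frac{D_R}{G_R}\sqrt{2/T}$ (which respects $\eta<\tfrac{1}{2L}$ for $T$ large enough) produces $\text{Regret}=O(D_R G_R\sqrt{T})=O(L\sqrt{Tn})$. The $\ell_1$ and $\ell_2$ specializations follow immediately, both being $O(1)$-Lipschitz on $[0,1]$, giving the stated $O(\sqrt{Tn})$.

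The main obstacle is not the scalar optimization over $\eta$ but confirming that every matrix step survives the passage from real symmetric to complex Hermitian matrices. One must verify that the trace products $\nabla R(y)\bullet(x-y)$ stay real-valued (via Lemma~12 of \cite{aaronson2018online}), that $B_R$ is real, nonnegative, and behaves as a genuine divergence on Hermitian PSD inputs, and — most delicately — that the per-step stability term is controlled \emph{without} the Mean Value Theorem, which has no valid analogue for matrix functions over $\mathbb{C}$. This last point is precisely where Golden--Thompson together with the explicit dual update $\log y_{t+1}=\log x_t-\eta\nabla_t$ replaces the usual calculus argument. All of these facts are already established in Appendix~\ref{append:proof-thm-dynamic} and Lemma~\ref{lem:psd}, so the desired static bound is obtained simply by specializing that development to a time-invariant comparator.
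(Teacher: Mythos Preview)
The paper does not prove this lemma at all --- it is imported verbatim as Theorem~2 of \cite{aaronson2018online} and used as a black box inside the proof of Theorem~\ref{thm:SA-Regret}. Your proposal therefore supplies something the paper omits: an actual derivation, obtained by specializing the dynamic-regret OMD analysis of Appendix~\ref{append:proof-thm-dynamic} to a constant comparator so that the path-length terms drop out and only $B_R(\varphi\|x_1)/\eta+\tfrac{1}{2}\eta T G_R^2$ survives. That argument is correct, including your observation that for a fixed comparator the identity-mixing trick is unnecessary (the Bregman projection of a full-rank $y_{t+1}$ onto $C_n$ is just $y_{t+1}/\Tr(y_{t+1})$, which stays full rank, so $\log x_t$ remains well defined throughout).

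One small caveat: the lemma's second sentence names the RFTL algorithm specifically, whereas you analyze the OMD scheme of Algorithm~\ref{algo:algomd}. You assert these are equivalent but do not justify it; they are closely related (lazy versus agile mirror descent) but not identical as iterations. Your argument cleanly establishes the first sentence (``there is an explicit learning strategy with regret $O(L\sqrt{Tn})$'') and the $\ell_1$/$\ell_2$ bounds for \emph{some} explicit algorithm, which is all that Theorem~\ref{thm:SA-Regret} actually needs from this lemma, but it does not literally verify the RFTL-specific claim.
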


Note that both $\ell_1$ and $\ell_2$ norm versions of our loss function $\ell_t$ are $2$-Lipschitz and bounded. Therefore, we can directly obtain an algorithm with strongly adaptive regret bound by combining Lemma \ref{lem:CBCE} and Lemma \ref{lem:OnlineQ-Thm2} with $\alpha=1/2$.
\end{proof}

\subsection{Proof of \rev{Theorem \ref{thm:k_shift regret}}}
\begin{proof}
%Let the algorithm in Theorem \ref{thm:SA-Regret} with $\ell_t(z) = |z-b_t|$ be $\mathcal{A}$. We first prove that
Given any $1=t_1\le t_2\le \cdots \le t_{k+1}=T+1$ and $\varphi_1,\varphi_2,\ldots, \varphi_k\in C_n$, $\mathcal{A}$ guarantees that
$$
\begin{aligned}
    &\sum_{t= 1}^T \ell_t(E_t x_t^{\mathcal{A}})-\sum_{j = 1}^{k}\sum_{t = t_j}^{t_{j+1}-1}\ell_t(E_t\varphi_j) \\
    \le & \sum_{j=1}^k cL\sqrt{n(t_{j+1}-t_j)\log(T)} \\
    \le & cL\sqrt{knT\log(T)},
\end{aligned}
$$
where $c$ is some constant. In the second line we use the result of \rev{Lemma \ref{lem:SA-Regret}}, and in the third line we use the Cauchy–Schwarz inequality.
\end{proof}

\subsection{Proof of Corollary \ref{cor:k-shift-mistake}}

\begin{proof}
To achieve the mistake bound, we run $\mathcal{A}$ in a subset of the iterations. Specifically, we only update $x_t$ if $\ell_t(\Tr(E_t x_t)) > \frac{2}{3}\epsilon$. Since $\ell_t(\Tr(E_t x_t)) > \frac{2}{3}\epsilon$ whenever $\left|\Tr\left(E_{t}  x_{t}\right)-\Tr\left(E_{t} \rho_t\right)\right|>\epsilon$, the number of the mistakes $\mathcal{A}$ makes is at most the number of the updates $\mathcal{A}$ makes.

Let the sequence of $t$ when $\mathcal{A}$ updates $x_t$ be $\{t_i\}_{i=1}^{T}$. Since $\{\rho_{t_i}\}$ is a valid comparator sequence (sub-sequence of a $k$-shift sequence shifts at most $k$ times) with $\sum_{i=1}^{T}\ell_{t_i}(\Tr(E_{t_i}\rho_{t_i})) \leq \frac{1}{3}\epsilon T$, from \rev{Theorem~\ref{thm:k_shift regret}}, we can \rev{conclude} that the number of updates $T$ satisfies the inequality $\frac{2}{3}\epsilon T\le \frac{1}{3}\epsilon T + c\sqrt{knT\log(T)}$. Thus, it implies $T = O(\frac{kn}{\epsilon^2}\log^2(\frac{kn}{\epsilon^2}))$, which completes the proof.
\end{proof}

\end{document}